\newif\ifbullets
\newif\ifcamera
\newcommand{\Nystrom}{Nystr\"{o}m }
\newcommand{\NystromNS}{Nystr\"{o}m} %
\newcommand{\NystromCaps}{NYSTR\"{O}M }
\newcommand{\by}{\bar{y}}
\newcommand{\bS}{\bar{S}}
\newcommand{\bT}{\bar{T}}
\newcommand{\hx}{\hat{x}}
\newcommand{\hK}{\hat{K}}
\newcommand{\eps}{\epsilon}
\newcommand{\tK}{\tilde{K}}
\newcommand{\tZ}{\tilde{Z}}
\newcommand{\tz}{\tilde{z}}
\newcommand{\tsigma}{\tilde{\sigma}}
\newcommand{\hcR}{\widehat{\cR}}
\newcommand{\id}{I}
\newcommand{\sq}{\sqrt{2}}
\DeclareMathOperator*{\argmax}{arg\,max}
\DeclareMathOperator*{\rank}{rank}
\newcommand{\eg}{e.g.}
\def\ddefloop#1{\ifx\ddefloop#1\else\ddef{#1}\expandafter\ddefloop\fi}
\def\ddef#1{\expandafter\def\csname bb#1\endcsname{\ensuremath{\mathbb{#1}}}}
\def\ddef#1{\expandafter\def\csname c#1\endcsname{\ensuremath{\mathcal{#1}}}}
\newcommand\norm[1]{\|#1\|}
\newcommand\Dotp[1]{\left\langle#1\right\rangle}
\newcommand{\RR}{\ensuremath{\bbR}} %
\newcommand{\E}{\mathbb{E}} %
\newcommand{\Prob}{\mathbb{P}}
\newcommand{\ProbOpr}[1]{\mathbb{#1}}
\newcommand{\expect}[2]{%
\ifthenelse{\equal{#2}{}}{\ProbOpr{E}_{#1}}
{\ifthenelse{\equal{#1}{}}{\ProbOpr{E}\left[#2\right]}{\ProbOpr{E}_{#1}\left[#2\right]}}} %
\newcommand{\var}[2]{%
\ifthenelse{\equal{#2}{}}{\ProbOpr{VAR}_{#1}}
{\ifthenelse{\equal{#1}{}}{\ProbOpr{VAR}\left[#2\right]}{\ProbOpr{VAR}_{#1}\left[#2\right]}}}
\newtheorem{theorem}{Theorem}
\newtheorem{definition}{Definition}
\newtheorem{lemma}[theorem]{Lemma}
\newtheorem{proposition}[theorem]{Proposition}
\newtheorem{corollary}{Corollary}[theorem]
\newenvironment{customthm}[1]
{\innercustomthm}
{\endinnercustomthm}
\newenvironment{customprop}[1]
{\innercustomprop}
{\endinnercustomprop}
\newcommand{\defeq}{:=}
\providecommand{\tr}{\mathop{\rm tr}}
\newcommand\numberthis{\addtocounter{equation}{1}\tag{\theequation}}
\newcommand{\vsp}{\vspace{-0.06in}}
\newcommand{\vfigsp}{\vspace{-0.07in}}
\newcommand{\vflistsp}{}
\newcommand{\vsp}{}
\newcommand{\vfigsp}{}
\newcommand{\vflistsp}{}
\newcommand\blfootnote[1]{%
	\begingroup
	\renewcommand\thefootnote{}\footnote{#1}%
	\addtocounter{footnote}{-1}%
	\endgroup
}
\begin{document}

\twocolumn[
	\aistatstitle{Low-Precision Random Fourier Features for Memory-Constrained Kernel Approximation}
	\aistatsauthor{Jian Zhang$^*$ \And Avner May$^*$ \And Tri Dao \And Christopher R\'e}
	\aistatsaddress{
		Stanford University
	}
]\blfootnote{$^*$Equal contribution.\vsp}

\vsp\vsp
\begin{abstract}
\vsp\vsp

We investigate how to train kernel approximation methods that generalize well under a memory budget.
Building on recent theoretical work, we define a measure of kernel approximation error which we find to be more predictive of the empirical generalization performance of kernel approximation methods than conventional metrics.
An important consequence of this definition is that a kernel approximation matrix must be high rank to attain close approximation.
Because storing a high-rank approximation is memory intensive,
we propose using a \emph{low-precision} quantization of random Fourier features (LP-RFFs) to build a high-rank approximation under a memory budget.
Theoretically, we show quantization has a negligible effect on generalization performance in important settings.
Empirically, we demonstrate across four benchmark datasets that LP-RFFs can match the performance of full-precision RFFs and the \Nystrom method, with 3x-10x and 50x-460x less memory, respectively.
\vsp

\end{abstract}
\section{\uppercase{Introduction}}
\label{sec:intro}
\vsp

\ifbullets
\begin{itemize}
\item What is the problem?
\begin{itemize}
	\item \textbf{Context}: Kernel methods have lots of nice properties (elaborate here), and have recently shown exciting empirical performance on large-scale tasks using approximation methods.  However, getting good generalization performance for kernel approximation methods requires lots of features.
	\item \textbf{Problem}: In the large-scale setting the memory required to store the features can become the bottleneck.  Thus, our goal is to get the strongest possible generalization performance for kernel approximation methods, under a fixed memory budget, with mini-batch based training.
\end{itemize}
\item Why is it interesting/important?
	\begin{itemize}
	\item Memory is important resource.
	\item People normally compare kernel approximation methods as a function of the number of features, in spite of the fact that some methods are much more memory intensive than others.
	\item Considering memory budget switches which method is better between Nystrom and RFF.
	\end{itemize}
\item Why is it hard?  Why hasn't it been solved before? (Or, what's wrong with previous proposed solutions? How does mine differ?)
	\begin{itemize}
	\item Interestingly, this above-mentioned Nystrom/RFF swap cannot be explained using kernel approximation error: Nystrom can attain better approximation error but worse generalization as a function of memory. \textbf{SPOTLIGHT FIGURE: Perf vs. frob error, perf vs. $1/(1-\Delta_1)$}
	\item In order to guide the design of new methods to attain improved generalization performance under memory budget, we must first understand performance vs. memory of existing methods.
	\end{itemize}
\item What are the key components of my approach and results?
	\begin{itemize}
	\item We extend a recently introduced notion of spectral distance in order to explain the difference in performance between Nystrom and RFF.
	\item A consequence of our definition is that an approximation must have high-rank in order to have low spectral distance.
	\item We thus propose low-precision RFFs to attain a high-rank approximation under memory budget.
	\item We show empirically and theoretically that LP-RFFs attain better performance under memory budget.
	\end{itemize}
\end{itemize}
\fi

Kernel methods are a powerful family of machine learning methods.
A key technique for scaling kernel methods is to construct feature representations whose inner products approximate the kernel function, and then learn a linear model with these features;
important examples of this technique include the \Nystrom method \citep{nystrom} and random Fourier features (RFFs) \citep{rahimi07random}.
Unfortunately, a large number of features are typically needed for attaining strong generalization performance with these methods on big datasets \citep{rahimi08kitchen,block16,may2017}. 
Thus, the memory required to store these features can become the training bottleneck for kernel approximation models.
In this paper we work to alleviate this memory bottleneck by optimizing the generalization performance for these methods under a fixed memory budget.

To gain insight into how to design more memory-efficient kernel approximation methods, we first investigate the generalization performance vs.\ memory utilization of \Nystrom and RFFs.
While prior work \citep{nysvsrff12} has shown that the \Nystrom method generalizes better than RFFs under the the same number of features, we demonstrate that the opposite is true under a memory budget.
Strikingly, we observe that $\num[group-separator={,}]{50000}$ standard RFFs can achieve the same heldout accuracy as $\num[group-separator={,}]{20000}$ \Nystrom features with 10x less memory on the TIMIT classification task.
Furthermore, this cannot be easily explained by the Frobenius or spectral norms of the kernel approximation error matrices of these methods,
even though these norms are the most common metrics for evaluating kernel approximation methods \citep{gittens16,qmc,sutherland15,yu16,dao17}; 
the above \Nystrom features attain 1.7x smaller Frobenius error and 17x smaller spectral error compared to the RFFs.
This observation suggests the need for a more refined measure of kernel approximation error---one which better aligns with generalization performance, 
and can thus better guide the design of new approximation methods.

\begin{table*}
	\vfigsp
	\vfigsp
	\caption{
		\ifcamera
		Memory utilization for kernel approximation methods. We consider data $x\in \RR^d$, kernel features $z(x)\in \RR^m$, mini-batch size $s$, \# of classes $c$ (for regression/binary classification $c=1$). We assume full-precision numbers are 32 bits. We measure a method's memory utilization as the sum of the three components in this table.
		\else
		The memory utilization for different kernel approximation methods. We consider data $x\in \RR^d$, kernel features $z(x)\in \RR^m$, mini-batch size $s$, number of classes $c$ (except for regression/binary classification, where $c=1$), and $b$ as the precision of $z(x)$. We assume full-precision numbers are stored in 32-bit floating point format. In this paper we measure a method's memory utilization as the sum of the three components in this table.
		\fi
	}
	\label{table:mem-usage}
	\centering
	\begin{tabular}{llll}
		\toprule
		Approximation Method  & Feature generation & Feature mini-batch & Model parameters \\
		\midrule
		\Nystrom                    & $32(md + m^2)$ & $32ms$  & $32mc$ \\
		RFFs                        & $32md$         & $32ms$  & $32mc$ \\
		Circulant RFFs              & $32m$          & $32ms$  & $32mc$ \\
		Low-precision RFFs, $b$ bits  (ours)   & $32m$          & $bms$   & $32mc$ \\
		\bottomrule
	\end{tabular}
\end{table*}

Building on recent theoretical work \citep{avron17}, we define a measure of approximation error which we find to be much more predictive of empirical generalization performance than the conventional metrics.
In particular, we extend \citeauthor{avron17}'s definition of \emph{$\Delta$-spectral approximation} to our definition of 
\emph{$(\Delta_1,\Delta_2)$-spectral approximation} by decoupling the two roles played by $\Delta$ in the original definition.\footnote{The original definition uses the same scalar $\Delta$ to upper and lower bound the approximate kernel matrix in terms of the exact kernel matrix in the semidefinite order.}
This decoupling reveals that $\Delta_1$ and $\Delta_2$ impact generalization differently, and can together much better
explain the relative generalization performance of \Nystrom and RFFs than the original $\Delta$, or the Frobenius or spectral errors.
This $(\Delta_1,\Delta_2)$ definition has an important consequence---in order for an approximate kernel matrix to be close to the exact kernel matrix, it is necessary for it to be \textit{high rank}.

Motivated by the above connection between rank and generalization performance, we propose using \textit{low-precision random Fourier features} (LP-RFFs) to attain a high-rank approximation under a memory budget.
Specifically, we store each random Fourier feature in a low-precision fixed-point representation, thus achieving a higher-rank approximation with more features in the same amount of space.
Theoretically, we show that when the quantization noise is much smaller than the regularization parameter, using low precision has negligible effect on the number of features required for the approximate kernel matrix to be a $(\Delta_1,\Delta_2)$-spectral approximation of the exact kernel matrix.
Empirically, we demonstrate across four benchmark datasets (TIMIT, YearPred, CovType, Census) that in the mini-batch training setting, LP-RFFs can match the performance of full-precision RFFs (FP-RFFs) as well as the \Nystrom method, with 3x-10x and 50x-460x less memory, respectively.
These results suggest that LP-RFFs could be an important tool going forward for scaling kernel methods to larger and more challenging tasks.

The rest of this paper is organized as follows: 
In Section \ref{sec:nys_vs_rff} we compare the performance of the \Nystrom method and RFFs in terms of their training memory footprint.
In Section \ref{sec:refine} we present a more refined measure of kernel approximation error to explain the relative performance of \Nystrom and RFFs.
We introduce the LP-RFF method and corresponding analysis in Section \ref{sec:lprff}, and present LP-RFF experiments in Section \ref{sec:experiments}.
We review related work in Section \ref{sec:relwork}, and conclude in Section \ref{sec:conclusion}.
\vsp

\section{\uppercase{\NystromCaps vs.\ RFFs: An empirical comparison}}
\label{sec:nys_vs_rff}
\vsp

\ifbullets
\begin{itemize}
	\item \textbf{Claim \#1}: Generally, Nystrom performs better as function of \# features, RFF performs better as function of memory (large dataset: TIMIT).
	\begin{itemize}
		\item \textbf{Plot 1a}: Performance vs. \# features (FP-Nystrom vs. FP-RFF, TIMIT). [Done]
		\item \textbf{Plot 1b}: Performance vs. memory (FP-Nystrom vs. FP-RFF, TIMIT). [Done]
	\end{itemize}
	\item \textbf{Claim \# 2}: Frobenius norm does not align well with generalization performance  (large dataset: TIMIT).
	\begin{itemize}
		\item \textbf{Plot 2a}: Performance vs. Frobenius error (FP-Nystrom vs. FP-RFF, TIMIT). [Done]
	\end{itemize}
\end{itemize}

\fi

\begin{figure*}[t]
	\centering
	\begin{small}
		\vfigsp
		\begin{tabular}{@{\hskip -0.0in}c@{\hskip -0.0in}c@{\hskip -0.0in}c@{\hskip -0.0in}c@{\hskip -0.0in}}
			\includegraphics[width=0.245\linewidth]{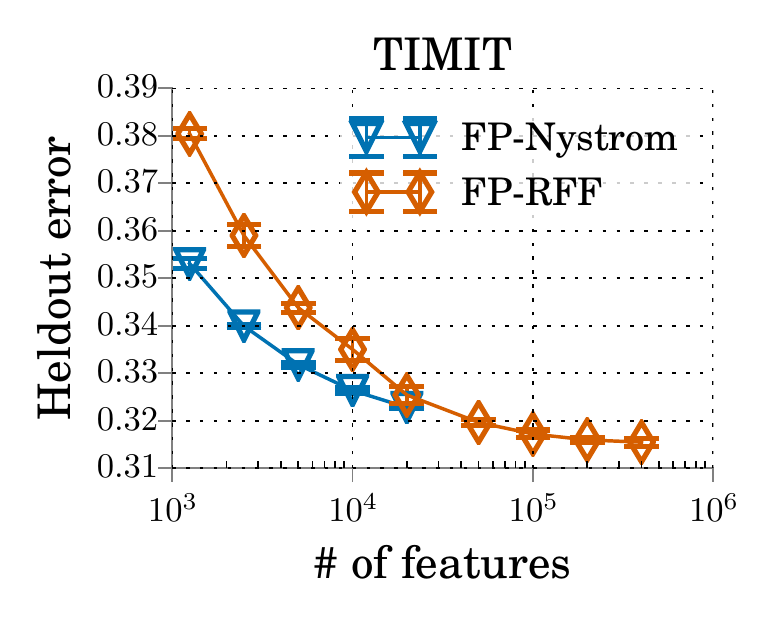} &
			\includegraphics[width=0.245\linewidth]{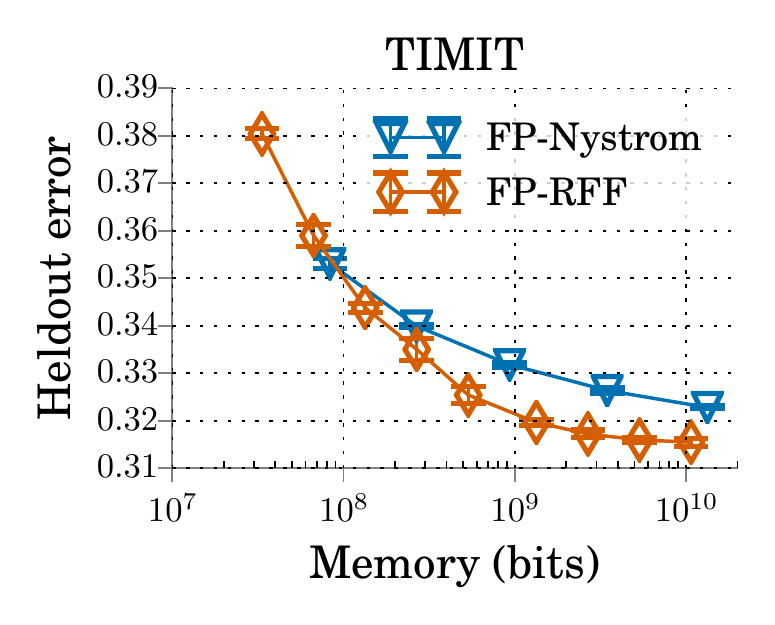} &        
			\includegraphics[width=0.245\linewidth]{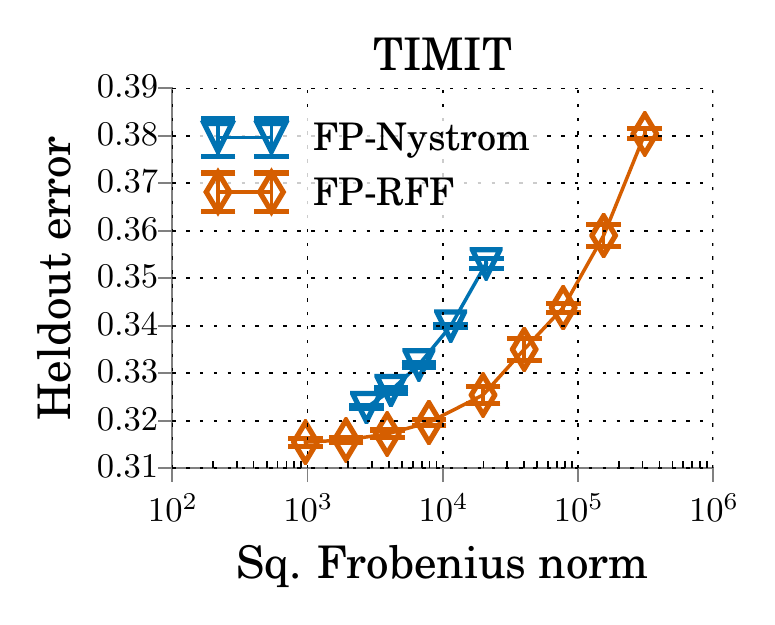} &
			\includegraphics[width=0.245\linewidth]{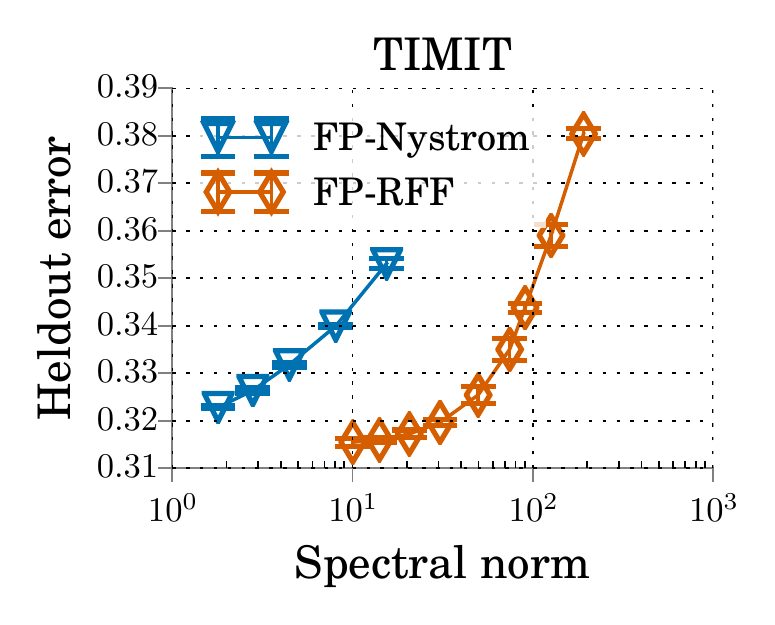}\\
			\;\;\;\;\;(a) & \;\;\;\;\;\;(b) & \;\;\;\;\;\;(c) & \;\;\;\;\;\;(d)
		\end{tabular}
	\end{small}
	\vfigsp
	\caption{
		Generalization performance of full-precision RFFs and \Nystrom with respect to the number of features and training memory footprint on TIMIT (a,b).
		\Nystrom performs better for a fixed number of features, while RFFs perform better under a memory budget.
		We also see that the generalization performance of these methods does not align well with the Frobenius or spectral norms of their respective kernel approximation error matrices (c,d).
		\ifcamera \else
		Specifically, we see many example of \Nystrom models that have lower Frobenius or spectral error, but higher heldout error, than certain RFF models.
		\fi
		For results on YearPred, CovType, and Census, see Appendix~\ref{app:nys_vs_rff_details}.}
	\label{fig:generalization_col}
\end{figure*}

To inform our design of memory-efficient kernel approximation methods, we first perform an empirical study of the generalization performance vs.\ memory utilization of \Nystrom and RFFs.
We begin by reviewing the memory utilization for these kernel approximation methods
in the mini-batch training setting; this is a standard setting for training large-scale
kernel approximation models \citep{huang14kernel,deepfried15,may2017}, and it is the setting we will be using to evaluate the different approximation methods (Sections~\ref{subsec:nys_vs_rff_exp}, \ref{subsec:full_run}).
We then show that RFFs outperform \Nystrom given the same training memory budget,
even though the opposite is true given a budget for the number of features \citep{nysvsrff12}.
Lastly, we demonstrate that the Frobenius and spectral norms of the kernel approximation error matrix align poorly with generalization performance,
suggesting the need for a more refined measure of approximation error for evaluating the quality of a kernel approximation method; we investigate this in Section~\ref{sec:refine}.

For background on RFFs and the \Nystrom method, and for a summary of our notation, see Appendix \ref{sec:background_appendix}.
\vsp

\subsection{Memory Utilization}
\label{subsec:memory_utils}
\vsp
The optimization setting we consider is mini-batch training over kernel approximation features.
To understand the training memory footprint, we present in Table \ref{table:mem-usage} the memory utilization of the different parts of the training pipeline.
The three components are:
\begin{enumerate}
\vflistsp
  \setlength\itemsep{0.25em}
	\item \textit{Feature generation}: Computing $m$ RFFs over data in $\RR^d$ requires a random projection matrix $W \in \RR^{m\times d}$.
	The \Nystrom method stores $m$ ``landmark points'' $\hx_i \in \RR^d$, and a projection matrix in $\RR^{m \times m}$.
	\ifcamera
	\item \textit{Feature mini-batch}: Kernel approximation features $z(x_i)\in\RR^m$ for all $x_i$ in a mini-batch are stored.\footnote{For simplicity,
		we ignore the memory occupied by the mini-batches of $d$-dim.\ inputs and $c$-dim.\ outputs, as generally the number of kernel approx.\ features $m \gg d,c$.
	}
	\else
	\item \textit{Feature mini-batch}: The approximate feature vectors $z(x_i)\in\RR^m$ for all $x_i$ in a mini-batch must be stored.\footnote{For simplicity,
		we ignore the memory occupied by the mini-batches of inputs (in $\RR^{d \times s}$) and outputs (in $\RR^{c\times s}$) of the model, where $d$ is the input dimension, $c$ is the number of classes, and $s$ is the mini-batch size. 
		These input and output vectors take negligible space relative to the approximate feature vectors as long as the number of approximation features $m \gg d,c$.
	\fi
	\item \textit{Model parameters}: For binary classification and regression, the linear model learned on the $z(x)$ features is a vector $\theta \in \RR^m$; for $c$-class classification, it is a matrix $\theta \in \RR^{m\times c}$.	
	\vflistsp
\end{enumerate}
In this work we focus on reducing the memory occupied by the mini-batches of features, which can occupy a significant fraction of the training memory.
Our work is thus orthogonal to existing work which has shown how to reduce the memory utilization of the feature generation \citep{fastfood,yu15} and the model parameters \citep{sainath2013low,structured15,halp18} (\eg, using structured matrices or low precision).
Throughout this paper, we measure the memory utilization of a kernel approximation method as the sum of the above three components.
\vsp
\vsp

\subsection{Empirical Comparison}
\label{subsec:nys_vs_rff_exp}
\vsp
We now compare the generalization performance of RFFs and the \Nystrom method in terms of their training memory footprint.
We demonstrate that RFFs can outperform the \Nystrom method given a memory budget, and show that the difference in performance between these methods cannot be explained by the Frobenius or spectral norms of their kernel approximation error matrices.

In experiments across four datasets (TIMIT, YearPred, CovType, Census \citep{timit,uci}), we use up to 20k \Nystrom features and 400k RFFs to approximate the Gaussian kernel;\footnote{We consider different ranges for the number of \Nystrom vs.\ RFF features because the memory footprint for training with 400k RFFs is similar to 20k \Nystrom features.}
we train the models using mini-batch stochastic gradient descent with early stopping, with a mini-batch size of 250.
We present results averaged from three random seeds, with error bars indicating standard deviations (for further experiment details, see Appendix~\ref{app:nys_vs_rff_details}).
In Figure~\ref{fig:generalization_col}(a) we observe that as a function of the number of kernel approximation features the \Nystrom method generally outperforms RFFs, though the gap narrows as $m$ approaches 20k.
However, we see in Figure~\ref{fig:generalization_col}(b) that RFFs attain better generalization performance as a function of memory.
Interestingly, the relative performance of these methods cannot simply be explained by the Frobenius or spectral norms of the kernel approximation error matrices;\footnote{We 
	consider the Frobenius and spectral norms of $K-\tK$, where $K$ and $\tK$ are the exact and approximate kernel matrices for 20k randomly sampled heldout points.
}
in Figure~\ref{fig:generalization_col}(c,d)
we see that there are many cases in which the RFFs attain better generalization performance,
in spite of having larger Frobenius or spectral approximation error.
This is a phenomenon we observe on other datasets as well (Appendix~\ref{app:nys_vs_rff_details}).
This suggests the need for a more refined measure of the approximation error of a kernel approximation method, which we discuss in the following section.

\vsp

\section{\uppercase{A refined measure of kernel approx. error}}
\label{sec:refine}
\vsp

\ifbullets

\begin{itemize}
	\item \textbf{Claim \# 3}: Frobenius and spectral norms do not align well with generalization performance, but $1/(1-\Delta_1)$ does (small dataset: Census).\footnote{We switch to small datasets in order to be able to measure spectral norm and $\Delta_1,\Delta_2$, which are very computationally expensive.} [Done]
	\begin{itemize}
		\item \textbf{Plot 3a}: Performance vs. Frobenius error (FP-Nystrom vs. FP-RFF, Census).
		\item \textbf{Plot 3b}: Performance vs. spectral error (FP-Nystrom vs. FP-RFF, Census).
		\item \textbf{Plot 3c}: Performance vs. $1/(1-\Delta_1)$ (FP-Nystrom vs. FP-RFF, Census).
	\end{itemize}
\end{itemize}

\subsection{Fixed design kernel ridge regression generalization bounds}
\begin{itemize}
	\item Define $(\Delta_1, \Delta_2)$-spectral approximation.
	\item Present generalization bound for fixed design kernel ridge regression, when $\tK+\lambda I$ is a $(\Delta_1, \Delta_2)$-spectral approximation of $K+\lambda I$.

\end{itemize}

\fi

To explain the important differences in performance between \Nystrom and RFFs, we define a more refined measure of kernel approximation error---\textit{$(\Delta_1,\Delta_2)$-spectral approximation}.
Our definition is an extension of \citeauthor{avron17}'s definition of $\Delta$-spectral approximation, in which we decouple the two roles played by $\Delta$ in the original definition.
This decoupling allows for a more fine-grained understanding of the factors influencing the generalization performance of kernel approximation methods, both theoretically and empirically.
Theoretically, we present a generalization bound for kernel approximation methods in terms of $(\Delta_1,\Delta_2)$ (Sec.~\ref{subsec:delta12}), and show that $\Delta_1$ and $\Delta_2$ influence the bound in different ways (Prop.~\ref{prop:alphabeta}).
Empirically, we show that $\Delta_1$ and $\Delta_2$ are more predictive of the \Nystrom vs.\ RFF performance than the $\Delta$ from the original definition, and the Frobenius and spectral norms of the kernel approximation error matrix (Sec.~\ref{subsec:nys_vs_rff_revisited}, Figure~\ref{fig:metrics_vs_perf}).
An important consequence of the $(\Delta_1,\Delta_2)$ definition is that attaining a small $\Delta_1$ requires a large number of features;
we leverage this insight to motivate our proposed method, low-precision random Fourier features, in Section~\ref{sec:lprff}.

\vsp

\subsection{$(\Delta_1,\Delta_2)$-spectral Approximation}
\label{subsec:delta12}
\vsp
We begin by reviewing what it means for a matrix $A$ to be a $\Delta$-spectral approximation of a matrix $B$ \citep{avron17}.
We then extend this definition to $(\Delta_1, \Delta_2)$-spectral approximation, and bound the generalization performance of kernel approximation methods in terms of $\Delta_1$ and $\Delta_2$ in the context of fixed design kernel ridge regression.
\begin{definition}
	\label{def:specdist_orig}
	For $\Delta \geq 0$, a symmetric matrix $A$ is a \emph{$\Delta$-spectral approximation} of another symmetric matrix $B$ if $(1-\Delta)B \preceq A \preceq (1+\Delta)B$. 
\end{definition}
We extend this definition by allowing for different values of $\Delta$ in the left and right inequalities above:
\begin{definition}
	\label{def:specdist}
  For $\Delta_1, \Delta_2 \geq 0$, a symmetric matrix $A$ is a \emph{$(\Delta_1, \Delta_2)$-spectral approximation} of another symmetric matrix $B$ if $(1-\Delta_1)B \preceq A \preceq (1+\Delta_2)B$. 
\end{definition}
Throughout the text, we will use $\Delta$ to denote the variable in Def.~\ref{def:specdist_orig}, and $(\Delta_1, \Delta_2)$ to denote the variables in Def.~\ref{def:specdist}.
In our discussions and experiments, we always consider the smallest $\Delta$, $\Delta_1$, $\Delta_2$ satisfying the above definitions; thus, $\Delta=\max(\Delta_1,\Delta_2)$.

In the paragraphs that follow we present generalization bounds for kernel approximation models in terms of $\Delta_1$ and $\Delta_2$ in the context of fixed design kernel ridge regression, 
and demonstrate that $\Delta_1$ and $\Delta_2$ influence generalization in different ways (Prop.~\ref{prop:alphabeta}).
We consider the fixed design setting because its expected generalization error has a closed-form expression, which allows us to analyze generalization performance in a fine-grained fashion.  For an overview of fixed design kernel ridge regression, see Appendix~\ref{subsec:app_fix_design}.

\begin{figure*}
	\centering
	\begin{small}
		\vfigsp
		\begin{tabular}{@{\hskip -0.0in}c@{\hskip -0.0in}c@{\hskip -0.0in}c@{\hskip -0.0in}c@{\hskip -0.0in}}
			\includegraphics[width=0.245\linewidth]{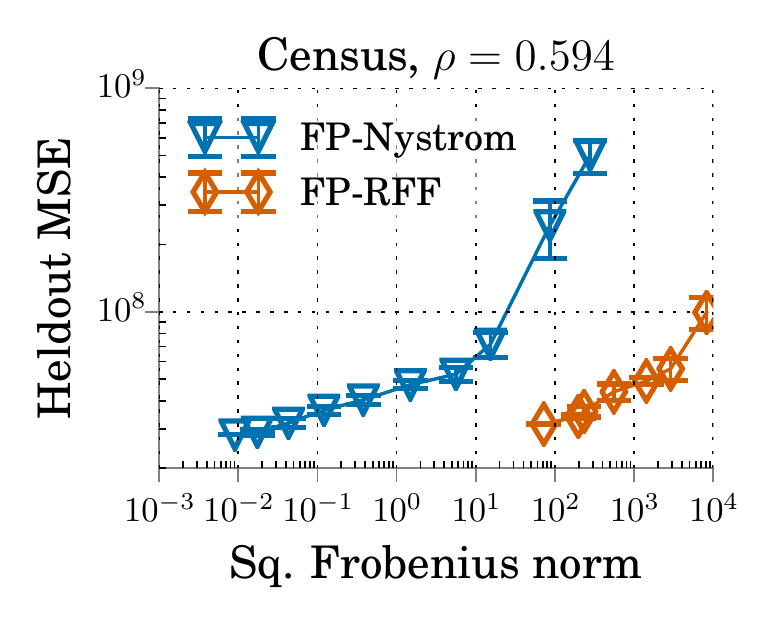} &
			\includegraphics[width=0.245\linewidth]{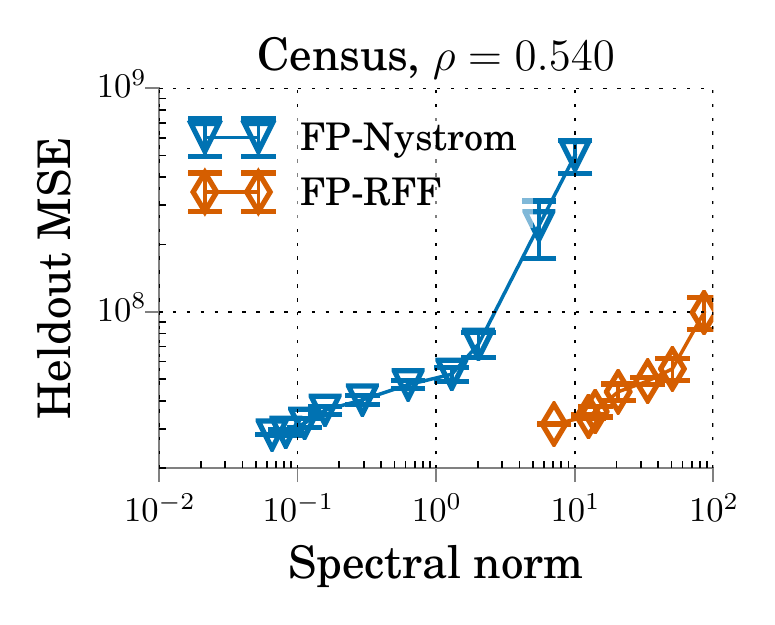} &
			\includegraphics[width=0.245\linewidth]{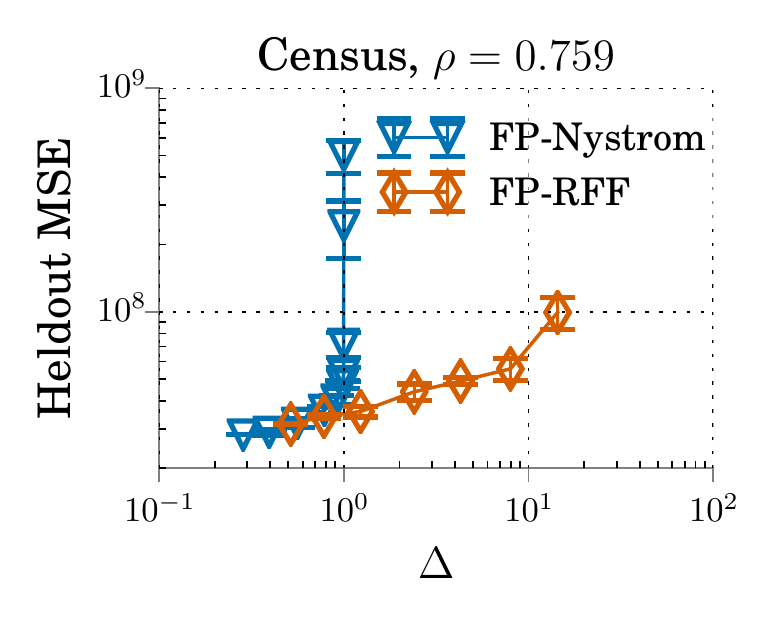} &		
			\includegraphics[width=0.245\linewidth]{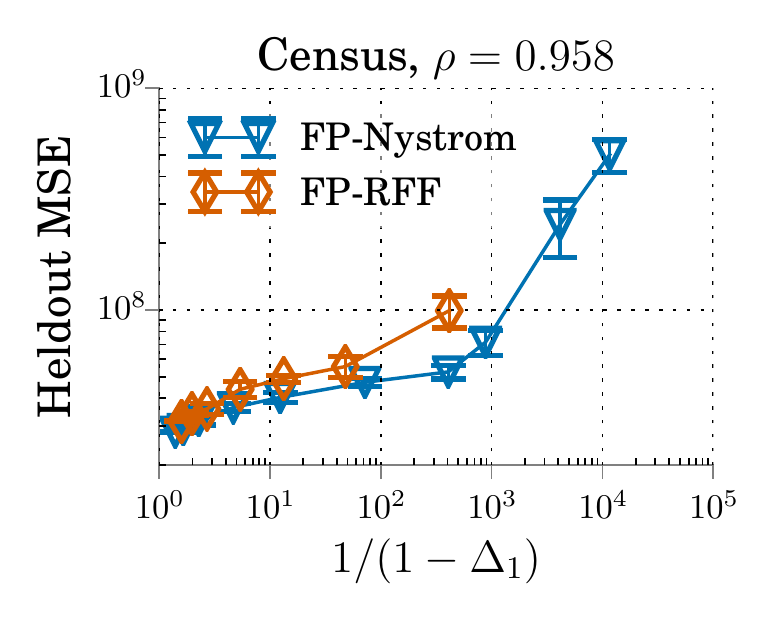} \\
		\end{tabular}
	\end{small}
	\vfigsp
	\vfigsp
	\caption{
		\ifcamera
		The correlation between generalization performance and different measures of kernel approximation error for the full-precision RFF and \Nystrom methods.
		We see that generalization performance aligns well with $1/(1-\Delta_1)$ (Spearman rank correlation coefficient $\rho=0.958$), while aligning poorly with $\Delta$ and the spectral and squared Frobenius norms of the kernel approximation error matrix.
		See Appendix~\ref{app:nys_vs_rff_revisited_details} for results on CovType.
		\else
		The correlation between generalization performance and different measures of kernel approximation error for the full-precision RFF and \Nystrom methods.
		We see that generalization performance aligns well with $1/(1-\Delta_1)$, while aligning poorly with $\Delta$ and the spectral and squared Frobenius norms of the kernel approximation error matrix.
		See Appendix~\ref{app:nys_vs_rff_revisited_details} for analogous results on CovType, which are qualitatively similar.
		\fi
}
	\label{fig:metrics_vs_perf}
\end{figure*}

In the fixed design setting,  given a kernel matrix $K\in \RR^{n\times n}$, a regularization parameter $\lambda \geq 0$,
and a set of labeled points $\{(x_i,y_i)\}_{i=1}^n$ where the observed labels $y_i=\by_i + \eps_i$ are randomly perturbed versions of the true labels $\by_i \in \RR$ ($\eps_i$ independent, $\expect{}{\eps_i} = 0$, $\expect{}{\eps_i^2} = \sigma^2 < \infty$),
it is easy to show \citep{alaoui15} that the optimal kernel regressor\footnote{$f_K(x) = \sum_i \alpha_i k(x,x_i)$ for $\alpha = (K+\lambda I)^{-1}y$.} $f_K$ has expected error
\begin{equation*}
\cR(f_K) = \frac{\lambda^2}{n} \by^T(K+\lambda I)^{-2}\by + \frac{\sigma^2}{n} \tr\Big(K^2(K+\lambda \id)^{-2}\Big),
\end{equation*}
where $\by = (\by_1,\ldots,\by_n)$ is the vector of true labels.

This closed-form expression for generalization error allows us to bound the expected loss $\cR(f_{\tK})$ of a kernel ridge regression model $f_{\tK}$ learned using an approximate kernel matrix $\tK$ in place of the exact kernel matrix $K$.
In particular, if we define
\ifcamera
	\begin{align*}
		\hcR(f_K) & \defeq \frac{\lambda}{n} \by^T(K+\lambda I)^{-1}\by + \frac{\sigma^2}{n}\tr\Big(K(K+\lambda I)^{-1}\Big), %
	\end{align*}
	which is an upper bound on $\cR(f_K)$, 
\else
	\begin{equation}
	\hcR(f_K) \defeq \frac{\lambda}{n} \by^T(K+\lambda I)^{-1}\by + \frac{\sigma^2}{n}\tr\Big(K(K+\lambda I)^{-1}\Big) \geq \cR(f_K),
	\label{eq:avron_rhat} 
	\end{equation}
\fi
\noindent we can bound the expected loss of $f_{\tK}$ as follows:
\begin{proposition}{(Extended from \citep{avron17})}
	Suppose $\tK + \lambda I$ is $(\Delta_1, \Delta_2)$-spectral approximation of $K+\lambda I$, for $\Delta_1 \in [0,1)$ and $\Delta_2 \geq 0$. Let $m$ denote the rank of $\tK$, and let $f_{K}$ and $f_{\tK}$ be the kernel ridge regression estimators learned using these matrices, with regularizing constant $\lambda \geq 0$ and label noise variance $\sigma^2 < \infty$. Then
	\begin{equation}
	\cR(f_{\tK}) \leq \frac{1}{1-\Delta_1} \hcR(f_K) +  \frac{\Delta_2}{1+\Delta_2}\frac{m}{n}\sigma^2.
	\label{eq:risk_bound}
	\end{equation}
	\label{prop:alphabeta}
	\vspace{-0.20in}
\end{proposition}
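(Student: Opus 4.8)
The plan is to start from the closed-form fixed-design risk. Applying the displayed formula for $\cR(f_K)$ with $K$ replaced by $\tK$ (legitimate because that expression used only that $K$ was symmetric PSD and that the estimator solves a ridge problem with its kernel matrix; see Appendix~\ref{subsec:app_fix_design}) gives
\[
\cR(f_{\tK}) = \frac{\lambda^2}{n}\by^T(\tK+\lambda I)^{-2}\by + \frac{\sigma^2}{n}\tr\!\big(\tK^2(\tK+\lambda I)^{-2}\big),
\]
a bias term plus a variance term, which I would bound separately. Write $A \defeq \tK+\lambda I$ and $B \defeq K+\lambda I$; assuming $\lambda>0$ (the $\lambda=0$ case is degenerate) both are positive definite, so the hypothesis reads $(1-\Delta_1)B \preceq A \preceq (1+\Delta_2)B$, equivalently $\tfrac{1}{1+\Delta_2}B^{-1} \preceq A^{-1} \preceq \tfrac{1}{1-\Delta_1}B^{-1}$ since inversion reverses the semidefinite order on positive definite matrices.

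For the bias term, note $\tK \succeq 0$ gives $A \succeq \lambda I$, hence $\lambda A^{-1} \preceq I$ and therefore $\lambda A^{-2} = A^{-1/2}(\lambda A^{-1})A^{-1/2} \preceq A^{-1} \preceq \tfrac{1}{1-\Delta_1}B^{-1}$. Multiplying by $\lambda$ and taking the quadratic form with $\by$ yields $\tfrac{\lambda^2}{n}\by^T A^{-2}\by \le \tfrac{1}{1-\Delta_1}\cdot\tfrac{\lambda}{n}\by^T B^{-1}\by$, i.e.\ at most $\tfrac{1}{1-\Delta_1}$ times the first summand of $\hcR(f_K)$.

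For the variance term the key object is $X \defeq \tK A^{-1} = I - \lambda A^{-1}$, a symmetric PSD matrix with $\norm{X} < 1$ and $\rank(X) = \rank(\tK) = m$, and $\tr(\tK^2 A^{-2}) = \tr(X^2)$. From $A^{-1} \succeq \tfrac{1}{1+\Delta_2}B^{-1}$ one gets $X \preceq I - \tfrac{\lambda}{1+\Delta_2}B^{-1} = KB^{-1} + \tfrac{\Delta_2}{1+\Delta_2}\lambda B^{-1} \eqdef Y$, using the identity $KB^{-1} = I - \lambda B^{-1}$. Since $X \succeq 0$ and $Y-X \succeq 0$, $\tr(X^2) \le \tr(XY)$. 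I split $\tr(XY)$ into two pieces: $\tr(X\cdot KB^{-1}) \le \tr(KB^{-1})$ because $X \preceq I$ and $KB^{-1}\succeq 0$; and $\tfrac{\Delta_2}{1+\Delta_2}\lambda\,\tr(XB^{-1}) \le \tfrac{\Delta_2}{1+\Delta_2}\lambda\norm{B^{-1}}\tr(X) \le \tfrac{\Delta_2}{1+\Delta_2}m$, using $\norm{B^{-1}} \le 1/\lambda$ and $\tr(X) \le m\norm{X} \le m$. Hence the variance term is at most $\tfrac{1}{1-\Delta_1}\cdot\tfrac{\sigma^2}{n}\tr(KB^{-1}) + \tfrac{\Delta_2}{1+\Delta_2}\tfrac{m}{n}\sigma^2$, and adding the two bounds gives exactly \eqref{eq:risk_bound}.

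The step I expect to be the crux is the variance bound. The naive route $\tr(X^2) \le \tr(X) = \tr(\tK(\tK+\lambda I)^{-1})$ followed by the trace comparison $X \preceq Y$ fails: $\tr(Y)$ contains $\lambda\tr(B^{-1})$, which is only bounded by $n$, so one obtains $\tfrac{\Delta_2}{1+\Delta_2}n$ rather than $\tfrac{\Delta_2}{1+\Delta_2}m$. Keeping the product $\tr(X\cdot Y)$ and exploiting that $X$ has rank $m$ (so $\tr(X) \le m$, not $\le n$) is what makes the $\Delta_2$-term scale with $m/n$; pinning down the exact constants $\tfrac{1}{1-\Delta_1}$ and $\tfrac{\Delta_2}{1+\Delta_2}$ then only needs the algebraic identity $I - \tfrac{\lambda}{1+\Delta_2}B^{-1} = KB^{-1} + \tfrac{\Delta_2}{1+\Delta_2}\lambda B^{-1}$ together with the order-reversal of matrix inversion.
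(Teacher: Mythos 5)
Your proof is correct and yields exactly the paper's constants, but the crux step is handled differently from the paper's argument. The shared skeleton is the bias/variance split and the bound $(\tK+\lambda I)^{-1} \preceq \frac{1}{1-\Delta_1}(K+\lambda I)^{-1}$ for the bias-type term (your observation $\lambda^2(\tK+\lambda I)^{-2} \preceq \lambda(\tK+\lambda I)^{-1}$ is precisely the paper's passage from $\cR(f_{\tK})$ to $\hcR(f_{\tK})$). For the variance term, however, the paper first reduces to $s_\lambda(\tK) \defeq \tr\big(\tK(\tK+\lambda I)^{-1}\big)$ — i.e.\ it does take the route $\tr(X^2)\le\tr(X)$ that you dismiss — and then rescues it not by the lossy operator comparison $\tr(X)\le\tr(Y)$ but by an eigenvalue-level regrouping: writing $s_\lambda(\tK)=m-\sum_{i=1}^m \lambda/(\lambda_i(\tK)+\lambda)$ and invoking the eigenvalue monotonicity $\lambda_i(\tK)+\lambda \le (1+\Delta_2)(\lambda_i(K)+\lambda)$ implied by the upper spectral inequality; the rank-$m$ structure enters because only $m$ eigenvalue terms acquire the $(1+\Delta_2)$ factor, giving $s_\lambda(\tK)\le s_\lambda(K)+\frac{\Delta_2}{1+\Delta_2}m$. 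You instead keep $\tr(X^2)$ for $X=\tK(\tK+\lambda I)^{-1}$ and argue purely with operator/trace inequalities: $X\preceq Y$, $\tr(X^2)\le\tr(XY)$, $\tr\big(X\,K(K+\lambda I)^{-1}\big)\le s_\lambda(K)$, and $\lambda\,\tr\big(X(K+\lambda I)^{-1}\big)\le\tr(X)\le m$; all of these steps check out ($\tr(X(Y-X))\ge 0$ since $X\succeq 0$ and $Y\succeq X$; $\tr(XM)\le\norm{M}\tr(X)$ for PSD $X,M$). Your route avoids any spectral decomposition or Weyl-type eigenvalue comparison, which is a small gain in elementarity, while the paper's eigenvalue bookkeeping makes the $m$-versus-$n$ accounting more explicit and shows the $\tr(X^2)\le\tr(X)$ reduction is not inherently doomed. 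Your restriction to $\lambda>0$ matches what the paper's proof implicitly assumes (all inverses are of $K+\lambda I$ and $\tK+\lambda I$), so nothing is lost there.
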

We include a proof in Appendix~\ref{subsec:generalization_and_rel_spec_dist}.
This result shows that smaller values for $\Delta_1$ and $\Delta_2$ imply tighter bounds on the generalization performance of the model trained with $\tK$.
We can see that as $\Delta_1$ approaches 1 the bound diverges, and as $\Delta_2$ approaches $\infty$ the bound plateaus.
We leverage this generalization bound to understand the difference in performance between \Nystrom and RFFs (Sec.~\ref{subsec:nys_vs_rff_revisited}), and to motivate and analyze our proposed low-precision random Fourier features (Sec.~\ref{sec:lprff}).

\paragraph{Remark}
The generalization bound in Prop.~\ref{prop:alphabeta} assumes the regressor $f_K$ is computed via the closed-form solution for kernel ridge regression.
However, in Sections~\ref{sec:lprff}-\ref{sec:experiments} we focus on stochastic gradient descent (SGD) training for kernel approximation models.
Because SGD can \textit{also} find the model which minimizes the regularized empirical loss \citep{nemirovski09}, the generalization results carry over to our setting.
\vsp

\subsection{Revisiting \Nystrom vs.\ RFF Comparison}
\label{subsec:nys_vs_rff_revisited}
\vsp
In this section we show that the values of $\Delta_1$ and $\Delta_2$ such that the approximate kernel matrix is a $(\Delta_1,\Delta_2)$-spectral approximation of the exact kernel matrix correlate better with generalization performance than the original $\Delta$, and the Frobenius and spectral norms of the kernel approximation error; we measure correlation using Spearman's rank correlation coefficient $\rho$.

To study the correlation of these metrics with generalization performance, we train \Nystrom and RFF models for many feature dimensions on the Census regression task, and on a subsampled version of 20k train and heldout points from the CovType classification task.
We choose these small datasets to be able to compute the various measures of kernel approximation error over the entire heldout set.
We measure the spectral and Frobenius norms of $K-\tK$, and the $\Delta$ and $(\Delta_1,\Delta_2)$ values between $K+\lambda I$ and $\tK+\lambda I$ ($\lambda$ chosen via cross-validation), 
where $K$ and $\tK$ are the exact and approximate kernel matrices for the heldout set.
For more details about these experiments and how we compute $\Delta$ and $(\Delta_1,\Delta_2)$, see Appendix~\ref{app:nys_vs_rff_revisited_details}.

In Figure~\ref{fig:metrics_vs_perf}, we plot the generalization performance on these tasks as a function of these metrics;
while the original $\Delta$ and the Frobenius and spectral norms generally do not align well with generalization performance, we see that $\frac{1}{1-\Delta_1}$ does.
Specifically, $\frac{1}{1-\Delta_1}$ attains a Spearman rank correlation coefficient of $\rho=0.958$, while squared Frobenius norm, spectral norm, and the original $\Delta$ attain values of 0.594, 0.540, and 0.759.\footnote{
One reason $\Delta_1$ correlates better than $\Delta$ is because when $\Delta_2 > \Delta_1$, $\Delta=\max(\Delta_1,\Delta_2)$ hides the value of $\Delta_1$.
This shows why decoupling the two roles of $\Delta$ is important.
}
In Appendix~\ref{app:nys_vs_rff_revisited_details} we show these trends are robust to different kernel approximation methods and datasets.
For example, we show that while other approximation methods (\eg, orthogonal RFFs \citep{yu16}), like \NystromNS, can attain much lower Frobenius and spectral error than standard RFFs, this does not translate to improved $\Delta_1$ or heldout performance.
\ifcamera\else
We also show that the trends in Figure~\ref{fig:metrics_vs_perf} occur on larger datasets, where we measure $(\Delta_1,\Delta_2)$ on a random subset of $20k$ heldout points, and for a range of $\lambda$ values (trend is quite robust to choice of $\lambda$).
\fi
These results mirror the generalization bound in Proposition~\ref{prop:alphabeta}, which grows linearly with $\frac{1}{1-\Delta_1}$.
For simplicity, we ignore the role of $\Delta_2$ here,
as $\Delta_1$ appears to be sufficient for explaining the main differences in performance between these full-precision methods.\footnote{While $1/(1-\Delta_1)$ aligns well with performance, it is not perfect---for a fixed $\Delta_1$, \Nystrom generally performs slightly better than RFFs. In App.~\ref{subsec:app_exp_smallscale} we suggest this is because \Nystrom has $\Delta_2 = 0$
while RFFs has larger $\Delta_2$.
}
In Sections~\ref{subsec:theory} and \ref{subsec:gen_perf_and_delta}, however, we show that $\Delta_2$ has a large influence on generalization performance for low-precision features.

\ifcamera \else
The fact that $1/(1-\Delta_1)$ aligns well with generalization performance, but $\Delta$ does not, shows the importance of decoupling the two roles of $\Delta$.
As we have seen above, the value of $\Delta_1$ is very important for explaining generalization performance; unfortunately, $\Delta=\max(\Delta_1,\Delta_2)$ often hides the value of $\Delta_1$ for RFFs, but not for \NystromNS.
This is because although \Nystrom and RFFs have similar values of $\Delta_1$ for a fixed number of features, RFFs often have $\Delta_2$ values greater than $\Delta_1$, leading to $\Delta=\Delta_2$ (Figure~\ref{fig:theory_supporting}).
\NystromNS, on the other hand, always has $\Delta=\Delta_1$ because $\Delta_2 = 0$.\footnote{This is because $0 \preceq \tK \preceq K$ for \Nystrom kernel matrices $\tK$ \citep{smola02}.}
By separately measuring $\Delta_1$ and $\Delta_2$,
we are able to see that generalization performance aligns well with $1/(1-\Delta_1)$ for these full-precision methods.
\fi

Now that we have seen that $\Delta_1$ has significant theoretical and empirical impact on generalization performance, it is natural to ask how to construct kernel approximation matrices that attain small $\Delta_1$.
An important consequence of the definition of $\Delta_1$ is that for $\tK + \lambda I$ to have small $\Delta_1$ relative to $K+\lambda I$, $\tK$ must be \textit{high-rank};
in particular, a necessary condition is 
$\Delta_1 \geq \frac{\lambda_{m+1}(K)}{\lambda_{m+1}(K)+\lambda}$,
where $m$ is the rank of $\tK$ and $\lambda_i(K)$ is the $i^{th}$ largest eigenvalue of
$K$.\footnote{By definition,
	$(K+\lambda I)(1-\Delta_1) \preceq \tK + \lambda I$.
	By Weyl's inequality this implies $\forall i \; (\lambda_i(K) + \lambda)(1-\Delta_1) \leq \lambda_i(\tK) + \lambda$.
	If $\tK$ is rank $m$, then $\lambda_{m+1}(\tK) = 0$, and the result follows.}
This sets a lower bound on the rank necessary for $\tK$ to attain small $\Delta_1$ which holds regardless of the approximation method used, motivating us to design high-rank kernel approximation methods.

\vsp

\section{\uppercase{Low-Precision Random Fourier Features (LP-RFFs)}}
\label{sec:lprff}
\vsp
\ifbullets
\subsection{Method details}
\begin{itemize}
	\item Discuss how we perform quantization (uniformly quantize interval $[-\sqrt{2/m},\sqrt{2/m}]$ using $b$ bits).
	\item Discuss the fact that we consider full-precision optimization over low-precision features. \textbf{Should this be here? Or should we have a section for ``systems considerations''?}
\end{itemize}
\subsection{Theoretical results}
\begin{itemize}
	\item Present main theorem, and theorem intuition.
	\item Present theory validation.
	\item \textbf{Claim \#4}: Using low-precision increases $\Delta_2$ proportional to $\delta_b^2/\lambda$, for a fixed number of features.  A consequence of this is that there should be a ``flat section'' in the $\Delta_2$ vs.\ precision plot (where we keep \# features $m$ constant), and a ``sweet spot'' in the $\Delta_2$ vs.\ precision plot (where we keep memory constant). [Plot done]
	\begin{itemize}
		\item \textbf{Plot 4a}: $\Delta_2$ vs.\ precision, for a fixed number of features, for several $\lambda$ values.
		\item \textbf{Plot 4b (maybe optional)}: $\Delta_2$ vs.\ precision, for a fixed amount of memory, for several $\lambda$ values.
	\end{itemize}
	\item \textbf{Claim \#5}: Using low-precision has negligible effect on $\Delta_1$, for a fixed number of features. A consequence of this is that the $\Delta_1$ vs.\ precision plot (where we keep \# features $m$ constant) should be flat, and the $\Delta_1$ vs.\ precision plot (where we keep memory constant) should be monotonically decreasing.
	\begin{itemize}
		\item \textbf{Plot 5a}: $1/(1-\Delta_1)$ vs.\ precision, for a fixed number of features, for several $\lambda$ values.
		\item \textbf{Plot 5b (maybe optional)}: $1/(1-\Delta_1)$ vs.\ precision, for a fixed amount of memory, for several $\lambda$ values.
	\end{itemize}
\end{itemize}
\fi

\begin{figure*}
	\centering
	\begin{small}
		\vfigsp
		\begin{tabular}{c c c}
			\includegraphics[height=0.22\linewidth]{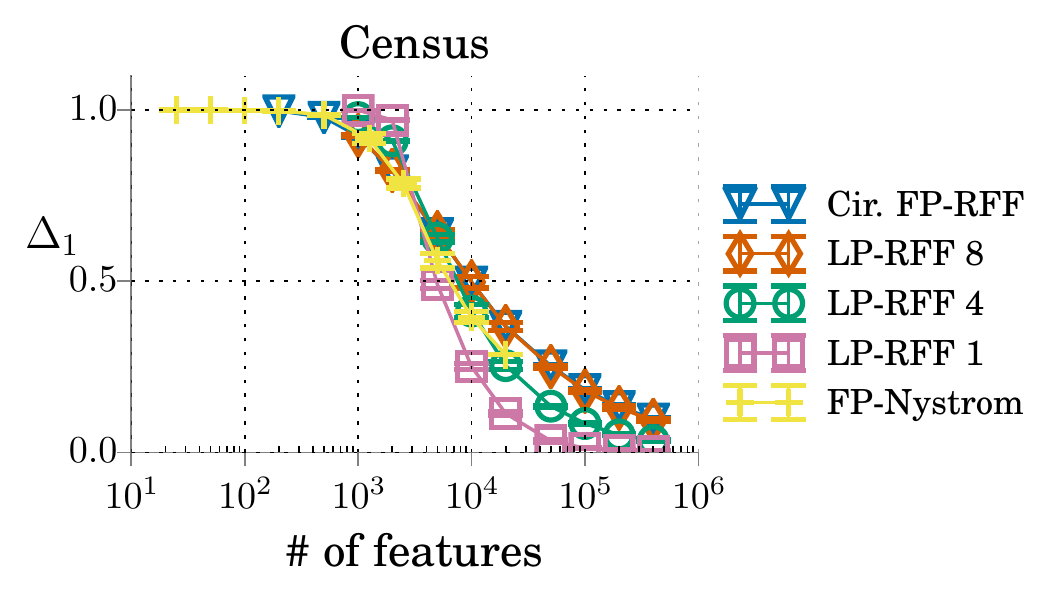} &      
			\hspace{-0.25in}\includegraphics[height=0.22\linewidth]{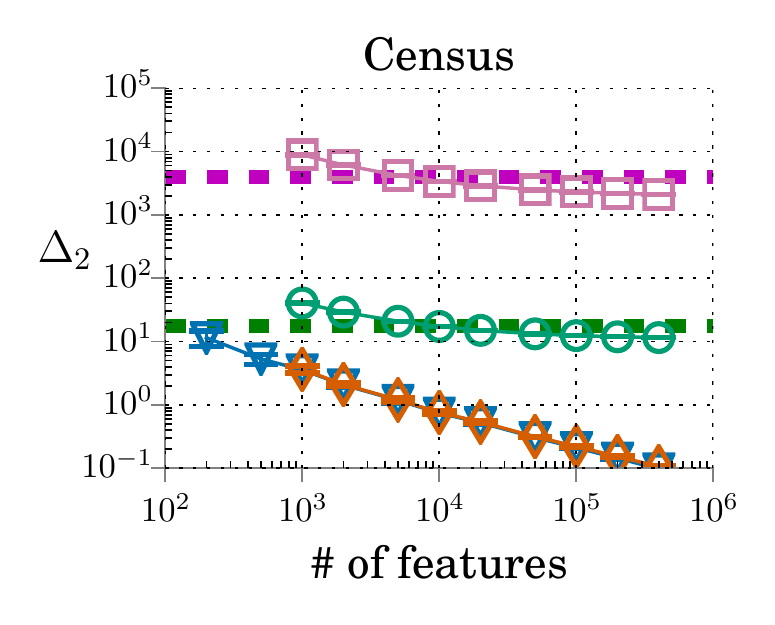} &
			\includegraphics[height=0.22\linewidth]{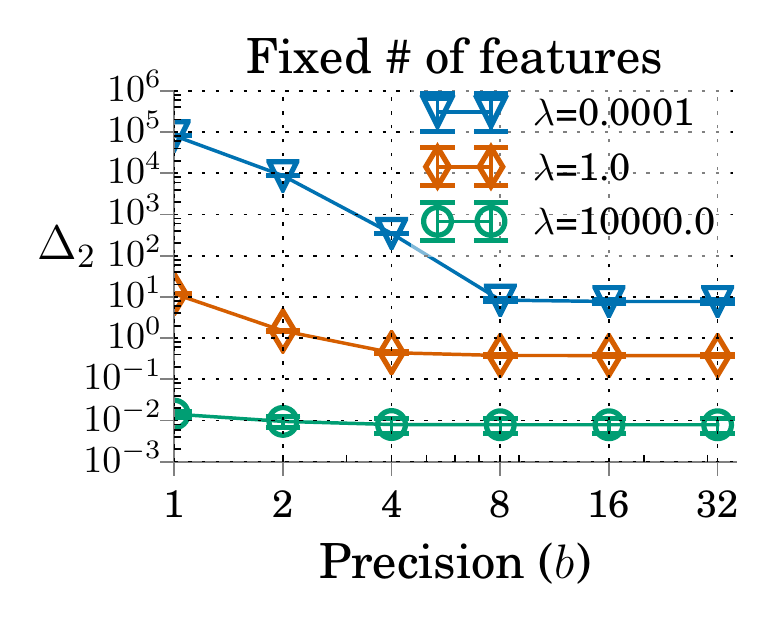} \\
		\end{tabular}
	\end{small}
	\vfigsp
	\vfigsp
	\caption{
		Empirical validation of Theorem~\ref{thm2}. In the left and middle plots (shared legend), we see that as the \# of features grows, LP-RFFs approach $\Delta_1=0$,
		but plateau at larger $\Delta_2$ values (at most $\delta_b^2/\lambda$, marked by dashed lines) for very low precisions.
		In the right plot we see that the larger $\lambda$ is, the lower the precision at which using low precision does not impact $\Delta_2$.
		\ifcamera \else
		For reference, in the left plot we also include the $\Delta_1$ values attained by the \Nystrom method;
		we can see that for the same number of features, the \Nystrom method attains very similar $\Delta_1$ values to RFFs. No \Nystrom line is include in the middle figure because $\Delta_2=0$ for \NystromNS.
		\fi
		For $\Delta_1$ and $\Delta_2$ vs.\ \# features plots on CovType, see Appendix~\ref{app:theory_validation_details}.
	}
	\label{fig:theory_supporting}
\end{figure*}

Taking inspiration from the above-mentioned connection between the rank of the kernel approximation matrix and generalization performance, we propose \textit{low-precision random Fourier features} (LP-RFFs) to create a high-rank approximation matrix under a memory budget.
In particular, we quantize each random Fourier feature to a low-precision fixed-point representation,
thus allowing us to store more features in the same amount of space. %
Theoretically, we show that when the quantization noise is small relative to the regularization parameter,
using low precision has minimal impact on the number of features required for the approximate kernel matrix to be a $(\Delta_1,\Delta_2)$-spectral approximation of the exact kernel matrix; 
by Proposition~\ref{prop:alphabeta}, this implies a bound on the generalization performance of the model trained on the low-precision features. 
At the end of this section (Section~\ref{sec:systems}), we discuss a memory-efficient implementation for training a full-precision model on top of LP-RFFs.
\vsp

\subsection{Method Details}
\label{subsec:method_details}
\vsp
The core idea behind LP-RFFs is to use $b$ bits to store each RFF, instead of $32$ or $64$ bits.
We implement this with a simple stochastic rounding scheme.
We use the parametrization $z_i(x) = \sqrt{2/m}\cos(w_i^T x + a_i) \in [-\sqrt{2/m},\sqrt{2/m}]$ for the RFF vector $z(x)\in\RR^m$ \citep{rahimi07random}, and divide this interval into $2^b - 1$ sub-intervals of equal size $r = \frac{2\sqrt{2/m}}{2^b-1}$.
We then randomly round each feature $z_i(x)$ to either the top or bottom of the sub-interval $[\underline{z},\overline{z}]$ containing it, in such a way that the expected value is equal to $z_i(x)$; 
specifically, 
we round $z_i(x)$ to $\underline{z}$ with probability $\frac{\overline{z}-z}{\overline{z}-\underline{z}}$ and to $\overline{z}$ with probability $\frac{z-\underline{z}}{\overline{z}-\underline{z}}$.
The variance of this stochastic rounding scheme is at most $\delta_b^2/m$, where $\delta_b^2 \defeq 2/(2^b-1)^2$ (Prop.~\ref{prop:var_bound} in App.~\ref{sec:app_kernel_error}).
For each low-precision feature $\tz_i(x)$ we only need to store the integer $j \in [0,2^b-1]$ such that $\tz_i(x) = -\sqrt{2/m} + jr$, which takes $b$ bits.
Letting $\tZ\in\RR^{n\times m}$ denote the matrix of quantized features, 
we call $\tK = \tZ\tZ^T$ an \textit{$m$-feature $b$-bit LP-RFF approximation} of a kernel matrix $K$.

\ifcamera
As a way to further reduce the memory footprint during training, we leverage existing work on using circulant random matrices \citep{yu15} for the RFF random projection matrix to only occupy $32m$ bits.\footnote{Technically, $m$ additional bits are needed to store a vector of Rademacher random variables in $\{-1,1\}^m$.}
\else
As a way to further reduce the memory footprint during training, we leverage existing work on using circulant random matrices \citep{yu15} for the RFF random projection matrix to only occupy $32m$ bits.\footnote{Technically, 
	a vector of Rademacher random variables in $\{-1,1\}^m$ must also be stored, 
	increasing the total space requirement to $33m$.} 
\fi
All our LP-RFF experiments use circulant projections.
\vsp

\subsection{Theoretical Results}
\label{subsec:theory}
\vsp
In this section we show quantization has minimal impact on the number of features required to guarantee strong generalization performance in certain settings.  %
We do this in the following theorem by lower bounding the probability that $\tK + \lambda I$ is a $(\Delta_1,\Delta_2)$-spectral approximation of $K+\lambda I$, for the LP-RFF approximation $\tK$ using $m$ features and $b$ bits per feature.\footnote{This theorem extends directly to the quantization of any kernel approximation feature matrix $Z\in\RR^{n\times m}$ with i.i.d.\ columns and with entries in $[-\sqrt{2/m},\sqrt{2/m}]$.}

\begin{theorem}
	Let $\tK$ be an $m$-feature $b$-bit LP-RFF approximation of a kernel matrix $K$,
	assume $\norm{K} \geq \lambda \geq \delta^2_b \defeq 2/(2^b-1)^2$,
	and define $a\defeq 8 \tr \left( (K + \lambda I_n)^{-1}(K + \delta^2_bI_n)\right)$.
	Then for any $\Delta_1 \geq 0$, $\Delta_2 \geq \delta^2_b/\lambda$,
\ifcamera
	\begin{align*}
		&\Prob\Big[{\textstyle (1-\Delta_1)(K+\lambda I) \preceq \tK+\lambda I \preceq (1+\Delta_2)(K+\lambda I)}\Big] \geq \\
		&{1\!-\!a\!\left(\!\!
		\exp\!\left(\!\frac{-m\Delta_1^2}{\frac{4n}{\lambda}(1 + \frac{2}{3}\Delta_1)} \right) \!+\!
		\exp\!\left(\frac{-m(\Delta_2 - \frac{\delta^2_b}{\lambda})^2}{\frac{4n}{\lambda}(1 + \frac{2}{3}(\Delta_2 - \frac{\delta^2_b}{\lambda}))}\!\right)
		\!\!\right)}.
	\end{align*}
\else
	\begin{align*}
	\Prob\Big[(1-\Delta_1)&(K+\lambda I) \preceq \tK+\lambda I \preceq (1+\Delta_2)(K+\lambda I)\Big] \geq \\
	&1-8 \tr \left( (K + \lambda I_n)^{-1}(K + \delta^2_bI_n)\right)\left(
		\exp\left(\frac{-m\Delta_1^2}{\frac{4n}{\lambda}(1 + \frac{2}{3}\Delta_1)} \right) +
		\exp\left(\frac{-m(\Delta_2 - \frac{\delta^2_b}{\lambda})^2}{\frac{4n}{\lambda}(1 + \frac{2}{3}(\Delta_2 - \frac{\delta^2_b}{\lambda}))}\right)\right).
	\end{align*}
\fi
\label{thm2}
\end{theorem}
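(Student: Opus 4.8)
The plan is to conjugate everything by $(K+\lambda I_n)^{-1/2}$, absorb the deterministic pieces and the (small, PSD) quantization bias into the mean, and then control the fluctuation of a sum of $m$ i.i.d.\ centered rank-one matrices with an intrinsic-dimension matrix Bernstein inequality. Write $\tZ=[\tilde\phi_1,\dots,\tilde\phi_m]$ so that $\tK=\sum_{i=1}^m \tilde\phi_i\tilde\phi_i^\top$ with the $\tilde\phi_i\in\RR^n$ i.i.d., and set $E:=(K+\lambda I_n)^{-1/2}$. Conjugating by $E$, the target event is exactly $(1-\Delta_1)I_n \preceq M \preceq (1+\Delta_2)I_n$ where $M:=E(\tK+\lambda I_n)E=\sum_i E\tilde\phi_i\tilde\phi_i^\top E+\lambda E^2$. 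First I would compute $\E M$: since stochastic rounding is coordinatewise unbiased, $\E[\tilde\phi_i\mid\phi_i]=\phi_i$ for the unquantized RFF column $\phi_i$, so $\E[\tilde\phi_i\tilde\phi_i^\top]=\E[\phi_i\phi_i^\top]+\E[\mathrm{Cov}(\tilde\phi_i\mid\phi_i)]=\tfrac1m K+\bar D$, where the RFF identity gives $\E[\phi_i\phi_i^\top]=K/m$ and, by the per-coordinate variance bound (Prop.~\ref{prop:var_bound}), $\bar D$ is diagonal with $0\preceq\bar D\preceq(\delta_b^2/m)I_n$. Hence $\E M=E(K+m\bar D+\lambda I_n)E=I_n+EDE$ with $D:=m\bar D$, $0\preceq D\preceq\delta_b^2 I_n$, and since $EDE\preceq\delta_b^2(K+\lambda I_n)^{-1}\preceq(\delta_b^2/\lambda)I_n$ we get $I_n\preceq\E M\preceq(1+\delta_b^2/\lambda)I_n$.

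Next, writing $M=\E M+\sum_i X_i$ with $X_i:=E\tilde\phi_i\tilde\phi_i^\top E-\E[E\tilde\phi_i\tilde\phi_i^\top E]$ i.i.d.\ and centered, I would note that $\E M\succeq I_n$ makes $\lambda_{\max}(-\sum_i X_i)\le\Delta_1$ sufficient for the lower inclusion, and $\E M\preceq(1+\delta_b^2/\lambda)I_n$ makes $\lambda_{\max}(\sum_i X_i)\le\Delta_2-\delta_b^2/\lambda$ sufficient for the upper inclusion (this is where the hypothesis $\Delta_2\ge\delta_b^2/\lambda$ is used). A union bound then reduces the theorem to two one-sided tail bounds on $\lambda_{\max}$ of $\pm\sum_i X_i$. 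For matrix Bernstein I would collect: (i) an a.s.\ operator-norm bound, using that rounding keeps $|\tilde\phi_{ij}|\le\sqrt{2/m}$, so $\norm{E\tilde\phi_i\tilde\phi_i^\top E}=\tilde\phi_i^\top(K+\lambda I_n)^{-1}\tilde\phi_i\le\norm{\tilde\phi_i}^2/\lambda\le 2n/(m\lambda)$, whence $L:=\norm{X_i}\le\norm{E\tilde\phi_i\tilde\phi_i^\top E}+\norm{\E[E\tilde\phi_i\tilde\phi_i^\top E]}\le 4n/(m\lambda)$; and (ii) a variance bound: since $(E\tilde\phi_i\tilde\phi_i^\top E)^2=\norm{E\tilde\phi_i}^2\,E\tilde\phi_i\tilde\phi_i^\top E\preceq\tfrac{2n}{m\lambda}E\tilde\phi_i\tilde\phi_i^\top E$ pointwise, we have $\sum_i\E X_i^2\preceq\sum_i\E[(E\tilde\phi_i\tilde\phi_i^\top E)^2]\preceq\tfrac{2n}{m\lambda}E(K+D)E\preceq\tfrac{2n}{m\lambda}E(K+\delta_b^2 I_n)E=:\bar V$. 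Then $v:=\norm{\bar V}=\tfrac{2n}{m\lambda}\norm{(K+\lambda I_n)^{-1}(K+\delta_b^2 I_n)}\le\tfrac{2n}{m\lambda}$, which produces the $\tfrac{4n}{\lambda}$ and the $\tfrac23$ coefficient in the exponents once $v$ and $L$ are plugged into $-t^2/2/(v+Lt/3)$ with $t=\Delta_1$ and $t=\Delta_2-\delta_b^2/\lambda$. For the prefactor, $\mathrm{intdim}(\bar V)=\tr\!\big((K+\lambda I_n)^{-1}(K+\delta_b^2 I_n)\big)/\norm{(K+\lambda I_n)^{-1}(K+\delta_b^2 I_n)}\le 2\,\tr\!\big((K+\lambda I_n)^{-1}(K+\delta_b^2 I_n)\big)$, where the factor $2$ is exactly where the otherwise-unused hypothesis $\norm{K}\ge\lambda\ge\delta_b^2$ enters: the top eigenvalue of $(K+\lambda I_n)^{-1}(K+\delta_b^2 I_n)$ is $(\norm{K}+\delta_b^2)/(\norm{K}+\lambda)\ge\tfrac12$. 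The intrinsic-dimension matrix Bernstein inequality then gives failure probability $\le 4\,\mathrm{intdim}(\bar V)\exp(\cdots)\le 8\,\tr\!\big((K+\lambda I_n)^{-1}(K+\delta_b^2 I_n)\big)\exp(\cdots)$ for each tail, and summing the two tails yields precisely the stated bound.

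The main obstacle is the bookkeeping around the quantization bias: one must track that the only effect of rounding on the mean is the extra diagonal $D\preceq\delta_b^2 I_n$, which is what shifts the upper threshold by $\delta_b^2/\lambda$ and replaces $K$ by $K+\delta_b^2 I_n$ inside the trace, while simultaneously checking that rounding does \emph{not} degrade the norm and variance estimates (it does not, since $\tilde\phi_i$ still lies in $[-\sqrt{2/m},\sqrt{2/m}]^n$ and $\E[\tilde\phi_i\tilde\phi_i^\top]\preceq(K+\delta_b^2 I_n)/m$). The one slightly delicate point is matching constants: recovering the clean closed form $8\,\tr\!\big((K+\lambda I_n)^{-1}(K+\delta_b^2 I_n)\big)$ rather than $8\,\mathrm{intdim}$ hinges on the $\tfrac12$ eigenvalue lower bound above, and recovering the exact $\tfrac23$ in the denominators requires using $L\le 4n/(m\lambda)$ together with $v\le 2n/(m\lambda)$ and being careful that the reduction to the two $\lambda_{\max}$ events loses nothing. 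Everything else — the RFF identity $\E[\phi_i\phi_i^\top]=K/m$, Prop.~\ref{prop:var_bound}, and the final invocation of matrix Bernstein — is standard.
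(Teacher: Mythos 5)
Your proposal is correct and follows essentially the same route as the paper: conjugation by $(K+\lambda I_n)^{-1/2}$, absorbing the quantization bias $D \preceq \delta_b^2 I_n$ into the mean so that the upper threshold shifts by $\delta_b^2/\lambda$, and an intrinsic-dimension matrix Bernstein bound with the same almost-sure bound, variance bound, and $\norm{M}\in[\tfrac12,1]$ estimates (via $\norm{K}\ge\lambda\ge\delta_b^2$) that produce the prefactor $8\tr\big((K+\lambda I_n)^{-1}(K+\delta_b^2 I_n)\big)$ and the $\tfrac{4n}{\lambda}(1+\tfrac23 t)$ denominators. The only cosmetic difference is that the paper isolates the concentration step as a separate lemma and explicitly removes the restriction $t\ge\sqrt{v}+L/3$ from Tropp's intrinsic-dimension Bernstein inequality (the bound being vacuous below that threshold), a point you invoke implicitly.
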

The proof of Theorem~\ref{thm2} is in Appendix~\ref{sec:lprff_theory_appendix}.
\ifcamera
To provide more intuition we present the following corollary:
\else
To provide more intuition and simplify the above expression, we present the following corollary:
\fi
\begin{corollary}
  \label{cor:n_features_required}
	Assuming $\Delta_1 \leq 3/2$,
	it follows that $(1-\Delta_1) (K\!+\!\lambda I_n) \preceq \tilde{K} + \lambda I_n$ with probability at least $1-\rho$ if
	$m \geq \frac{8n/\lambda}{\Delta_1^2}\log\Big(\frac{a}{\rho}\Big).$
	Similarly, assuming
	$\Delta_2 \in \big[\frac{\delta_b^2}{\lambda}, \frac{3}{2}\big]$,
	it follows that $\tilde{K}+\lambda I_n \preceq (1+\Delta_2)(K+\lambda I_n)$ with probability at least $ 1-\rho$ if
	$m \geq \frac{8n/\lambda}{(\Delta_2-\delta_b^2/\lambda)^2}\log\Big(\frac{a}{\rho}\Big).$
\end{corollary}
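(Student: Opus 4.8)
The plan is to obtain Corollary~\ref{cor:n_features_required} directly from Theorem~\ref{thm2} by isolating the two one-sided tail bounds that go into it and inverting each one for $m$. The starting point is the observation that the probability bound in Theorem~\ref{thm2} is assembled (in Appendix~\ref{sec:lprff_theory_appendix}) from a union bound over the two ``bad'' events $\{(1-\Delta_1)(K+\lambda I) \not\preceq \tK+\lambda I\}$ and $\{\tK+\lambda I \not\preceq (1+\Delta_2)(K+\lambda I)\}$, whose probabilities are controlled respectively by $a\exp\!\big(\frac{-m\Delta_1^2}{(4n/\lambda)(1+\frac{2}{3}\Delta_1)}\big)$ and $a\exp\!\big(\frac{-m(\Delta_2-\delta_b^2/\lambda)^2}{(4n/\lambda)(1+\frac{2}{3}(\Delta_2-\delta_b^2/\lambda))}\big)$, with $a \defeq 8\tr\!\big((K+\lambda I_n)^{-1}(K+\delta_b^2 I_n)\big)$. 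So the first step is just to restate these single-term bounds (one per half of the spectral sandwich).

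Second, I would solve the inequality $a\exp\!\big(\frac{-m\Delta_1^2}{(4n/\lambda)(1+\frac{2}{3}\Delta_1)}\big) \le \rho$ for $m$. Taking logarithms, this is equivalent to $m \ge \frac{(4n/\lambda)(1+\frac{2}{3}\Delta_1)}{\Delta_1^2}\log(a/\rho)$, and likewise $\tK+\lambda I_n \preceq (1+\Delta_2)(K+\lambda I_n)$ holds with probability at least $1-\rho$ whenever $m \ge \frac{(4n/\lambda)(1+\frac{2}{3}(\Delta_2-\delta_b^2/\lambda))}{(\Delta_2-\delta_b^2/\lambda)^2}\log(a/\rho)$; the hypothesis $\Delta_2 \ge \delta_b^2/\lambda$ ensures this denominator is positive and matches the exponent in Theorem~\ref{thm2}.

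Third, I would use the assumed upper bounds on $\Delta_1$ and $\Delta_2$ to replace the factor $1+\frac{2}{3}(\cdot)$ by the constant $2$: since $\Delta_1 \le 3/2$ we get $1+\frac{2}{3}\Delta_1 \le 2$, and since $\Delta_2 - \delta_b^2/\lambda \le 3/2$ (using $\Delta_2 \le 3/2$ and $\delta_b^2/\lambda \ge 0$) we get $1+\frac{2}{3}(\Delta_2-\delta_b^2/\lambda) \le 2$. Substituting these into the sufficient conditions from the previous step yields $m \ge \frac{8n/\lambda}{\Delta_1^2}\log(a/\rho)$ and $m \ge \frac{8n/\lambda}{(\Delta_2-\delta_b^2/\lambda)^2}\log(a/\rho)$, which are the claimed bounds; enlarging the right-hand side of an ``$m \ge \cdots$'' requirement only strengthens it, so nothing is lost.

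The argument is entirely elementary, so the only point needing a little care is that Theorem~\ref{thm2} is phrased as a bound on the probability of the \emph{conjunction} of the two spectral inequalities, whereas the corollary asserts each half separately and with its own failure probability $\rho$. I would handle this by appealing to the structure of the proof of Theorem~\ref{thm2}, which bounds each one-sided failure probability individually before combining them with a union bound; equivalently, one may simply note that each of the two exponential terms appearing in Theorem~\ref{thm2} is already an upper bound on the corresponding one-sided failure probability, so no additional work is required beyond the algebra above.
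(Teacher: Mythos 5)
Your proposal is correct and follows essentially the same route as the paper: extract the two one-sided tail bounds from Theorem~\ref{thm2}, use $\Delta_1 \leq 3/2$ (resp.\ $\Delta_2 - \delta_b^2/\lambda \leq 3/2$) to replace $1+\tfrac{2}{3}(\cdot)$ by $2$ in the exponent, and solve $a\exp(-m(\cdot)^2/(8n/\lambda)) \leq \rho$ for $m$. The only cosmetic difference is how the one-sided statements are isolated: the paper formally lets the other parameter ($\Delta_2$ or $\Delta_1$) tend to infinity in Theorem~\ref{thm2}, whereas you appeal directly to the union-bound structure of its proof, which bounds each one-sided failure probability by the corresponding exponential term.
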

\vsp

The above corollary suggests that using low precision has negligible effect on the number of features necessary to attain a certain value of $\Delta_1$, and also has negligible effect for $\Delta_2$ as long as $\delta_b^2/\lambda \ll \Delta_2$.

\begin{figure*}
	\centering
	\begin{small}
		\vfigsp
		\begin{tabular}{c c c}
			\includegraphics[width=0.27\linewidth]{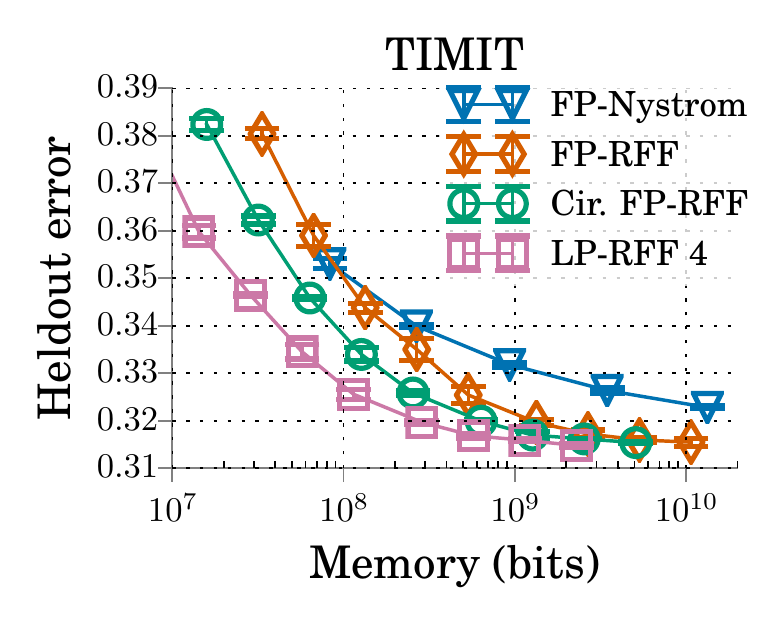} &	
			\includegraphics[width=0.27\linewidth]{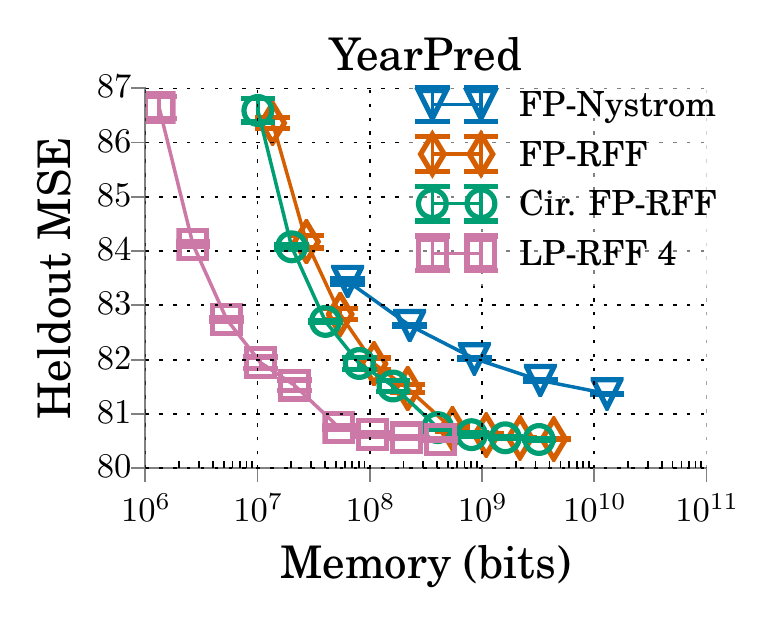} &
			\includegraphics[width=0.27\linewidth]{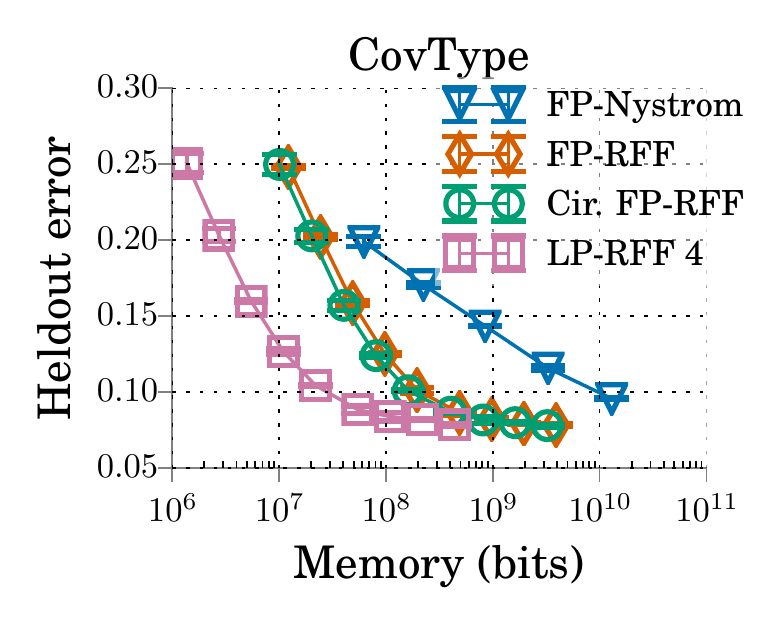}
		\end{tabular}
	\end{small}
	\vfigsp
	\vfigsp
	\caption{
		Generalization performance of FP-\NystromNS, FP-RFFs, circulant FP-RFFs, and LP-RFFs with respect to memory (sum of components in Table~\ref{table:mem-usage}) on TIMIT, YearPred and CovType. 
		LP-RFFs attain the best performance across a wide range of memory budgets.
		The same trend holds for Census in Appendix~\ref{app:lprff_exp_details}.
	}
	\label{fig:generalization_col_full}
\end{figure*}
\vsp \vsp \vsp
\paragraph{Validation of Theory} We now empirically validate the following two predictions made by the above theory:
(1) Using low precision has no effect on the asymptotic behavior of $\Delta_1$ as the number of features $m$ approaches infinity, while having a significant effect on $\Delta_2$ when $\delta_b^2/\lambda$ is large.
Specifically, as $m\rightarrow \infty$, $\Delta_1$ converges to 0 for any precision $b$, while $\Delta_2$ converges to a value upper bounded by $\delta_b^2/\lambda$.\footnote{By
	Lemma~\ref{lem:expectation_CCstar} in Appendix~\ref{sec:lprff_theory_appendix}, we know that $\expect{}{\tZ\tZ^T} = K+D$ for a diagonal matrix $D$ satisfying $0\preceq D \preceq \delta_b^2 I_n$, where $D$ is independent of $m$.
	As $m\rightarrow \infty$, $\Delta_2$ converges to $\|(K+\lambda I)^{-1/2}D(K+\lambda I)^{-1/2}\| \leq \delta_b^2/\lambda$.
}
(2) If $\delta_b^2/\lambda \ll \Delta_2$, using $b$-bit precision will have negligible effect on the number of features required to attain this $\Delta_2$.
Thus, the larger $\lambda$ is, the smaller the impact of using low precision should be on $\Delta_2$.

To validate the first prediction, %
in Figure~\ref{fig:theory_supporting} (left, middle) we plot $\Delta_1$ and $\Delta_2$ as a function of the number of features $m$, for FP-RFFs and LP-RFFs;
we use the same $\lambda$ as in the Section~\ref{sec:nys_vs_rff} Census experiments.
We show that for large $m$, all methods approach $\Delta_1 = 0$; 
in contrast, for precisions $b \leq 4$ the LP-RFFs converge to a $\Delta_2$ value much larger than 0, and slightly less than $\delta_b^2/\lambda$ (marked by dashed lines).

To validate the second prediction, in Figure~\ref{fig:theory_supporting} (right) we plot $\Delta_2$ vs.\ precision for various values of $\lambda$, using $m=2000$ features for all precisions; 
we do this on a random subsample of 8000 Census training points.
We see that for large enough precision $b$, the $\Delta_2$ is very similar to the value from using 32-bit precision.
Furthermore, the larger the value of $\lambda$, the smaller the precision $b$ can be without significantly affecting $\Delta_2$.
\ifcamera\else
This aligns with the theory.
\fi
\vsp

\subsection{Implementation Considerations}
\label{sec:systems}
\vsp
In this paper, we focus on training full-precision models using mini-batch training over low-precision features.
Here we describe how this mixed-precision optimization can be implemented in a memory-efficient manner. %

Naively, to multiply the low-precision features with the full-precision model, one could first cast the features to full-precision, requiring significant intermediate memory.
We can avoid this by \textit{casting in the processor registers}.
Specifically, to perform multiplication with the full-precision model, the features can be streamed to the processor registers in low precision, and then cast to full precision in the registers. 
In this way, only the features in the registers exist in full precision.
A similar technique can be applied to avoid intermediate memory in the low-precision feature computation---after a full-precision feature is computed in the registers, 
it can be directly quantized in-place before it is written back to main memory.
\ifcamera
We leave a more thorough investigation of these systems issues for future work.
\else
This technique can be implemented on both CPUs and GPUs.
Because the focus of this work is analyzing the statistical and empirical properties of LP-RFFs, we leave a more thorough investigation of these systems issues for future work.
\fi

\begin{table}[t]
	\vfigsp
	\centering
	\caption{
		\ifcamera
		The compression ratios achieved by LP-RFFs relative to the best performing full-precision baselines.
		\else
		The compression ratios achieved by LP-RFFs relative to the best performing configurations for each baseline (FP-RFFs, circulant FP-RFFs, \NystromNS).
		\fi
	}
	\begin{tabular}{c c c c}
		\toprule
		& FP-RFFs & Cir. FP-RFFs & \Nystrom \\
		\midrule
		Census & 2.9x & 15.6x & 63.2x \\
		YearPred & 10.3x & 7.6x & 461.6x \\ 
		Covtype & 4.7x & 3.9x & 237.2x \\ 
		TIMIT & 5.1x & 2.4x & 50.9x \\ 
		\bottomrule
	\end{tabular}
	\label{tab:mem_saving}
	\vfigsp
	\vfigsp
\end{table}

\vsp

\section{\uppercase{Experiments}}
\label{sec:experiments}
\vsp

\ifbullets
\subsection{Empirical evaluation of LP-RFFs}

\begin{itemize}
	\item \textbf{Claim \#6}: LP-RFFs outperform FP-RFFs and FP-\Nystrom, under a fixed memory budget.
	\begin{itemize}
		\item \textbf{Plot 6}: Performance vs. memory for LP-RFFs, FP-RFFs, and FP-\NystromNS, on TIMIT, YearPred, and CovType.
	\end{itemize}
	\item \textbf{Claim \#7 (optional)}: LP-RFFs attain similar performance to FP-RFFs for a fixed number of features, up until a threshold precision is reached, below which point performance degrades.
	\begin{itemize}
	\item \textbf{Plot 7}: Performance vs. \# features for LP-RFFs, FP-RFFs, and FP-\NystromNS, on TIMIT, YearPred, and CovType.
	\end{itemize}
\end{itemize}
\subsection{Generalization performance vs.\ $\Delta_1$,$\Delta_2$}

\begin{itemize}
	\item \textbf{Claim}: For a fixed number of features, as you vary the precision, performance degrades (roughly) linearly with $\Delta_2$.
	\begin{itemize}
		\item \textbf{Plot}: Performance vs.\ $\Delta_2$.  We will draw a line for each \# of features, where each point corresponds to a different precision.
	\end{itemize}
	\item \textbf{Claim}: Performance correlates well with $\max(1/(1-\Delta_1), \Delta_2)$.
	\begin{itemize}
		\item \textbf{Plot}: Plot a scatterplot of performance vs.\ $\max(1/(1-\Delta_1), \Delta_2)$.
	\end{itemize}
	\item \textbf{Minor discussion}: Why does performance vary approximately linearly in $\Delta_2$, and not linearly in $\Delta_2/(1+\Delta_2)$ as the upper bound would predict?
	\begin{itemize}
		\item \textbf{Argument}: Upper bound is too loose, and does not reflect influence of $\Delta_2$ properly. In particular, the upper bound on the bias term is entirely in terms of $\Delta_1$, but in experiments we see that $\Delta_2$ has large impact on the bias term, and the impact is roughly linear. We also have some hand-wavey theory to explain impact of $\Delta_2$ on bias term.
	\end{itemize}
\end{itemize}

\fi

In this section, we empirically demonstrate the performance of LP-RFFs under a memory budget, and show that $(\Delta_1,\Delta_2)$ are predictive of generalization performance.
\ifcamera
We show in Section~\ref{subsec:full_run} that LP-RFFs can attain the same performance as FP-RFFs and \NystromNS, while using 3x-10x and 50x-460x less memory.
In Section~\ref{subsec:gen_perf_and_delta}, we show the strong alignment between $(\Delta_1, \Delta_2)$ and generalization performance, once again validating the importance of this measure.
\else
We show in Section~\ref{subsec:full_run} that LP-RFFs can attain the same performance as full-precision RFFs and \NystromNS, while using 3x-10x and 50x-460x less memory, respectively.
In Section~\ref{subsec:gen_perf_and_delta}, we demonstrate the strong alignment between $(\Delta_1, \Delta_2)$ and generalization performance, once again validating the importance of this measure of kernel approximation error.
\fi
\vsp

\subsection{Empirical Evaluation of LP-RFFs}
\label{subsec:full_run}
\vsp
To empirically demonstrate the generalization performance of LP-RFFs, we compare their performance to FP-RFFs, circulant FP-RFFs, and \Nystrom features for various memory budgets.
We use the same datasets and protocol as the large-scale \Nystrom vs.\ RFF comparisons in  Section~\ref{subsec:nys_vs_rff_exp};
the only significant additions here are that we also evaluate the performance of circulant FP-RFFs, and LP-RFFs for precisions $b \in \{1,2,4,8,16\}$.
Across our experiments, we compute the total memory utilization as the sum of all the components in Table~\ref{table:mem-usage}.
We note that all our low-precision experiments are done in \textit{simulation}, which means we store the quantized values as full-precision floating-point numbers.
We report average results from three random seeds, with error bars showing standard deviations.
For more details about our experiments, see Appendix~\ref{app:lprff_exp_details}.
\ifcamera
We use the above protocol to validate the following claims on the performance of LP-RFFs.\footnote{
	Our code: \url{github.com/HazyResearch/lp_rffs}.
}
\else
We use the above protocol to validate the following claims on the empirical performance of LP-RFFs.\footnote{To replicate our experimental results, please use our GitHub repository (\url{github.com/HazyResearch/lp_rffs}).}
\fi
\vsp

\begin{figure*}
	\vfigsp
	\centering
	\begin{tabular}{@{\hskip -0.0in}c@{\hskip -0.0in}c@{\hskip -0.0in}c@{\hskip -0.0in}c@{\hskip -0.0in}}
		\includegraphics[width=.245\linewidth]{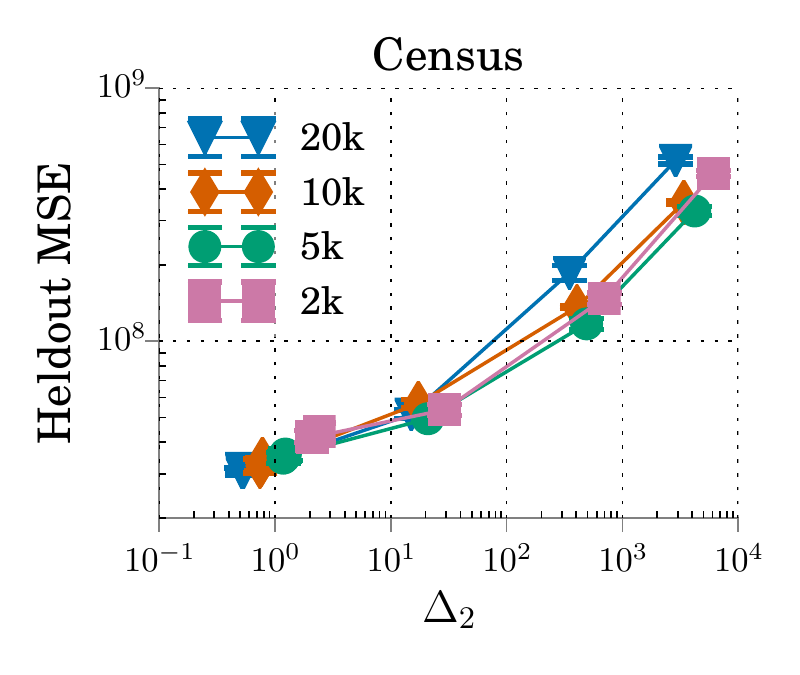} &
		\includegraphics[width=.245\linewidth]{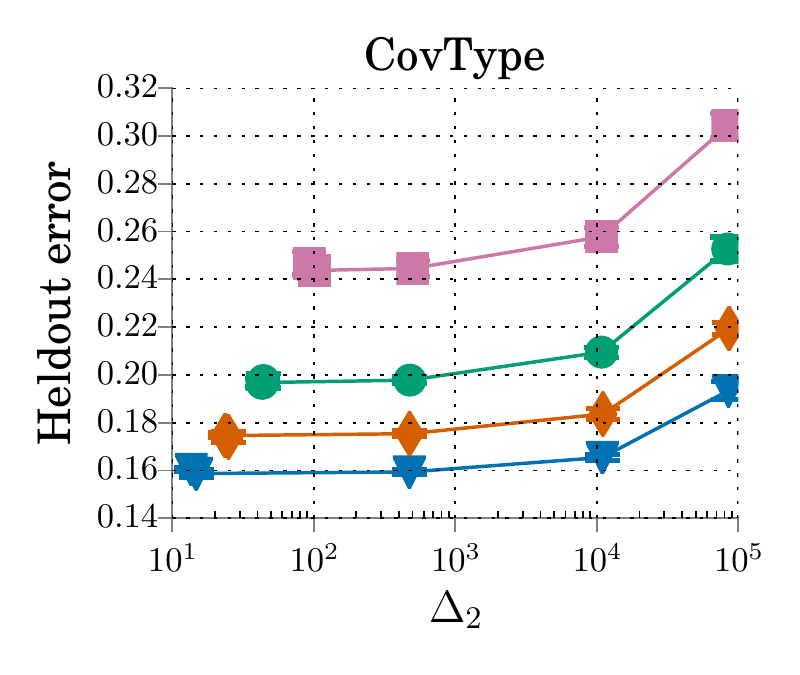} &
		\includegraphics[width=.245\linewidth]{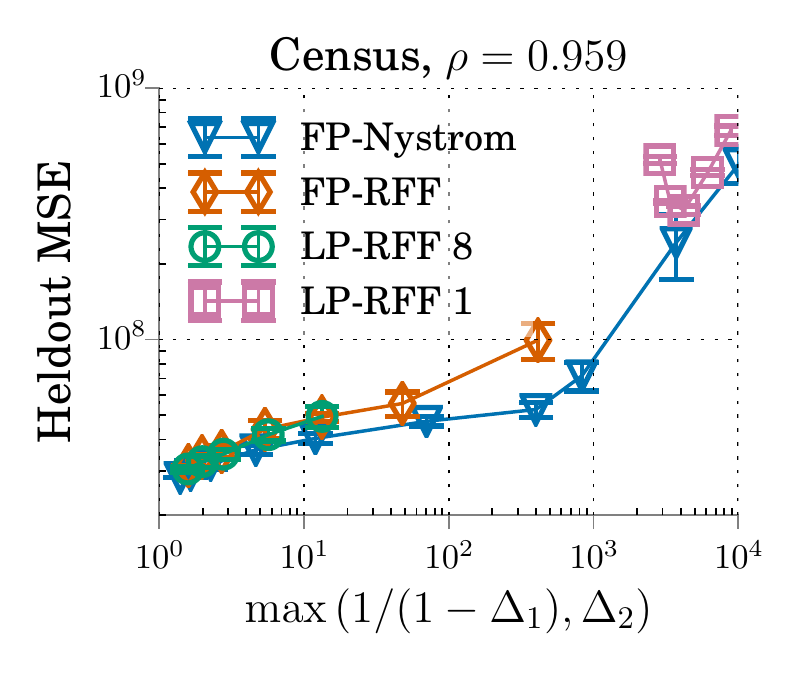} &
		\includegraphics[width=.245\linewidth]{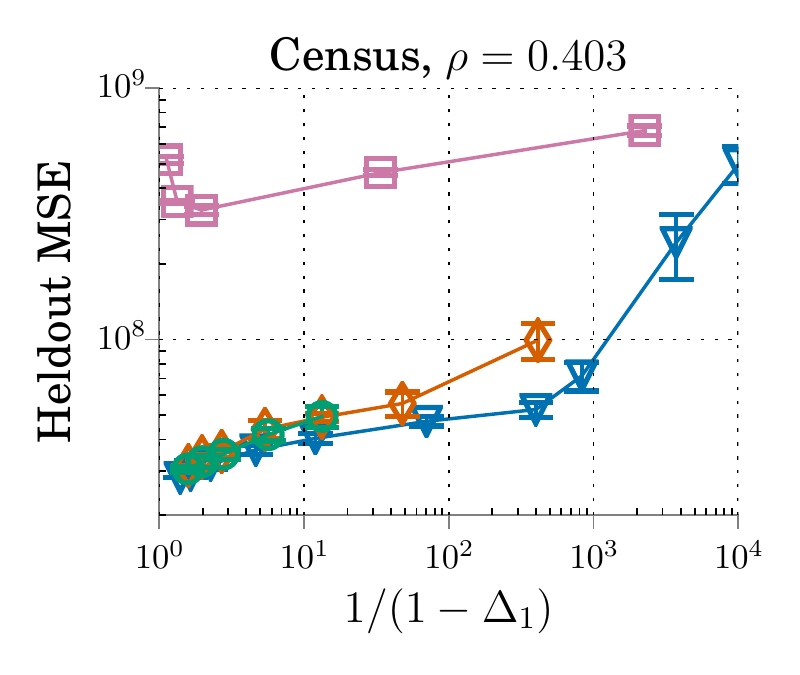} \\
	\end{tabular}
	\vfigsp
	\vfigsp
	\caption{
		Generalization perf.\ vs.\ $\Delta_2$ (left plots, shared legend), and vs.\ $1/(1-\Delta_1)$ and $\max\left( 1/(1-\Delta_1), \Delta_2 \right)$ (right plots, shared legend).
		Left: heldout performance deteriorates as $\Delta_2$ gets larger due to lower precision.
		Right: $\max\left( 1/(1-\Delta_1), \Delta_2 \right)$ aligns well with performance across LP-RFF precisions (Spearman rank correlation coefficient $\rho=0.959$), while $1/(1-\Delta_1)$ aligns poorly ($\rho=0.403$).
		See Appendix~\ref{subsec:gen_perf_and_delta_app} for CovType results.
	}
	\label{fig:gen_delta_correlation}
	\vfigsp
\end{figure*}

\vsp \vsp
\paragraph{LP-RFFs can outperform full-precision features under memory budgets.}
In Figure \ref{fig:generalization_col_full}, we plot the generalization performance for these experiments as a function of the total training memory for TIMIT, YearPred, and CovType.
We observe that LP-RFFs attain better generalization performance than the full-precision baselines under various memory budgets.
\ifcamera \else
It's important to note that LP-RFFs (using circulant projections) outperform circulant FP-RFFs,
showing that the performance improvement under a memory budget comes primarily from using low-precision features.
For clarity, we only plot LP-RFF results for 4 bits. 
\fi
To see results for all precisions, as well as results on additional benchmark datasets (Census, Adult, Cod-RNA, CPU, Forest) from the UCI repository \citep{uci}, see Appendix~\ref{app:lprff_exp_details}.

\vsp
\paragraph{LP-RFFs can match the performance of full-precision features with significantly less memory.}
\ifcamera
In Table~\ref{tab:mem_saving} we present the compression ratios we achieve with LP-RFFs relative to the best performing baseline methods.
For each baseline (FP-RFFs, circulant FP-RFFs, \NystromNS), we find the smallest LP-RFF model, as well as the smallest baseline model, which attain within $10^{-4}$ relative performance of the best-performing baseline model;
we then compute the ratio of the memory used by these two models (baseline/LP-RFF) for three random seeds, and report the average.
We can see that LP-RFFs demonstrate significant memory saving over FP-RFFs, circulant FP-RFFs, and \NystromNS, attaining compression ratios of 2.9x-10.3x, 2.4x-15.6x, and 50.9x-461.6x, respectively.
\else
In Table~\ref{tab:mem_saving} we present the compression ratios we achieve with LP-RFFs relative to the best performing baseline methods, 
while still attaining generalization performance within $10^{-4}$ relative performance of the full-precision baseline.
We measure this as follows: For each baseline (FP-RFFs, circulant FP-RFFs, \NystromNS), we find the best performing model.
We then find the smallest LP-RFF model, as well as the smallest baseline model, which attain within $10^{-4}$ relative performance of the best-performing baseline model.
We report the ratio of the memory used by these two models (baseline/LP-RFF), that are both within $10^{-4}$ relative performance of the best baseline.
We compute this ratio for three random seeds, and report the average.
We can see that LP-RFFs demonstrate significant memory saving over FP-RFFs, circulant FP-RFFs, and \NystromNS, attaining compression ratios of 2.9x-10.3x, 2.4x-15.6x, and 50.9x-461.6x, respectively.
On the TIMIT dataset, there are 147 classes.
Thus the full-precision model parameters occupy a large portion of the total memory for all methods.
Nonetheless, even though these LP-RFF experiments only quantize feature mini-batches,
they still attain 5.1x and 2.4x compression relative to FP-RFFs and circulant FP-RFFs.
As discussed in Section~\ref{subsec:memory_utils}, there are a number of ways to reduce the memory occupied by the model parameters;
in Appendix~\ref{sec:halp}, we present results using a low-precision parameterization of the model \citep{halp18}, and attain similar performance to full-precision training.
\fi

\vsp 
\subsection{Generalization Performance vs.\ $(\Delta_1,\Delta_2)$}
\label{subsec:gen_perf_and_delta}
\vsp

In this section we show that $\Delta_1$ and $\Delta_2$ are together quite predictive of generalization performance across all the kernel approximation methods we have discussed. 
We first show that performance deteriorates for larger $\Delta_2$ values as we vary the precision of the LP-RFFs, when keeping the number of features constant (thereby limiting the influence of $\Delta_1$ on performance).
\ifcamera
We then combine this insight with our previous observation (Section~\ref{subsec:nys_vs_rff_revisited}) that performance scales with $\frac{1}{1-\Delta_1}$ in the full-precision setting by showing that across precisions the performance aligns well with $\max\big(\frac{1}{1-\Delta_1},\Delta_2\big)$.
\else
We then combine this insight with our observation from Section~\ref{subsec:nys_vs_rff_revisited} that performance scales with $1/(1-\Delta_1)$ in the full-precision setting (where the empirical influence of $\Delta_2$ on performance appears to be limited);
we do this by showing that across all our methods performance aligns well with $\max\big(\frac{1}{1-\Delta_1},\Delta_2\big)$.
\fi
For these experiments, we use the same protocol as for the $(\Delta_1,\Delta_2)$ experiments in Section~\ref{subsec:nys_vs_rff_revisited}, but additionally consider LP-RFFs for precisions $b \in \{1,2,4,8,16\}$.

We show in Figure~\ref{fig:gen_delta_correlation} (left plots) that for a fixed number of random Fourier features, performance deteriorates as $\Delta_2$ grows.
As we have shown in Figure~\ref{fig:theory_supporting} (left), $\Delta_1$ is primarily governed by the rank of the approximation matrix, and thus holding the number of features constant serves as a proxy for holding $\Delta_1$ roughly constant.
This allows us to isolate the impact of $\Delta_2$ on performance as we vary the precision.
\ifcamera\else
As we have also shown in Figure~\ref{fig:theory_supporting} (middle), using low-precision can increase the $\Delta_2$ significantly, and thus varying the precision gives us a powerful tool for studying the impact of $\Delta_2$ on generalization.
\fi

\ifcamera
To integrate the influence of $\Delta_1$ and $\Delta_2$ on generalization performance into a single scalar, we consider $\max\big(\frac{1}{1-\Delta_1},\Delta_2\big)$.
In Figure~\ref{fig:gen_delta_correlation} (right plots) we show that when considering both low-precision and full-precision features, 
$\max\big(\frac{1}{1-\Delta_1},\Delta_2\big)$ aligns well with performance ($\rho=0.959$, incorporating \textit{all} precisions),
while $\frac{1}{1-\Delta_1}$ aligns poorly ($\rho=0.403$).
\else
To integrate the influence of $\Delta_1$ and $\Delta_2$ on generalization performance into a single scalar, we consider $\max\big(\frac{1}{1-\Delta_1},\Delta_2\big)$.
In Figure~\ref{fig:gen_delta_correlation} (right plots) we
plot the performance of LP-RFFs, FP-RFFs, and \Nystrom as a function of this metric.
Strikingly, we observe across methods that this quantity is predictive of performance.
\fi

In Appendix~\ref{sec:app_generalization_bound} we argue that performance scales roughly as $\Delta_2$ instead of as $\Delta_2/(1+\Delta_2)$ (as suggested by Prop.~\ref{prop:alphabeta}) due to looseness in the Prop.~\ref{prop:alphabeta} bound.

\ifcamera \else
It is important to note that although the generalization bound from Proposition~\ref{prop:alphabeta} suggests that performance should scale roughly linearly in $\Delta_2/(1+\Delta_2)$, 
we observe in our experiments that it is roughly linear in $\Delta_2$ (Figures~\ref{fig:gen_delta_correlation}(a,b)).
In the appendix, we argue that this is due to looseness in the bound, and provide theoretical (App.~\ref{subsec:d1_d2_generalization}) and empirical (App.~\ref{subsec:delta2_bias_empirical}) support for this claim.
In short, $\frac{1}{1-\Delta_1} \hcR(f_K)$ serves as an upper bound for the bias term $\frac{\lambda^2}{n} \by^T(K+\lambda I)^{-2}\by$ in the generalization bound; 
though this upper bound is independent of $\Delta_2$, we show in experiments that $\Delta_2$ has a large impact on this bias term.
We also provide theoretical justification for the reason $\Delta_2$ influences the bias term.
\fi

\vsp

\section{\uppercase{Related Work}}
\label{sec:relwork}
\vsp
\ifcamera
\paragraph{Low-Memory Kernel Approximation}
For RFFs, there has been work on using structured random projections \citep{fastfood,yu15,yu16}, and feature selection \citep{sparseRKS, may2016} to reduce memory utilization.
Our work is orthogonal, as LP-RFFs can be used with both. For \NystromNS, there has been extensive work on improving the choice of landmark points, and reducing the memory footprint in other ways \citep{ensemble09,fastpred14,meka14,musco17}.
In our work, we focus on the effect of \textit{quantization} on generalization performance per bit, and note that RFFs are much more amenable to quantization.
For our initial experiments quantizing \Nystrom features, see Appendix~\ref{app:other_results}.
\vsp \vsp \vsp \vsp
\paragraph{Low Precision for Machine Learning}
There has been much recent interest in using low precision for accelerating training and inference of machine learning models, as well as for model compression \citep{gupta15,hogwild15,hubara16,halp18,desa17,han15}.
There have been many advances in hardware support for low precision as well \citep{tpu17,brainwave17}.

This work is inspired by the \Nystrom vs.\ RFF experiments in the PhD dissertation of \citet{maythesis}, and provides a principled understanding of the prior results.
For more related work discussion, see Appendix~\ref{sec:relwork_full}.

\else
Our work is not the first to attempt to minimize the memory footprint of kernel approximation methods.
For RFFs, there has been work on using structured random projections \citep{fastfood,yu15,sphereRKS}, and feature selection \citep{sparseRKS, may2016} to reduce memory utilization.
Our work is orthogonal to these, because LP-RFFs can be used in conjunction with both.
For \NystromNS, there has been extensive work on improving the choice of landmark points, and reducing the memory footprint in other ways \citep{kmeans08,ensemble09,fastpred14,meka14,musco17}.
In our work, we study the effect of \textit{quantization} on generalization performance for RFFs under memory constraints.
We focus on RFFs because the feature generation component (landmark points and projection matrix) for \Nystrom is very memory-intensive, and thus limits the relative memory savings attainable via feature quantization.
For our initial experiments quantizing \Nystrom features, see Appendix~\ref{app:other_results}.

From a theoretical perspective, there has been a lot of work analyzing the generalization performance of kernel approximation methods \citep{rahimi08kitchen,cortes10,sutherland15,rudi17,avron17}.
The work most relevant to ours is that of \citet{avron17}, which defines $\Delta$-spectral approximation and bounds the generalization performance of kernel approximation methods in terms of $\Delta$.
This approach differs from works which evaluate kernel approximation methods in terms of the Frobenius or spectral norms of their kernel approximation matrices \citep{cortes10,gittens16,qmc,sutherland15,yu16,dao17}.
Our work shows the promise of \citeauthor{avron17}'s approach, and builds upon it.

On the topic of scaling kernel methods to large datasets, there have been a few notable recent papers.
\citet{block16} propose a distributed block coordinate descent method for solving large-scale least squares problems using the \Nystrom method or RFFs.
The recent work of \citet{may2017} uses a single GPU to train large RFF models on speech recognition datasets, %
showing comparable performance to fully-connected deep neural networks.
That work was limited by the number of features that could fit on a single GPU,
and thus our proposed method could help scale these results.

There has been much recent interest in the topic of low-precision for accelerating training and/or inference of machine learning models, as well as for model compression \citep{gupta15,hogwild15,hubara16,halp18,desa17,han15}. 
There have also been many advances in hardware support for low-precision \citep{tpu17,brainwave17}.
These improvements in hardware could dramatically speed up the training time of our method.

This work is inspired by the experiments comparing \Nystrom and RFFs under a memory budget in the PhD dissertation of one of the first authors \citep{maythesis}.
The current work provides a principled understanding of these prior results by showing that the relative performance of these methods can largely be explained in terms of our $(\Delta_1,\Delta_2)$ measure of kernel approximation error.
Based on this understanding, we propose LP-RFFs as a way of attaining improved generalization performance under a memory budget.
\fi
\vsp
\section{\uppercase{Conclusion}}
\label{sec:conclusion}
\vsp

\ifcamera
We defined a new measure of kernel approximation error and
\else
We defined a measure of kernel approximation error and---$(\Delta_1,\Delta_2)$-spectral approximation---and 
\fi
demonstrated its close connection to the empirical and theoretical generalization performance of kernel approximation methods.
\ifcamera
Inspired by this measure, we proposed LP-RFFs and showed they can attain improved generalization performance under a memory budget in theory and in experiments.
We believe these contributions provide fundamental insights into the generalization performance of kernel approximation methods, and hope to use these insights to scale kernel methods to larger and more challenging tasks.
\else
Inspired by this measure, we proposed low-precision random Fourier features (LP-RFFs) to attain high-rank approximations under a memory budget.
We showed that LP-RFFs can attain improved generalization performance under a memory budget, in theory and in experiments.
We believe these contributions provide fundamental insight into the field of kernel approximation,
and into which types of approximations lead to strong generalization performance.
We hope to use these advances to scale kernel methods to larger and more challenging tasks
across domains such as speech recognition, computer vision, and natural language processing.
\fi

\vsp

\subsubsection*{Acknowledgements}
\label{sec:ack}
\vsp
We thank Michael Collins for his helpful guidance on the \Nystrom vs.\ RFF experiments in Avner May's PhD dissertation \citep{maythesis}, which inspired this work.
We also thank Jared Dunnmon, Albert Gu, Beliz Gunel, Charles Kuang, Megan Leszczynski, Alex Ratner, Nimit Sohoni, Paroma Varma, and Sen Wu for their helpful discussions and feedback on this project.

We gratefully acknowledge the support of DARPA under Nos.\ FA87501720095 (D3M) and FA86501827865 (SDH), NIH under No.\ N000141712266 (Mobilize), NSF under Nos.\ CCF1763315 (Beyond Sparsity) and CCF1563078 (Volume to Velocity), ONR under No.\ N000141712266 (Unifying Weak Supervision), the Moore Foundation, NXP, Xilinx, LETI-CEA, Intel, Google, NEC, Toshiba, TSMC, ARM, Hitachi, BASF, Accenture, Ericsson, Qualcomm, Analog Devices, the Okawa Foundation, and American Family Insurance, and members of the Stanford DAWN project: Intel, Microsoft, Teradata, Facebook, Google, Ant Financial, NEC, SAP, and VMWare. The U.S.\ Government is authorized to reproduce and distribute reprints for Governmental purposes notwithstanding any copyright notation thereon. Any opinions, findings, and conclusions or recommendations expressed in this material are those of the authors and do not necessarily reflect the views, policies, or endorsements, either expressed or implied, of DARPA, NIH, ONR, or the U.S.\ Government.

\bibliography{ref}
\bibliographystyle{plainnat}
\clearpage

\onecolumn

\appendix

\section{\uppercase{Notation and background}}
\label{sec:background_appendix}
In this appendix, we first discuss the notation we use throughout the paper, and then provide an overview of random Fourier features (RFFs) \citep{rahimi07random} and the \Nystrom method \citep{nystrom}. 
After this, we briefly extend our discussion in Section~\ref{sec:refine} on fixed design kernel ridge regression.

\subsection{Notation}
We use $\{(x_i,y_i)\}_{i=1}^n$ to denote a training set, for $x_i \in \RR^d$, and $y_i \in \cY$, where $\cY = \RR$ for regression, and $\cY = \{1,\ldots,c\}$ for classification.
We let $K\in \RR^{n \times n}$ denote the kernel matrix corresponding to a kernel function $k\colon\RR^d\times\RR^d\rightarrow \RR$, where $K_{ij} = k(x_i,x_j)$, and let $\tK$ denote an approximation to $K$.
We let $z\colon\RR^d\rightarrow\RR^m$ denote a feature map for approximating a kernel function, such that $\tK_{ij} = z(x_i)^T z(x_j)$.
We use $s$ to denote the size of the mini-batches during training, and $b$ to denote the precision used for the random features.
We let $\|K\|_2$ and $\|K\|_F$ denote the spectral and Frobenius norms of a matrix $K$, respectively; if the subscript is not specified, $\|K\|$ denotes the spectral norm.
For vectors $x$, $\norm{x}$ will denote the $\ell_2$ norm of $x$, unless specified otherwise.
$I_n$ will denote the $n\times n$ identity matrix.
For symmetric matrices $A$ and $B$, we will say $A \preceq B$ if $B-A$ is positive semidefinite.
We will use $\lambda_i(A)$ to denote the $i^{th}$ largest eigenvalue of $A$, and $\lambda_{\max}(A)$, $\lambda_{\min}(A)$ to denote the largest and smallest eigenvalues of $A$, respectively.

\subsection{Kernel Approximation Background}
The core idea behind kernel approximation is to construct a feature map $z\colon \cX\rightarrow\RR$ such that $z(x)^T z(y) \approx k(x,y)$.
Given such a map, one can then learn a linear model on top of $\{(z(x_i),y_i)\}_{i=1}^n$, and this model will approximate the model trained using the exact kernel function.
We now review RFFs and the \Nystrom method, two of the most widely used and studied methods for kernel approximation.

\paragraph{Random Fourier features (RFFs)}
For shift-invariant kernels ($k(x,x') = \hat{k}(x-x')$), the random Fourier 
feature method \citep{rahimi07random} constructs a random feature representation 
$z(x) \in \RR^m$ such that $\expect{}{z(x)^T z(x')} = k(x,x')$.
This construction  is based on Bochner's Theorem, which states that any positive definite kernel is 
equal to the Fourier transform of a nonnegative measure.
This allows for performing Monte Carlo approximations of this Fourier transform in order to approximate the function.
The resulting features have the following functional form: 
$z_i(x) = \sqrt{2/m}\cos(w_i^Tx + a_i)$, where $w_i$ is drawn from the inverse Fourier
transform of the kernel function $\hat{k}$, and $a_i$ is drawn uniformly from $[0,2\pi]$ (see Appendix A in \citet{may2017} for a derivation).

One way of reducing the memory required for storing $W=[w_1,\ldots, w_m]$, is to replace $W$ by a structured matrix; in this work, we let $W$ be a concatenation of many square circulant random matrices \citep{yu15}.

\paragraph{\Nystrom method}
The \Nystrom method constructs a finite-dimensional feature representation
$z(x) \in \RR^m$ such that $\Dotp{z(x),z(x')} \approx k(x,x')$.  It does this
by picking a set of landmark points $\{\hx_1,\ldots,\hx_m\} \in \cX$,
and taking the SVD $\hK = U\Lambda U^T$ of the $m$ by $m$ 
kernel matrix $\hK$ corresponding to these landmark points 
($\hK_{i,j} = k(\hx_i,\hx_j)$).  The \Nystrom representation for a point $x \in \cX$
is defined as $z(x) = \Lambda^{-1/2} U^T k_x$, where $k_x = [k(x,\hx_1),\ldots,k(x,\hx_m)]^T$.
Letting $K_{m,n} = [k_{x_1},\ldots,k_{x_n}] \in \RR^{m\times n}$, 
the \Nystrom method can be thought of as an efficient low-rank approximation
$K \approx K_{m,n}^T U \Lambda^{-1/2}\Lambda^{-1/2} U^T K_{m,n}$ of the full
$n$ by $n$ kernel matrix $K$ corresponding to the full dataset $\{x_i\}_{i=1}^n$.
One can also consider the lower-dimensional \Nystrom representation
$z_r(x) = \Lambda_r^{-1/2} U_r^T k_x \in \RR^r$, where only the top $r$ eigenvalues and
eigenvectors of $\hK$ are used, instead of all $m$.  In this paper, we will always use $m=r$,
and thus will not specify the subscript $r$.

\subsection{Fixed Design Kernel Ridge Regression}
\label{subsec:app_fix_design}
We consider the problem of fixed design kernel ridge regression, which has a closed-form equation for the generalization error, making it a particularly tractable problem to analyze.
In fixed design regression, one is given a set of labeled points $\{(x_i,y_i)\}_{i=1}^n$, where $x_i \in \RR^d$, $y_i = \by_i + \eps_i \in \RR$, and the $\eps_i$ are zero-mean uncorrelated random variables with shared variance $\sigma^2 > 0$; here, the $\by_i$ represent the ``true labels.''
Given such a sample, the goal is to learn a regressor $f(x)$ such that $\cR(f) = \expect{\eps}{\frac{1}{n}\sum_{i=1}^n (f(x_i) - \by_i)^2}$ is small.
Note that for a fixed learning method, the learned regressor $f$ can be seen as a random function based on the random label noise $\eps_i$.
One approach to solving this problem is kernel ridge regression.
In kernel ridge regression, one chooses a kernel function $k:\RR^d\times\RR^d\rightarrow \RR$, and a regularizing constant $\lambda$, and learns a function of the form $f(x) = \sum_i \alpha_i k(x,x_i)$.
Letting $K\in\RR^{n\times n}$ denote the kernel matrix such that $K_{ij} = k(x_i,x_j)$, and $y = (y_1,\ldots,y_n)$, the closed-form solution for this problem (the one minimizing the regularized empirical loss), is $\alpha = (K+\lambda I)^{-1}y$.
It is then easy to show \citep{alaoui15} that the expected error of this regressor $f_K$ under the fixed design setting is
\begin{equation*}
\cR(f_K) = \frac{1}{n}\lambda^2 \by^T(K+\lambda I)^{-2}\by + \frac{1}{n}\sigma^2 Tr\Big(K^2(K+\lambda \id)^{-2}\Big),
\end{equation*}
where $\by = (\by_1,\ldots,\by_n)$ is the vector of ``true labels.''

\section{\uppercase{Generalization bounds for fixed design regression}}
\label{sec:app_generalization_bound}
\subsection{Generalization Bound in Terms of $(\Delta_1,\Delta_2)$}
\label{subsec:generalization_and_rel_spec_dist}

\begin{customprop}{1}{(Extended from \citep{avron17})}
	Suppose $\tK + \lambda I$ is $(\Delta_1, \Delta_2)$-spectral approximation of $K+\lambda I$, for $\Delta_1 \in [0,1)$ and $\Delta_2 \geq 0$. Let $m$ denote the rank of $\tK$, and let $f_{K}$ and $f_{\tK}$ be the kernel ridge regression estimators learned using these matrices, with regularizing constant $\lambda \geq 0$ and label noise variance $\sigma^2 < \infty$. Then
	\begin{equation}
	\cR(f_{\tK}) \leq \frac{1}{1-\Delta_1} \hcR(f_K) +  \frac{\Delta_2}{1+\Delta_2}\frac{m}{n}\sigma^2,
	\label{eq:risk_bound_app}
	\end{equation}
	where $\cR$ (expected risk) and $\hcR$ (upper bound on $\cR$) are defined in Section~\ref{subsec:delta12}.

	\label{prop:alphabeta_app}
\end{customprop}

\begin{proof}
	This proof closely follows the proof of Lemma 2 in \citet{avron17}, with the primary difference being that we replace $(1-\Delta)$ and $(1+\Delta)$ with $(1-\Delta_1)$ and $(1+\Delta_2)$, respectively.

	We begin by replacing $K$ with $\tK$ in the definition for $\hcR(f_{K})$:
	$$\cR(f_{\tK}) \leq  \frac{1}{n}\lambda \by^T(\tK+\lambda I)^{-1}\by + \frac{1}{n}\sigma^2 \tr\Big(\tK(\tK+\lambda I)^{-1}\Big) =  \hcR(f_{\tK}).$$
	We now continue this chain of inequalities, using the fact that $A\preceq B$ implies $B^{-1}\preceq A^{-1}$.  Thus, $(1-\Delta_1)(K+\lambda I) \preceq \tK + \lambda I \Rightarrow (\tK + \lambda I)^{-1} \preceq (1-\Delta_1)^{-1}(K+\lambda I)^{-1} \Rightarrow \by^T(\tK+\lambda I)^{-1}\by \leq  (1-\Delta_1)^{-1} \by^T(K+\lambda I)^{-1} \by$.  This upper bounds the first term in the above sum.
	
	We now consider the second term.  Let $m = \rank(\tK)$, and let $s_{\lambda}(\tK) = \tr\Big(\tK(\tK+\lambda I)^{-1}\Big)$.
	
	\begin{eqnarray*}
		s_{\lambda}(\tK) &=& \tr\Big(\tK(\tK+\lambda I)^{-1}\Big) \\
		&=& \sum_{i=1}^m \frac{\lambda_i(\tK)}{\lambda_i(\tK) + \lambda} \\
		&=& m -\sum_{i=1}^m \frac{\lambda}{\lambda_i(\tK) + \lambda} \\
		&\leq& m -\sum_{i=1}^m \frac{\lambda}{(1+\Delta_2)(\lambda_i(K) + \lambda)} \\
		&=& m -(1 + (1+\Delta_2)^{-1} - 1)\sum_{i=1}^m \frac{\lambda}{\lambda_i(K) + \lambda} \\
		&=& m - \sum_{i=1}^m \frac{\lambda}{\lambda_i(K) + \lambda} + \frac{\Delta_2}{1+\Delta_2}\sum_{i=1}^m\frac{\lambda}{\lambda_i(K) + \lambda} \\
		&\leq& n - \sum_{i=1}^n \frac{\lambda}{\lambda_i(K) + \lambda} + \frac{\Delta_2}{1+\Delta_2}m \\
		&=& s_{\lambda}(K) +\frac{\Delta_2}{1+\Delta_2}m
	\end{eqnarray*}
	\begin{eqnarray*}
		&\leq& \frac{1}{1-\Delta_1}  s_{\lambda}(K) +\frac{\Delta_2}{1+\Delta_2}m \\
	\end{eqnarray*}
	
	Combining the above results, we get that:
	
	\begin{eqnarray*}
		\cR(f_{\tK}) &\leq&  \frac{1}{n}\lambda \by^T(\tK+\lambda I)^{-1}\by + \frac{1}{n}\sigma^2 s_{\lambda}(\tK) \\
		&\leq& \frac{1}{n}\lambda \bigg(\frac{1}{1-\Delta_1} \by^T(K+\lambda I)^{-1} \by\bigg) +  \frac{1}{n}\sigma^2 \bigg(\frac{1}{1-\Delta_1}  s_{\lambda}(K) +\frac{\Delta_2}{1+\Delta_2}m\bigg)\\
		&=& \frac{1}{1-\Delta_1} \bigg(\frac{1}{n}\lambda \by^T(K+\lambda I)^{-1}\by + \frac{1}{n}\sigma^2 s_{\lambda}(K)\bigg) +  \frac{\Delta_2}{1+\Delta_2}\frac{m}{n}\sigma^2 \\
		&=& \frac{1}{1-\Delta_1} \hcR(f_K) +  \frac{\Delta_2}{1+\Delta_2}\frac{m}{n}\sigma^2
	\end{eqnarray*}
\end{proof}

\paragraph{Remark} Above, $\Delta_1 \in [0,1]$ and $\Delta_2 \geq 0$. Note that as $\Delta_1$ approaches 1, the above upper bound diverges to infinity.  Whereas as $\Delta_2$ approaches $\infty$, the second term in the upper bound approaches $\frac{m}{n}\sigma^2$. This suggests that choosing $\tK$ and $\lambda$ such that $\Delta_1$ does not get too close to 1 is very important. A necessary condition for $(1-\Delta_1)(K+\lambda I)\preceq \tK + \lambda I$ is for $\tK$ to be high rank, as discussed in Section~\ref{sec:refine}.

\subsection{Heuristic Effort to Better Understand Influence of $(\Delta_1,\Delta_2)$ on Generalization Error}
\label{subsec:d1_d2_generalization}
Here we present a heuristic argument to explain how $\Delta_1$ and $\Delta_2$ in the
spectral approximation affects the generalization error.
We are particularly interested in demonstrating that $\Delta_2$ can have an important influence on the bias squared term ($\frac{\lambda^2}{n} \by^T(\tilde{K}+\lambda I)^{-2}\by$) of the generalization error,
even though the upper bound $\frac{\lambda^2}{n} \by^T(\tilde{K}+\lambda I)^{-2}\by \leq \frac{1}{1-\Delta_1} \hcR(f_K)$ on the bias squared term (from Proposition~\ref{prop:alphabeta}) is only in terms of $\Delta_1$.

Suppose that the approximate kernel matrix $\tilde{K}$ is a
$(\Delta_1, \Delta_2)$-spectral approximation of the true kernel matrix $K$, that is:
\begin{equation*}
(1 - \Delta_1) (K + \lambda I_n) \preceq \tilde{K} + \lambda I_n \preceq (1 + \Delta_2) (K + \lambda I_n).
\end{equation*}
We focus on the bias squared term of $\tilde{K}$ ($\frac{\lambda^2}{n} \bar{y}^T (\tK + \lambda I_n)^{-2} \bar{y}$), and compare it to the bias squared
term of $K$.
Following Theorem 15 of~\citet{musco17}, we first bound the bias (not squared):
\begin{align*}
\norm{(\tilde{K} + \lambda I_n)^{-1} \bar{y}}
&\leq \norm{(K + \lambda I_n)^{-1} \bar{y}} + \norm{((\tilde{K} + \lambda I_n)^{-1} - (K + \lambda
	I_n)^{-1}) \bar{y}}  \\ %
&= \norm{(K + \lambda I_n)^{-1} \bar{y}} + \norm{(\tilde{K} + \lambda I_n)^{-1} ((K + \lambda
	I_n) - (\tilde{K} + \lambda I_n)) (K + \lambda I_n)^{-1} \bar{y}} \\
&= \norm{(K + \lambda I_n)^{-1} \bar{y}} + \norm{(\tilde{K} + \lambda I_n)^{-1} (K -
	\tilde{K}) (K + \lambda I_n)^{-1} \bar{y}} \\
&\leq \norm{(K + \lambda I_n)^{-1} \bar{y}} + \norm{(\tilde{K} + \lambda I_n)^{-1} (K -
	\tilde{K})} \norm{(K + \lambda I_n)^{-1} \bar{y}} \\ %
&= \norm{(K + \lambda I_n)^{-1} \bar{y}} \left( 1 + \norm{(\tilde{K} + \lambda I_n)^{-1}
	(K - \tilde{K})} \right). \numberthis \label{eqn:bias_bound}
\end{align*}
Now it reduces to bounding $\norm{(\tilde{K} + \lambda I_n)^{-1} (K - \tilde{K})}$.
As $\tilde{K} + \lambda I_n \preceq (1 + \Delta_2) (K + \lambda I_n)$, we have
\begin{equation*}
\tilde{K} - K \preceq \Delta_2 (K + \lambda I_n) \preceq \frac{\Delta_2}{1 - \Delta_1}  (\tilde{K} + \lambda I_n) \preceq \frac{1 + \Delta_2}{1 - \Delta_1} (\tilde{K} + \lambda I_n).
\end{equation*}
Similarly, since $K + \lambda I_n \preceq \frac{1}{1 - \Delta_1} (\tilde{K} + \lambda I_n)$,
\begin{equation*}
K - \tilde{K} \preceq \frac{\Delta_1}{1 - \Delta_1} (\tilde{K} + \lambda I_n) \preceq \frac{1 + \Delta_2}{1 - \Delta_1} (\tilde{K} + \lambda I_n).
\end{equation*}
Hence
\begin{equation*}
- \frac{1 + \Delta_2}{1 - \Delta_1} (\tilde{K} + \lambda I_n) \preceq K - \tilde{K} \preceq \frac{1 + \Delta_2}{1 - \Delta_1} (\tilde{K} + \lambda I_n).
\end{equation*}
Because $t \mapsto t^2$ is not operator monotone, it is not easy to obtain a bound on
$(K - \tilde{K})^2$ from the bound on $K - \tilde{K}$.
However, under a restricted setting where $K$ and $\tilde{K}$ have the same eigenvectors,
we can square the corresponding eigenvalues to obtain
\begin{equation*}
(K - \tilde{K})^2 \preceq \frac{(1 + \Delta_2)^2}{(1 - \Delta_1)^2} (\tilde{K} + \lambda I_n)^2.
\end{equation*}
Thus
\begin{equation*}
(\tilde{K} + \lambda I_n)^{-1} (K - \tilde{K})^2 (\tilde{K} + \lambda I_n)^{-1} \preceq \frac{(1 + \Delta_2)^2}{(1 - \Delta_1)^2}.
\end{equation*}
And hence $\norm{(\tilde{K} + \lambda I_n)^{-1} (K - \tilde{K})} \leq \frac{1 + \Delta_2}{1 - \Delta_1}$.
Plugging this into the bound (Eq.~\ref{eqn:bias_bound}) yields
\begin{equation*}
\norm{(\tilde{K} + \lambda I_n)^{-1} \bar{y}}
\leq \left( 1 + \frac{1 + \Delta_2}{1 - \Delta_1} \right) \norm{(K + \lambda I_n)^{-1} \bar{y}}.
\end{equation*}
Thus
\begin{eqnarray*}
\frac{\lambda^2}{n} \bar{y}^T (\tilde{K} + \lambda I_n)^{-2} \bar{y}
&=& \frac{\lambda^2}{n} \norm{(\tilde{K} + \lambda I_n)^{-1} \bar{y}}^2 \\
&\leq& \left( 1 + \frac{1 + \Delta_2}{1 - \Delta_1} \right)^2 \frac{\lambda^2}{n} \norm{(K + \lambda I_n)^{-1} \bar{y}}^2\\
&=& \left( 1 + \frac{1 + \Delta_2}{1 - \Delta_1} \right)^2 \frac{\lambda^2}{n} \bar{y}^T (K + \lambda I_n)^{-2} \bar{y}.
\end{eqnarray*}
In other words, in this restricted setting the bias squared of $\tilde{K}$ is at most a factor
$(1 + \frac{1 + \Delta_2}{1 - \Delta_1})^2$ larger than the bias squared of $K$.
Though this heuristic analysis only holds when $K$
and $\tilde{K}$ have the same eigenvectors, it reveals the dependency of the
generalization performance on $\Delta_1$ and $\Delta_2$; 
in particular, it reveals that $\Delta_2$ could have an important influence on the bias squared term of the generalization error.

\subsection{The Empirical Influence of $\Delta_2$ on the Bias Squared Term}
\label{subsec:delta2_bias_empirical}
We now empirically validate that $\Delta_2$ can have a large impact on the bias squared term, as suggested by the theoretical discussion in the previous section.
The influence of $\Delta_2$ on the bias squared term helps explain our empirical observations on the influence of $\Delta_2$ on generalization performance from 
Section~\ref{subsec:gen_perf_and_delta}.
Though the generalization bound in Proposition~\ref{prop:alphabeta} suggests that performance should scale roughly linearly in $\Delta_2/(1+\Delta_2)$, we empirically found that generalization performance does not asymptote as $\Delta_2$ grows (as $\Delta_2/(1+\Delta_2)$ would suggest it would).
In this section, we empirically validate our hypothesis that this is due to looseness in the generalization bound.
Specifically, the expected mean squared error for fixed design kernel ridge regression is 
\[\cR(f_{\tilde{K}})=\frac{\lambda^2}{n} \by^T(\tilde{K}+\lambda I)^{-2}\by + \frac{\sigma^2}{n} \tr\Big(\tilde{K}^2(\tilde{K}+\lambda \id)^{-2}\Big),
\] where $\tilde{K}$ is an approximate kernel matrix.
We show in experiments that the bias squared term ($\frac{\lambda^2}{n} \by^T(\tilde{K}+\lambda I)^{-2}\by$) can be strongly influenced by $\Delta_2$,
even though the upper bound on it ($\frac{\lambda^2}{n} \by^T(\tilde{K}+\lambda I)^{-2}\by \leq \frac{1}{1-\Delta_1} \hcR(f_K)$) in Proposition~\ref{prop:alphabeta} is only in terms of $\Delta_1$.
\begin{figure}
	\centering
	\includegraphics[width=0.4\linewidth]{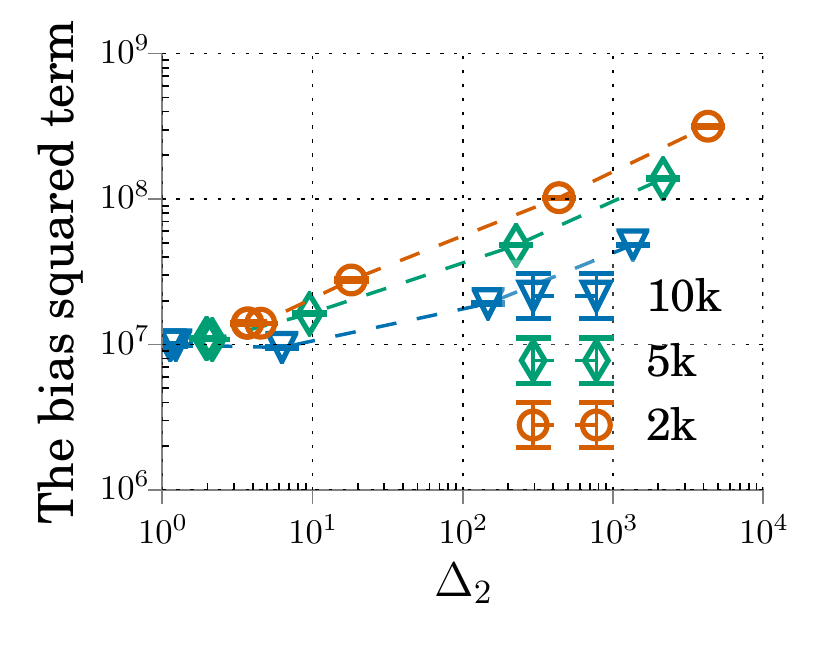}
	\caption{In the fixed design setting, the bias squared term of the generalization performance grows with $\Delta_2$.}
	\label{fig:bias_delta_right}
\end{figure}

In our experiments, we compute the value of the bias squared term and $\Delta_2$ on the Census dataset.
To gain statistically meaningful insights, we collect and average the value of $\frac{\lambda^2}{n} \by^T(\tilde{K}+\lambda I)^{-2}\by$ and $\Delta_2$ using 3 independent runs with different random seeds.
In Figure~\ref{fig:bias_delta_right}, we plot the value of $\frac{\lambda^2}{n} \by^T(\tilde{K}+\lambda I)^{-2}\by$ as a function of $\Delta_2$ for 3 different numbers of features;
by controlling the number of features, we can demonstrate the influence of $\Delta_2$ while $\Delta_1$ is held roughly fixed.
In each curve in Figure~\ref{fig:bias_delta_right}, the data points are collected from FP-RFFs, circulant FP-RFFs, as well as LP-RFFs using $\{1,2,4,8\}$ bit precision.
We can see that for each number of features, the value of $\frac{\lambda^2}{n} \by^T(\tilde{K}+\lambda I)^{-2}\by$ grows with $\Delta_2$.
These results demonstrate that the upper bound on the bias term in Proposition~\ref{prop:alphabeta} is quite loose, and is not capturing the influence of $\Delta_2$ properly.

\section{\uppercase{Theoretical guarantees for LP-RFFs}}
\label{sec:lprff_theory_appendix}
In this section, we first lower bound the probability that using LP-RFFs results in a kernel approximation matrix that is a $(\Delta_1,\Delta_2)$-spectral approximation of the exact kernel matrix (Section~\ref{sec:theory_d1_d2_app}).
We then present bounds on the Frobenius kernel approximation error for LP-RFFs (Section~\ref{sec:app_kernel_error}).

\subsection{$(\Delta_1,\Delta_2)$-Spectral Approximation Bounds for LP-RFFs}
\label{sec:theory_d1_d2_app}
As usual, let $K \in \mathbb{R}^{n \times n}$ denote the kernel matrix, and $Z \in \mathbb{R}^{n \times m}$ be the random Fourier feature matrix, where $n$ is
the number of data points and $m$ is the number of features.
We can write $Z = \frac{1}{\sqrt{m}} \begin{bmatrix} z_1, \dots, z_m \end{bmatrix}$ 
where $z_i$ are the (scaled) columns of $Z$.
Each entry of $Z$ has the form $\sqrt{2/m} \cos(w^T x + a)$ for some $w, x \in
\mathbb{R}^{d}$ and $a \in \mathbb{R}^{m}$, where $d$ is the dimension of the
original dataset.  Then $\E[z_i z_i^T] = K$, so $\E[Z Z^T] = K$.

Now suppose we quantize $Z$ to $b$ bits using the quantization method described in Section~\ref{subsec:method_details}, for some fixed $b \geq 1$.
Then the quantized feature matrix is $Z + C$ for some random 
$C \in \mathbb{R}^{n \times m}$ whose entries are independent conditioned on $Z$ 
(but not identically distributed) with $\E[C \mid Z] = 0$.
We can write $C = \frac{1}{\sqrt{m}} \begin{bmatrix} c_1, \dots, c_m \end{bmatrix}$
where $c_i$ are the (scaled) columns of $C$.
Moreover, the $c_i$ are independent conditioned on $Z$.
Defining $\delta^2_b \defeq \frac{2}{(2^b - 1)^2}$, 
the entries $C_{ij}$ have variance $\E[C_{ij}^2 \mid Z_{ij}] \leq \delta_b^2/m$ by Proposition~\ref{prop:var_bound} in Appendix~\ref{sec:app_kernel_error}.
In terms of the vectors $c_i$, we can also see that $\E[c_{i,j}^2 \mid Z_{ij}] \leq \delta_b^2$,
where $c_{i,j}$ denotes the $j^{th}$ element of $c_i$.

We first analyze the expectation of $(Z + C) (Z + C)^T$ (over both the
randomness of $Z$ and of $C$).
\begin{lemma}
$\E[(Z + C)(Z + C)^T] = K + D$, where $D \defeq \E[c_1 c_1^T] = s_b I_n$ is a multiple of the identity, for $0 \leq s_b \leq \delta_b^2$. $D$ does not depend on the number of random features $m$.
\label{lem:expectation_CCstar}
\end{lemma}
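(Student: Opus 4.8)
The plan is to expand $(Z+C)(Z+C)^T = ZZ^T + ZC^T + CZ^T + CC^T$ and take expectations term by term. The term $\E[ZZ^T] = K$ is already noted above. For the two cross terms I would condition on $Z$ and use $\E[C \mid Z] = 0$: by the tower rule, $\E[ZC^T] = \E\big[Z\,\E[C \mid Z]^T\big] = 0$, and symmetrically $\E[CZ^T] = 0$. So the whole claim reduces to computing $\E[CC^T]$ and showing it equals $s_b I_n$ for some $0 \le s_b \le \delta_b^2$ not depending on $m$.

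For $\E[CC^T]$, I would use the column decomposition $C = \frac{1}{\sqrt m}[c_1,\dots,c_m]$ to write $CC^T = \frac{1}{m}\sum_{i=1}^m c_i c_i^T$. The columns of $Z$ are i.i.d.\ (the pairs $(w_i,a_i)$ are i.i.d.), and the stochastic rounding is applied to each column with fresh independent coins, so the $c_i$ are i.i.d.; hence $\E[c_i c_i^T] = \E[c_1 c_1^T]$ for all $i$, giving $\E[CC^T] = \E[c_1 c_1^T] =: D$, which is manifestly independent of $m$. This already establishes $\E[(Z+C)(Z+C)^T] = K + D$.

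It remains to identify $D$. For the off-diagonal entries, fix $j \ne k$: conditioned on $Z$, the entries $c_{1,j}$ and $c_{1,k}$ are independent (the rounding is entrywise-independent) and each has zero conditional mean, so $\E[c_{1,j}c_{1,k} \mid Z] = 0$ and thus $D_{jk} = 0$. For the diagonal entries I would invoke the RFF structure: since $a_1 \sim \mathrm{Unif}[0,2\pi]$, for every fixed $w_1$ the quantity $w_1^T x_j + a_1 \bmod 2\pi$ is uniform on $[0,2\pi]$, so the feature value $Z_{j1} = \sqrt{2/m}\cos(w_1^T x_j + a_1)$ has the same (arcsine) marginal law for all $j$; because $c_{1,j}$ is produced from $Z_{j1}$ by the same randomized rounding map for every $j$, the law of $c_{1,j}$ — and in particular $\E[c_{1,j}^2]$ — is the same for all $j$. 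Calling this common value $s_b$ gives $D = s_b I_n$. Finally, $s_b \ge 0$ is immediate, and $s_b = \E\big[\E[c_{1,j}^2 \mid Z]\big] \le \delta_b^2$ by Proposition~\ref{prop:var_bound}.

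The only nontrivial point is the diagonal step: one must argue that $\E[c_{1,j}^2]$ is the \emph{same} for every coordinate $j$, and this genuinely relies on the phase randomization of RFFs — the variance of stochastic rounding is a nonlinear function of the feature value, not merely of its second moment, so knowing $\E[Z_{j1}^2] = k(x_j,x_j)$ would not be enough. Everything else is routine manipulation of conditional expectations together with the independence of the columns.
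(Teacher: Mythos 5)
Your proposal is correct and follows essentially the same route as the paper's proof: kill the cross terms via $\E[C\mid Z]=0$, reduce $\E[CC^T]$ to $\E[c_1c_1^T]$ by the i.i.d.\ columns, get diagonality from the entrywise (conditionally) independent zero-mean rounding noise, and get a constant diagonal from the fact that the uniform phase $a_1$ makes every $z_{1,j}$ equidistributed, with the bound $s_b \leq \delta_b^2$ from the variance proposition. The only cosmetic difference is that you expand $(Z+C)(Z+C)^T$ into four terms before reducing to a single column, while the paper reduces to $\E[(z_1+c_1)(z_1+c_1)^T]$ first; your handling of the off-diagonal entries via conditioning on $Z$ is, if anything, slightly more careful than the paper's phrasing.
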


\begin{proof}
\begin{equation*}
\E[(Z + C) (Z + C)^T] = \E\bigg[\frac{1}{m} \sum_{i=1}^{m} (z_i + c_i) (z_i + c_i)^T\bigg] = 
\E[(z_1 + c_1) (z_1 + c_1)^T]
\end{equation*}
Since $\E_{c_1}[c_1 \mid z_1] = 0$, it follows that
\begin{eqnarray*}
\E[(z_1 + c_1) (z_1 + c_1)^T] &=& \E_{z_1}\Big[\E_{c_1}[(z_1 + c_1) (z_1 + c_1)^T \mid z_1]\Big] \\
&=& \E_{z_1}[z_1z_1^T] + \E_{z_1}\Big[\E_{c_1}[c_1 c_1^T \mid z_1]\Big] \\
&=& K + \E[c_1 c_1^T]
\end{eqnarray*}
  It is clear that $D\defeq \E[c_1 c_1^T]$ is a diagonal matrix, because each element of $c_1$ is a zero-mean independent random variable.
  It is also easy to see that the $j^{th}$ entry on the diagonal of $D$ is equal to $\E[c_{1,j}^2 \mid z_{1,j}] \leq \delta_b^2$.
  Lastly, we argue that each element $z_{1,j}=\sqrt{2}\cos(w_1^T x_j + a_1)$ has the same distribution, because it is distributed the same as $\sqrt{2}\cos(a_1)$ for $a_1$ uniform in $[0,2\pi]$.
  Thus, $\E[c_{1,j}^2 \mid z_{1,j}]$ is independent of $j$. Letting $s_b \defeq \E[c_{1,1}^2 \mid z_{1,1}] \leq \delta_b^2$ completes the proof.
 
\end{proof}

With $D \defeq \E[c_1 c_1^T]$, we can use matrix concentration to show that the
quantized kernel matrix $\tK = (Z + C)(Z + C)^T$ is close to its
expectation $K + D$.
We first strengthen the matrix Bernstein inequality with intrinsic dimension
(Theorem 7.7.1 in \citet{tropp2015introduction}) by removing the requirement on
the deviation.\footnote{Theorem 7.7.1 in \citet{tropp2015introduction} requires that $t \geq \sqrt{v} + L/3$, where $t$, $v$, and $L$ are as defined in Theorem~\ref{thm:bernstein}.}

\begin{theorem}[Matrix Bernstein: Hermitian Case with Intrinsic Dimension] \label{thm:intdim-bernstein-herm}
Consider a finite sequence $\{ X_k \}$ of random Hermitian matrices of the same size,
and assume that
\begin{equation*}
  \E [X_k] = 0
  \quad\text{and}\quad
  \lambda_{\max}(X_k) \leq L
  \quad\text{for each index $k$.}
\end{equation*}
Introduce the random matrix
\begin{equation*}
  Y = \sum\nolimits_k X_k.
\end{equation*}
Let $V$ be a semidefinite upper bound for the matrix-valued variance $\var{}{Y}$:
\begin{equation*}
  V \succeq \var{}{Y} = \E [Y^2] = \sum\nolimits_k \E [X_k^2].
\end{equation*}
Define the intrinsic dimension bound and variance bound
\begin{equation*}
  \operatorname{intdim}(V) = \frac{\tr(V)}{\norm{V}}
  \quad\text{and}\quad
  v = \norm{V}.
\end{equation*}
Then, for $t \geq 0$,
\begin{equation} \label{eqn:intdim-bernstein-tail}
\Prob \left( \lambda_{\max}(Y) \geq t \right)
	\leq 4 \operatorname{intdim}(V) \cdot \exp\left( \frac{-t^2/2}{v + Lt/3} \right).
\end{equation}
\label{thm:bernstein}
\end{theorem}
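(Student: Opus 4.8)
The plan is to reduce the claim to the version of matrix Bernstein with intrinsic dimension already available in the literature---Theorem~7.7.1 of \citet{tropp2015introduction}---which establishes exactly the tail bound \eqref{eqn:intdim-bernstein-tail}, but only under the side condition $t \geq \sqrt{v} + L/3$. So the entire task is to handle small $t$: I would show that when $0 \leq t < \sqrt{v} + L/3$ the right-hand side of \eqref{eqn:intdim-bernstein-tail} is already at least $1$, so the inequality holds trivially because its left-hand side is a probability.

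First I would record two elementary facts. (i) Since $V$ is positive semidefinite, $\tr(V) = \sum_i \lambda_i(V) \geq \lambda_{\max}(V) = \norm{V}$, hence $\operatorname{intdim}(V) = \tr(V)/\norm{V} \geq 1$ (the degenerate case $V = 0$, i.e.\ $v = 0$, forces $Y = 0$ almost surely and is dispatched separately). (ii) The map $g(t) \defeq \frac{t^2/2}{v + Lt/3}$ is nondecreasing on $[0,\infty)$, since the numerator of $g'(t)$ is $tv + Lt^2/6 \geq 0$; and, writing $a = \sqrt{v}$ and $b = L/3$, one computes $g\big(\sqrt{v} + L/3\big) = \frac{(a+b)^2}{2(a^2 + ab + b^2)} \leq 1$, where the last inequality is merely $0 \leq a^2 + b^2$. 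Consequently $g(t) \leq 1$ for every $t \in [0,\,\sqrt{v} + L/3]$, so $\exp(-g(t)) \geq e^{-1} > \tfrac14$ on this range.

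With these facts the argument is a two-line case split. If $t \geq \sqrt{v} + L/3$, invoke Theorem~7.7.1 of \citet{tropp2015introduction} verbatim. If $0 \leq t < \sqrt{v} + L/3$, then combining (i) and (ii),
\begin{equation*}
4\,\operatorname{intdim}(V)\cdot\exp\!\left(\frac{-t^2/2}{v + Lt/3}\right) \;=\; 4\,\operatorname{intdim}(V)\,e^{-g(t)} \;\geq\; 4\cdot 1\cdot e^{-1} \;>\; 1 \;\geq\; \Prob\big(\lambda_{\max}(Y) \geq t\big),
\end{equation*}
so the bound holds. Together the two cases cover all $t \geq 0$, which is the theorem.

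I do not anticipate a real obstacle here: all the probabilistic content is imported from the known intrinsic-dimension Bernstein inequality, and what remains is the bookkeeping in fact (ii)---namely verifying that the threshold $\sqrt{v} + L/3$ appearing in Tropp's hypothesis is exactly (up to a harmless constant) the point below which $g(t) \leq 1$, so that the two cases dovetail with no gap. The only mild care needed is to state the $V = 0$ edge case cleanly and to observe that $\operatorname{intdim}(V) \geq 1$ is what makes the leading constant $4$ (rather than some constant smaller than $e$) enough to push the trivial bound over $1$.
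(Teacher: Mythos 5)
Your proposal is correct and follows essentially the same route as the paper: invoke Tropp's Theorem~7.7.1 when $t \geq \sqrt{v} + L/3$, and show the bound is vacuous for smaller $t$ because the exponent is at most $1$ there, so the right-hand side is at least $4\operatorname{intdim}(V)e^{-1} > 1$. The only difference is cosmetic---you verify $t^2 \leq 2(v + Lt/3)$ on the small-$t$ range by monotonicity of $t \mapsto \frac{t^2/2}{v+Lt/3}$ and evaluation at the endpoint, whereas the paper analyzes the roots of the quadratic $t^2 - 2Lt/3 - 2v$.
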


\begin{proof}
  The case of $t \geq \sqrt{v} + L/3$ is exactly Theorem 7.7.1 in \citet{tropp2015introduction}.
  We just need to show that the bound is vacuous when $0 \leq t < \sqrt{v} + L/3$.

  Suppose that $0 \leq t < \sqrt{v} + L/3$.
  We show that then $t^2 \leq 2(v + Lt/3)$.
  Indeed, $t^2 - 2Lt/3 - 2v$ has roots $\frac{L}{3} \pm \sqrt{\frac{L^2}{9} + 2v}$.
  The condition $t^2 \leq 2(v + Lt/3)$ is then equivalent to
  \begin{equation*}
    \frac{L}{3} - \sqrt{\frac{L^2}{9} + 2v} \leq t \leq \frac{L}{3} +
    \sqrt{\frac{L^2}{9} + 2v}.
  \end{equation*}
  The lower bound is negative since $v \geq 0$, and $t < \sqrt{v} + L/3$ implies $t
  < L/3 + \sqrt{L^2/9 + 2v}$, satisfying the upper bound.
  Thus $0 \leq t < \sqrt{v} + L/3$ implies that $t^2 \leq 2(v + Lt/3)$.
  The bound in equation~\eqref{eqn:intdim-bernstein-tail} becomes
  \begin{equation*}
    4\operatorname{intdim}(V) \exp \left( -\frac{t^2/2}{v + Lt/3} \right) \geq 4\operatorname{intdim}(V) \exp (-1) \geq 4/e > 1,
  \end{equation*}
  since $\operatorname{intdim}(V) \geq 1$.
  Thus~\eqref{eqn:intdim-bernstein-tail} holds vacuously for $0 \leq t < \sqrt{v} +
  L/3$.
\end{proof}

We now present Lemma~\ref{lem:quantized_concentration_two_sided}, in which we lower bound the probability that $\tK$ is ``close'' to its expectation $K+D$, in the specific sense we describe below.
\begin{lemma}
  Let $K$ be an exact kernel matrix, and $\tK = (Z+C)(Z+C)^T$ be an $m$-features $b$-bit LP-RFF approximation of $K$ with expectation $K+D$.
  For any deterministic matrix $B$, let $L \defeq 2n \norm{B}^2$ and $M \defeq B(K + \delta^2_b I_n) B^T$,
  then for any $t_1, t_2 \geq 0$,
  \begin{align*}
    &\Prob\bigg[- t_1 I_n \preceq B\Big((Z + C)(Z + C)^T - (K+D)\Big)B \preceq t_2 I_n\bigg] \\
    \geq\ &1 - \frac{4\tr(M)}{\norm{M}} \left[ \exp \left( \frac{-mt_1^2}{2L(\norm{M} +
        2t_1/3)} \right) + \exp \left(\frac{-mt_2^2}{2L(\norm{M} + 2t_2/3)} \right)  \right].
  \end{align*}
  \label{lem:quantized_concentration_two_sided}
\end{lemma}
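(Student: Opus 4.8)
The plan is to write the centered, conjugated quantized kernel matrix $Y \defeq B\big((Z+C)(Z+C)^T - (K+D)\big)B$ as a sum of $m$ i.i.d.\ zero-mean Hermitian matrices and apply the intrinsic-dimension matrix Bernstein inequality (Theorem~\ref{thm:bernstein}) twice: once to the top eigenvalue of $Y$ and once to the top eigenvalue of $-Y$. A union bound then gives the two-sided semidefinite sandwich. (Throughout, $B$ is symmetric, as in all our applications; this is what makes the matrices below Hermitian and PSD where needed.)

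First I would set $u_i \defeq z_i + c_i \in \RR^n$ for the scaled columns of $Z+C$, so that $(Z+C)(Z+C)^T = \frac{1}{m}\sum_{i=1}^m u_i u_i^T$. By the argument in Lemma~\ref{lem:expectation_CCstar}, $\E[u_i u_i^T] = K+D$ with $D = s_b I_n$ and $0 \le s_b \le \delta_b^2$. The columns $z_i$ of $Z$ are i.i.d., and each $c_i$ depends only on $z_i$ and is conditionally independent across $i$, so the $u_i$ are genuinely i.i.d.; this is the step that requires a moment of care, since the quantization noise is only \emph{conditionally} independent. Defining $X_i \defeq \frac{1}{m}\big(Bu_iu_i^TB - B(K+D)B\big)$, we get $\E[X_i] = 0$ and $\sum_{i=1}^m X_i = Y$.

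Next I would check the two inputs to Bernstein. For the uniform bound: every entry of $u_i$ is $\sqrt m$ times a quantized RFF and hence lies in $[-\sqrt 2,\sqrt 2]$, so $\norm{u_i}^2 \le 2n$ and $\norm{Bu_iu_i^TB} = \norm{Bu_i}^2 \le 2n\norm{B}^2 = L$; by Jensen $\norm{B(K+D)B} = \norm{\E[Bu_iu_i^TB]} \le L$ too, so $\lambda_{\max}(X_i) \le \norm{X_i} \le 2L/m$ (and likewise $\lambda_{\max}(-X_i)\le 2L/m$). For the variance: $\E[X_i^2] = \frac{1}{m^2}\big(\E[(Bu_iu_i^TB)^2] - (B(K+D)B)^2\big) \preceq \frac{1}{m^2}\E[(Bu_iu_i^TB)^2]$, and the scalar-extraction identity $(Bu_iu_i^TB)^2 = (u_i^TB^2u_i)\,Bu_iu_i^TB$ together with $u_i^TB^2u_i = \norm{Bu_i}^2 \le L$ gives $\E[(Bu_iu_i^TB)^2] \preceq L\,\E[Bu_iu_i^TB] = L\,B(K+D)B \preceq L\,B(K+\delta_b^2 I_n)B = L\,M$, using $D \preceq \delta_b^2 I_n$ and that conjugation by $B$ preserves $\preceq$. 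Summing, $\sum_{i=1}^m \E[X_i^2] = \sum_{i=1}^m \E[(-X_i)^2] \preceq \frac{L}{m}M \eqdef V$, so $\norm{V} = \frac{L}{m}\norm{M}$ and $\operatorname{intdim}(V) = \tr(V)/\norm{V} = \tr(M)/\norm{M}$.

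Finally, applying Theorem~\ref{thm:bernstein} to $Y = \sum_i X_i$ with per-summand bound $2L/m$ and variance proxy $V$ gives, for $t_2 \ge 0$,
\[
\Prob\big(\lambda_{\max}(Y) \ge t_2\big) \le \frac{4\tr(M)}{\norm{M}}\exp\!\left(\frac{-m t_2^2}{2L(\norm{M} + 2t_2/3)}\right),
\]
and the identical estimate holds for $\Prob(\lambda_{\max}(-Y) \ge t_1)$. Since $-t_1 I_n \preceq Y \preceq t_2 I_n$ is exactly the event $\{\lambda_{\max}(Y) \le t_2\} \cap \{\lambda_{\max}(-Y) \le t_1\}$, a union bound on its complement yields the stated lower bound on the probability. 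I expect the main obstacle to be the variance computation --- getting the scalar-extraction identity and the bound $u_i^TB^2u_i \le 2n\norm{B}^2$ right, and then passing from $D$ to $\delta_b^2 I_n$ --- since everything else is bookkeeping with the Bernstein constants and a routine union bound.
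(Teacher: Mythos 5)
Your proposal is correct and follows essentially the same route as the paper's proof: the same decomposition of $B\big((Z+C)(Z+C)^T-(K+D)\big)B$ into $m$ zero-mean rank-one summands, the same uniform bound $2L/m$ and variance bound $LM/m$ (hence intrinsic dimension $\tr(M)/\norm{M}$), two applications of Theorem~\ref{thm:bernstein} to $Y$ and $-Y$, and a union bound. The only differences are cosmetic (you keep $u_i = z_i + c_i$ unconjugated and invoke the scalar-extraction identity and Jensen explicitly, where the paper absorbs these into the same estimates).
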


\begin{proof}
  Let $S_i = \frac{1}{m} \left( B (z_i + c_i) (z_i + c_i)^T B^T - B (K + D)B^T
  \right)$ and $S = \sum_{i=1}^{m} S_i = B \left( (Z + C) (Z + C)^T - (K + D)
  \right)B$.
  We see that $\E[S_i] = 0$.
  We will bound $\lambda_{\max}(S)$ and $\lambda_{\min}(S)$ by applying the matrix Bernstein
  inequality for symmetric matrices with intrinsic dimension (Theorem~\ref{thm:bernstein}).
  Thus we need to bound $\norm{S_i}$ and $\norm{\sum_{i=1}^{m} \E[S_i^2]}$.

  Let $u_i = B(z_i + c_i) \in \mathbb{R}^{n}$, then $S_i = \frac{1}{m} (u_i
  u_i^T  - \E[u_i u_i^T])$.
  We first bound $\norm{u_i u_i^T}$.
  Since this is a rank 1 matrix,
  \begin{equation*}
    \norm{u_i u_i^T} = \norm{u_i}^2 = \norm{B(z_i + c_i)}^2 \leq \norm{B}^2
    \norm{z_i + c_i}^2 \leq 2n \norm{B}^2,
  \end{equation*}
  where we have used the fact that $z_i + c_i$ is a vector of length $n$ whose
  entries are in $[-\sqrt{2}, \sqrt{2}]$.
  This gives a bound on $\norm{S_i}$:
  \begin{equation*}
    \norm{S_i} = \frac{1}{m} \norm{u_i u_i^T - \E[u_i u_i^T]}
    \leq \frac{1}{m} \norm{u_i u_i^T} + \frac{1}{m} \E \norm{u_i u_i^T}
    \leq \frac{4n \norm{B}^2}{m} = 2L/m.
  \end{equation*}
  Thus $\lambda_{\max}(S_i) \leq 2L/m$ and $\lambda_{\max}(-S_i) =
  -\lambda_{\min}(S_i) \leq 2L/m$.

  Now it's time to bound $\E[S_i^2]$.  We will use
  \begin{eqnarray*}
    \E[S_i^2] &=& \frac{1}{m^2}\Big(\E\big[(u_i u_i^T)^2\big] - \E\big[u_iu_i^T\big]^2\Big) \preceq \frac{1}{m^2} \E[(u_i u_i^T)^2]
    = \frac{1}{m^2} \E[u_i u_i^T u_i u_i^T] \\
    &=& \frac{1}{m^2} \E[\norm{u_i}^2 u_i
    u_i^T] \preceq \frac{2n \norm{B}^2}{m^2} \E[u_i u_i^T].
  \end{eqnarray*}
  Thus
  \begin{equation*}
    \sum_{i=1}^{m} \E[S_i^2] \preceq \frac{2n \norm{B}^2}{m} \E[v_1 v_1^T] = \frac{2n
      \norm{B}^2}{m} B(K + D) B^T \preceq \frac{2n \norm{B}^2}{m} B(K + \delta^2_b I_n)B^T
    = LM/m.
  \end{equation*}

  Applying Theorem~\ref{thm:intdim-bernstein-herm} with $S$, for any $t_2 \geq 0$,
  we have
  \begin{equation*}
    \Prob\bigg[\lambda_{\max}(B\Big((Z + C)(Z + C)^T - (K+D)\Big)B) \succeq t_2 I_n \bigg] \leq \frac{4\tr(M)}{\norm{M}}
    \exp \left( \frac{-mt_2^2}{2L(\norm{M} + 2t_2/3)} \right).
  \end{equation*}
  Similarly, applying Theorem~\ref{thm:intdim-bernstein-herm} with $-S$ and
  using the fact that $\lambda_{\max}(-S) = -\lambda_{\min}(S)$, for any $t_1 \geq 0$,
  we have
  \begin{equation*}
    \Prob\bigg[\lambda_{\min}(B\Big((Z + C)(Z + C)^T - (K+D)\Big)B) \preceq -t_1 I_n \bigg] \leq \frac{4\tr(M)}{\norm{M}}
    \exp \left( \frac{-mt_1^2}{2L(\norm{M} + 2t_1/3)} \right).
  \end{equation*}
  Combining the two bounds with the union bound yields the desired inequality.
\end{proof}

We are now ready to show that low-precision features yield close spectral
approximation to the exact kernel matrix.

\begin{customthm}{2}
	Let $\tK$ be an $m$-feature $b$-bit LP-RFF approximation of a kernel matrix $K$, and assume
	$\norm{K} \geq \lambda \geq \delta^2_b \defeq 2/(2^b-1)^2$.
	Then for any $\Delta_1 \geq 0$, $\Delta_2 \geq \delta^2_b/\lambda$,
	\begin{align*}
	\Prob\Big[(1-\Delta_1)&(K+\lambda I) \preceq \tK+\lambda I \preceq (1+\Delta_2)(K+\lambda I)\Big] \geq \\
	&1-8 \tr \left( (K + \lambda I_n)^{-1}(K + \delta^2_bI_n)\right)\left(
	\exp\left(\frac{-m\Delta_1^2}{\frac{4n}{\lambda}(1 + \frac{2}{3}\Delta_1)} \right) +
	\exp\left(\frac{-m(\Delta_2 - \frac{\delta^2_b}{\lambda})^2}{\frac{4n}{\lambda}(1 + \frac{2}{3}(\Delta_2 - \frac{\delta^2_b}{\lambda}))}\right)\right).
	\end{align*}
	\label{thm2_app}
\end{customthm}

\begin{proof}
  We conjugate the desired inequality with $B \defeq (K + \lambda I_n)^{-1/2}$ (i.e.,
  multiply by $B$ on the left and right), noting that semidefinite ordering is
  preserved by conjugation:
  \begin{align*}
    &(1 - \Delta_1) (K + \lambda I_n) \preceq \tilde{K} + \lambda I_n \preceq (1 + \Delta_2) (K + \lambda I_n) \\
    \iff\ &(1 - \Delta_1) I_n \preceq B (\tilde{K} + \lambda I_n) B \preceq (1 + \Delta_2) I_n \\
    \iff\ &-\Delta_1 I_n \preceq B (\tilde{K} + \lambda I_n) B - I_n \preceq \Delta_2 I_n \\
    \iff\ &-\Delta_1 I_n \preceq B (\tilde{K} + \lambda I_n - K - \lambda I_n) B \preceq \Delta_2 I_n \\
    \iff\ &-\Delta_1 I_n \preceq B (\tilde{K} - K) B \preceq \Delta_2 I_n.
  \end{align*}

  We show that $-\Delta_1 I_n \preceq B (\tilde{K} - K - D) B \preceq
  (\Delta_2 - \delta^2_b/\lambda) I_n$ implies $-\Delta_1 I_n \preceq B (\tilde{K} - K) B \preceq \Delta_2 I_n$.
  Indeed, $\norm{BDB} \leq \delta^2_b/\lambda$ because
  $\norm{B}^2 = \norm{B^2} = \norm{(K+\lambda I_n)^{-1}} \leq 1/\lambda$ and
  $\norm{D} \leq \delta_b^2$, so $BDB \preceq (\delta^2_b/\lambda) I_n$.
  Moreover, since $D \succeq 0$ by Lemma~\ref{lem:expectation_CCstar} and $B$ is symmetric,
  $B D B \succeq 0$.
  Thus the condition $-\Delta_1 I_n \preceq B (\tilde{K} - K - D) B \preceq
  (\Delta_2 - \delta^2_b/\lambda) I_n$ implies:
  \begin{align*}
    B(\tilde{K} - K)B &= B(\tilde{K} - K - D)B + BDB \preceq (\Delta_2 - \delta^2_b/\lambda)I_n + \delta^2_b/\lambda I_n = \Delta_2 I_n, \\
    B(\tilde{K} - K)B &= B(\tilde{K} - K - D)B + BDB \succeq -\Delta_1I_n + 0 = -\Delta_1 I_n.
  \end{align*}
  Hence
  $\Prob \left[ -\Delta_1 I_n \preceq B (\tilde{K} - K) B \preceq \Delta_2 I_n \right] \geq \Prob \left[ -\Delta_1 I_n \preceq B (\tilde{K} - K - D) B \preceq (\Delta_2 - \delta^2_b/\lambda) I_n \right]$.
  It remains to show that $-\Delta_1 I_n \preceq B (\tilde{K} - K - D) B \preceq (\Delta_2
  - \delta^2_b/\lambda) I_n$ with the desired probability, by applying
  Lemma~\ref{lem:quantized_concentration_two_sided}.
  We apply Lemma~\ref{lem:quantized_concentration_two_sided} for $B \defeq (K + \lambda I_n)^{-1/2}$,
   $L \defeq 2n \norm{B}^2 \leq 2n/\lambda$, and $M \defeq B(K + \delta^2_b I_n) B$.

  To simplify the bound one gets from applying Lemma~\ref{lem:quantized_concentration_two_sided} with the above $B$, $L$, and $M$, we will use the following expression for $\tr(M)$, and the following upper and lower bounds on $\|M\|$.
  $\tr(M) = \tr\Big(B^2(K + \delta^2_b I_n)\Big) = \tr\Big((K + \lambda I_n)^{-1}(K + \delta^2_b I_n)\Big)$.
  Letting $K=USU^T$ be the SVD of $K$, we get that $M = U(S+\lambda I_n)^{-1/2} U^T U
  (S+\delta^2_b I_n) U^T U (S+\lambda I_n)^{-1/2} U^T = U(S+\lambda I_n)^{-1} (S+\delta^2_b I_n) U^T$.
  Thus, letting $\lambda_1$ be the largest eigenvalue of $K$ (recall $\lambda_1 \geq \lambda$ by
  assumption), $\norm{M} = (\lambda_1 + \delta^2_b)/(\lambda_1 + \lambda) \geq (\lambda_1 + \delta^2_b)/(2\lambda_1) \geq  1/2$.
  We also assume that $\delta^2_b \leq \lambda$, so $\norm{M} \leq 1$.
  Lemma~\ref{lem:quantized_concentration_two_sided} allows us to conclude the following:
  \begin{align*}
&\Prob\bigg[(1 - \Delta_1) (K + \lambda I_n) \preceq \tilde{K} + \lambda I_n \preceq (1 + \Delta_2) (K + \lambda I_n)\bigg] \\
=& \;\;\Prob\bigg[-\Delta_1 I_n \preceq B (\tilde{K} - K) B \preceq \Delta_2 I_n\bigg]\\
\geq& \;\; \Prob \left[ -\Delta_1 I_n \preceq B (\tilde{K} - (K + D)) B \preceq (\Delta_2 - \delta^2_b/\lambda) I_n \right] \\
\geq& \;\;1 - \frac{4\tr(M)}{\norm{M}} \left[ \exp \left( \frac{-m\Delta_1^2}{2L(\norm{M} + 2\Delta_1/3)} \right) + \exp
\left(\frac{-m(\Delta_2 - \delta^2_b/\lambda)^2}{2L(\norm{M} + 2(\Delta_2 - \delta^2_b/\lambda)/3)}
\right) \right]  \\
\geq& \;\; 1 - 8 \tr \left( (K + \lambda I_n)^{-1}(K + \delta^2_bI_n) \right) \left[ \exp \left( \frac{-m\Delta_1^2}{4n/\lambda(1 + 2\Delta_1/3)} \right) + \exp
\left(\frac{-m(\Delta_2 - \delta^2_b/\lambda)^2}{4n/\lambda(1 + 2(\Delta_2 - \delta^2_b/\lambda)/3)}
\right) \right].
  \end{align*}
\end{proof}

There is a bias-variance trade-off: as we decrease the number of bits $b$, under
a fixed memory budget we can use more features, and $(Z + C)(Z + C)^T$
concentrates more strongly (lower variance) around the expectation $K + D$ with
$0 \preceq D \preceq \delta^2_b I_n$.  However, this expectation is further away from the true kernel
matrix $K$ (larger bias).
Thus there should be an optimal number of bits $b^*$ that balances the bias and
the variance.

\begin{proof}[Proof of Corollary~\ref{cor:n_features_required}]
  Letting $\Delta_2 \to \infty$ in Theorem~\ref{thm2} gives
  \begin{equation*}
    \Prob \left[ (1 - \Delta_1) (K + \lambda I_n) \preceq \tilde{K} + \lambda I_n \right]
    \geq 1 - 8 \tr \left( (K + \lambda I_n)^{-1}(K + \delta^2_bI_n) \right) \exp \left( \frac{-m\Delta_1^2}{4n/\lambda(1 + 2\Delta_1/3)} \right).
  \end{equation*}
  Using the assumption that $\Delta_1 \leq 3/2$, we can simplify the bound:
  \begin{equation*}
    \Prob \left[ (1 - \Delta_1) (K + \lambda I_n) \preceq \tilde{K} + \lambda I_n \right]
    \geq 1 - 8 \tr \left( (K + \lambda I_n)^{-1}(K + \delta^2_bI_n) \right) \exp \left( \frac{-m\Delta_1^2}{8n/\lambda} \right).
  \end{equation*}
  Letting the RHS be $1-\rho$ and solving for $m$ yields
  \begin{equation*}
    m \geq \frac{8n/\lambda}{\Delta_1^2}\log\Big(\frac{a}{\rho}\Big).
  \end{equation*}

  Similarly, letting $\Delta_1 \to \infty$ in Theorem~\ref{thm2} gives
  \begin{equation*}
    \Prob \left[ \tilde{K} + \lambda I_n \preceq (1 - \Delta_2) (K + \lambda I_n) \right]
    \geq 1 - 8 \tr \left( (K + \lambda I_n)^{-1}(K + \delta^2_bI_n) \right) \exp \left( \frac{-m(\Delta_2 - \delta_b^2/\lambda)^2}{4n/\lambda(1 + 2(\Delta_2 - \delta_b^2/\lambda)/3)} \right).
  \end{equation*}
  Using the assumption that $\Delta_1 \leq 3/2$, we can simplify the bound:
  \begin{equation*}
    \Prob \left[ (1 - \Delta_1) (K + \lambda I_n) \preceq \tilde{K} + \lambda I_n \right]
    \geq 1 - 8 \tr \left( (K + \lambda I_n)^{-1}(K + \delta^2_bI_n) \right) \exp \left( \frac{-m(\Delta_2 - \delta_b^2/\lambda)^2}{8n/\lambda} \right).
  \end{equation*}
  Letting the RHS be $1-\rho$ and solving for $m$ yields
  \begin{equation*}
    m \geq \frac{8n/\lambda}{(\Delta_2 - \delta_b^2/\lambda)^2}\log\Big(\frac{a}{\rho}\Big).
  \end{equation*}
\end{proof}

If we let the number of bits $b$ go to $\infty$ and set $\Delta_1 = \Delta_2 = \Delta$, we get
the following corollary, similar to the result from \citet{avron17}:
\begin{corollary}
Suppose that $\tK = ZZ^T$, $\norm{K} \geq \lambda$. Then for any $\Delta \leq 1/2$,
\begin{equation*}
\Prob\left[(1 - \Delta)(K + \lambda I_n) \preceq \tilde{K} + \lambda I_n \preceq (1 + \Delta)(K + \lambda I_n) \right] \geq 1 - 16 \tr((K +
\lambda I_n)^{-1} K) \exp \left( -\frac{3m \Delta^2}{16n/\lambda} \right).
\end{equation*}
Thus if we use $m \geq \frac{16}{3\Delta^2} n/\lambda \log (16 \tr((K + \lambda I_n)^{-1} K) / \rho)$
features, then $(1 - \Delta)(K + \lambda I_n) \preceq \tilde{K} + \lambda I_n \preceq (1 + \Delta)(K + \lambda I_n)$ with probability at least $1 - \rho$.
\end{corollary}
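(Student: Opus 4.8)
The plan is to obtain this corollary as a direct specialization of Theorem~\ref{thm2}, taking the precision $b \to \infty$ (so $\delta_b^2 = 2/(2^b-1)^2 \to 0$) and then setting $\Delta_1 = \Delta_2 = \Delta$. First I would observe that the unquantized case $\tK = ZZ^T$ is exactly the situation treated in the proof of Theorem~\ref{thm2} with zero quantization noise: one applies Lemma~\ref{lem:quantized_concentration_two_sided} with $C = 0$, $D = \E[c_1 c_1^T] = 0$, and $\delta_b^2 = 0$, which is legitimate since the hypothesis $\E[c_{i,j}^2 \mid Z_{ij}] \leq \delta_b^2$ holds trivially for $C = 0$ and the entries of $z_i$ lie in $[-\sqrt{2},\sqrt{2}]$. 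With this choice the hypotheses $\norm{K} \geq \lambda \geq \delta_b^2$ and $\Delta_2 \geq \delta_b^2/\lambda$ of Theorem~\ref{thm2} become automatic, and its conclusion states that for any $\Delta_1,\Delta_2 \geq 0$ the sandwich $(1-\Delta_1)(K+\lambda I_n) \preceq \tK+\lambda I_n \preceq (1+\Delta_2)(K+\lambda I_n)$ holds with probability at least $1 - 8\tr\big((K+\lambda I_n)^{-1}K\big)\big(\exp(-m\Delta_1^2/(\tfrac{4n}{\lambda}(1+\tfrac{2}{3}\Delta_1))) + \exp(-m\Delta_2^2/(\tfrac{4n}{\lambda}(1+\tfrac{2}{3}\Delta_2)))\big)$. (Equivalently, since the right-hand side of the Theorem~\ref{thm2} bound is continuous and nondecreasing in $\delta_b^2$, one may simply pass to the limit $b\to\infty$.)

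Next I would set $\Delta_1 = \Delta_2 = \Delta$, so that the two exponential terms coincide and the probability bound becomes $1 - 16\tr\big((K+\lambda I_n)^{-1}K\big)\exp\big(-m\Delta^2/(\tfrac{4n}{\lambda}(1+\tfrac{2}{3}\Delta))\big)$. The only remaining manipulation is to clean up the exponent using $\Delta \leq 1/2$: then $1+\tfrac{2}{3}\Delta \leq \tfrac{4}{3}$, hence $\tfrac{4n}{\lambda}(1+\tfrac{2}{3}\Delta) \leq \tfrac{16n}{3\lambda}$ and $-m\Delta^2/(\tfrac{4n}{\lambda}(1+\tfrac{2}{3}\Delta)) \leq -\tfrac{3m\Delta^2}{16n/\lambda}$; since $t \mapsto e^t$ is increasing this yields exactly $\Prob[\cdots] \geq 1 - 16\tr\big((K+\lambda I_n)^{-1}K\big)\exp\big(-\tfrac{3m\Delta^2}{16n/\lambda}\big)$, the first claim.

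For the sample-complexity statement I would require the failure term to be at most $\rho$, i.e.\ $16\tr\big((K+\lambda I_n)^{-1}K\big)\exp\big(-\tfrac{3m\Delta^2}{16n/\lambda}\big) \leq \rho$; taking logarithms and solving for $m$ gives $m \geq \tfrac{16n/\lambda}{3\Delta^2}\log\big(16\tr((K+\lambda I_n)^{-1}K)/\rho\big)$, which is precisely the stated threshold. This is the same bookkeeping already used in the proof of Corollary~\ref{cor:n_features_required}.

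There is no substantive obstacle here, since the corollary is a cosmetic specialization of Theorem~\ref{thm2}; the only point warranting care is the $b\to\infty$ (equivalently $\delta_b^2 = 0$, $D = 0$) reduction, which is cleanest to justify by noting that Lemma~\ref{lem:quantized_concentration_two_sided} already handles the deterministic case $C = 0$ directly, rather than by an honest limiting argument over the random matrices $\tK$.
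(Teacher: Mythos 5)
Your proposal is correct and follows essentially the same route the paper intends: specialize Theorem~\ref{thm2} to the unquantized case ($\delta_b^2 = 0$, i.e.\ $C=0$, $D=0$ in Lemma~\ref{lem:quantized_concentration_two_sided}), set $\Delta_1=\Delta_2=\Delta$, use $\Delta\le 1/2$ to bound $1+\tfrac{2}{3}\Delta\le\tfrac{4}{3}$ in the exponent, and solve the failure bound for $m$. Your remark that the reduction is cleanest via reapplying the lemma with $C=0$ rather than a limit over the $b$-dependent random matrices is a fair (and slightly more careful) reading of the paper's brief "let $b\to\infty$" justification.
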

The constants are slightly different from that of \citet{avron17} as we use the
real features $\sqrt{2} \cos(w^T x + a)$ instead of the complex features $\exp(i
w^T x)$.

From these results, we now know that the number of features required depends linearly on $n/\lambda$; more precisely, we know that if we use $m \geq c_0 \cdot n/\lambda$ features (for some constant $c_0 > 0$), $\tilde{K} + \lambda I_n$ will be a $(\Delta, \Delta)$-spectral approximation of $K + \lambda I_n$ with high probability.  \citet{avron17} further provide a lower bound, showing that if $m \leq c_1 \cdot n/\lambda$ (for some other constant $c_1 > 0$), $\tilde{K} + \lambda I_n$ will not be a $(\Delta, \Delta)$-spectral approximation of $K + \lambda I_n$ with high probability. This shows that the number of random Fourier features \textit{must} depend linearly on $n/\lambda$.

\subsection{Frobenius Kernel Approximation Error Bounds for LP-RFFs}
\label{sec:app_kernel_error}
We begin this section by bounding the variance of the quantization noise $C$ added to the random feature matrix $Z$.  We prove this as a simple consequence of the following Lemma.\footnote{This lemma is also a direct consequence of Popoviciu's inequality on variances \citep{popoviciu1935equations}. Nonetheless, we include a stand-alone proof of the lemma here for completeness.}

\begin{lemma}
	\label{lem:var_bound}
	For $z \in [a,c]$, let $X_z^{a,c}$ be the random variable which with probability $\frac{z-a}{c-a}$ equals $c-z$, and with probability $\frac{c-z}{c-a}$ equals $a-z$.  Then $\expect{}{X_z^{a,c}} = 0$, and $\var{}{X_z^{a,c}} = (c-z)(z-a) \leq \frac{(c-a)^2}{4}$.
\end{lemma}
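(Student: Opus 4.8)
The plan is to verify the mean directly and then compute the variance as a second moment, exploiting the symmetry $(a-z)^2 = (z-a)^2$ to factor cleanly. Throughout I assume $a < c$ (the degenerate case $a = c$ is excluded so the stated probabilities are well-defined; it corresponds to $r = 0$ in the application, where there is nothing to prove).

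First I would compute
\[
\expect{}{X_z^{a,c}} = \frac{z-a}{c-a}(c-z) + \frac{c-z}{c-a}(a-z) = \frac{(c-z)\big[(z-a) + (a-z)\big]}{c-a} = 0,
\]
which establishes the first claim. Since the mean is zero, the variance equals the second moment:
\[
\var{}{X_z^{a,c}} = \expect{}{\big(X_z^{a,c}\big)^2} = \frac{z-a}{c-a}(c-z)^2 + \frac{c-z}{c-a}(a-z)^2.
\]
Here I would use $(a-z)^2 = (z-a)^2$ and factor out $(c-z)(z-a)$ from the numerator, leaving $(c-z) + (z-a) = c-a$ in the bracket, which cancels the denominator:
\[
\var{}{X_z^{a,c}} = \frac{(c-z)(z-a)\big[(c-z) + (z-a)\big]}{c-a} = (c-z)(z-a).
\]

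Finally, for the upper bound I would invoke AM--GM on the two nonnegative quantities $c-z$ and $z-a$, whose sum is $c-a$, giving $(c-z)(z-a) \le \big(\tfrac{(c-z)+(z-a)}{2}\big)^2 = \tfrac{(c-a)^2}{4}$; equivalently, $(c-a)^2 - 4(c-z)(z-a) = \big((c-z) - (z-a)\big)^2 \ge 0$. There is no real obstacle here — the only thing to be careful about is the bookkeeping in the factoring step and noting that both $c-z$ and $z-a$ are nonnegative because $z \in [a,c]$. The lemma on the variance of the stochastic rounding scheme (Proposition~\ref{prop:var_bound}) then follows by applying this with $a = \underline{z}$, $c = \overline{z}$, $z = z_i(x)$, and $c - a = r = \tfrac{2\sqrt{2/m}}{2^b-1}$, so that the variance is at most $r^2/4 = \tfrac{2}{m(2^b-1)^2} = \delta_b^2/m$.
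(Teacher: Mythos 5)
Your proof is correct and follows essentially the same route as the paper's: a direct computation showing the mean is zero, followed by the same factoring of the second moment to get $\var{}{X_z^{a,c}} = (c-z)(z-a)$. The only (minor) difference is the final bound, where you use AM--GM, i.e.\ $\big((c-z)-(z-a)\big)^2 \geq 0$, whereas the paper maximizes $(c-z)(z-a)$ over $z$ by setting the derivative to zero; both yield $\frac{(c-a)^2}{4}$.
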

\begin{proof}
	\begin{eqnarray*}
		\expect{}{X_z^{a,c}} &=&  (c-z)\cdot \frac{z-a}{c-a} + (a-z)\cdot \frac{c-z}{c-a}\\
		&=& 0 .\\
		\var{}{X_z^{a,c}} &=& (c-z)^2\cdot \frac{z-a}{c-a} + (a-z)^2\cdot \frac{c-z}{c-a}\\
		&=& \frac{(c-z)(z-a)((c-z + z-a))}{c-a} \\
		&=& (c-z)(z-a)\\
		\frac{d}{dz}[\var{}{X_z^{a,c}}] &=& \frac{d}{dz}[-z^2 + (c+a)z -ac] \\
		&=& -2z + c+a.
	\end{eqnarray*}
	Now, setting the derivative to 0 gives $z^* = \frac{c+a}{2}$.  Thus, $\argmax_{z\in[a,c]} (c-z)(z-a) = \frac{c+a}{2}$, and $\max_{z\in[a,c]} (c-z)(z-a) = (c-\frac{c+a}{2})(\frac{c+a}{2}-a) = \frac{(c-a)^2}{4}$.
\end{proof}
\begin{proposition}
	\label{prop:var_bound}
	$\E[C_{ij}^2 \mid Z_{ij}] \leq \delta_b^2/m$, for $\delta_b^2 \defeq \frac{2}{(2^b - 1)^2}$.
\end{proposition}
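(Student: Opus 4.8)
The plan is to recognize the conditional distribution of the quantization noise $C_{ij}$ given $Z_{ij}$ as exactly the random variable $X_z^{a,c}$ analyzed in Lemma~\ref{lem:var_bound}, and then substitute the subinterval width. First I would fix indices $i,j$ and condition on $Z_{ij}$. By construction (Section~\ref{subsec:method_details}), $Z_{ij} \in [-\sqrt{2/m}, \sqrt{2/m}]$, and this interval is partitioned into $2^b-1$ subintervals of equal width $r = \tfrac{2\sqrt{2/m}}{2^b-1}$; let $[\underline{z}, \overline{z}]$ be the subinterval containing $Z_{ij}$, so that $\overline{z} - \underline{z} = r$. The stochastic rounding sets $\tilde{Z}_{ij} = \underline{z}$ with probability $\tfrac{\overline{z} - Z_{ij}}{r}$ and $\tilde{Z}_{ij} = \overline{z}$ with probability $\tfrac{Z_{ij} - \underline{z}}{r}$, so the noise $C_{ij} = \tilde{Z}_{ij} - Z_{ij}$ equals $\underline{z} - Z_{ij}$ with probability $\tfrac{\overline{z} - Z_{ij}}{r}$ and equals $\overline{z} - Z_{ij}$ with probability $\tfrac{Z_{ij} - \underline{z}}{r}$. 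Matching this against the definition in Lemma~\ref{lem:var_bound} with $a = \underline{z}$, $c = \overline{z}$, $z = Z_{ij}$ shows that, conditioned on $Z_{ij}$, the noise $C_{ij}$ is distributed precisely as $X_{Z_{ij}}^{\underline{z}, \overline{z}}$.

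Next I would invoke Lemma~\ref{lem:var_bound} directly: $\E[C_{ij} \mid Z_{ij}] = \E\big[X_{Z_{ij}}^{\underline{z}, \overline{z}}\big] = 0$ and $\var{}{X_{Z_{ij}}^{\underline{z}, \overline{z}}} = (\overline{z} - Z_{ij})(Z_{ij} - \underline{z}) \le \tfrac{(\overline{z} - \underline{z})^2}{4} = \tfrac{r^2}{4}$. Since the conditional mean vanishes, $\E[C_{ij}^2 \mid Z_{ij}] = \var{}{C_{ij} \mid Z_{ij}} \le \tfrac{r^2}{4}$. Finally I would substitute $r = \tfrac{2\sqrt{2/m}}{2^b-1}$, which gives $\tfrac{r^2}{4} = \tfrac{1}{4} \cdot \tfrac{4 \cdot (2/m)}{(2^b-1)^2} = \tfrac{2}{m(2^b-1)^2} = \tfrac{\delta_b^2}{m}$ with $\delta_b^2 \defeq \tfrac{2}{(2^b-1)^2}$, which is exactly the claimed bound.

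There is essentially no hard step here: the proposition is a one-line consequence of Lemma~\ref{lem:var_bound} once the noise is identified with $X_z^{a,c}$. The only points requiring minor care are the bookkeeping that pairs each of the two rounding probabilities with the correct noise value (so that the identification with $X_z^{a,c}$ is exact), and the degenerate case in which $Z_{ij}$ lies on a grid point, where $C_{ij} = 0$ deterministically and the inequality holds trivially. Everything else is the arithmetic substitution for $r$.
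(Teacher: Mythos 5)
Your proof is correct and follows the same route as the paper's: identify the conditional distribution of $C_{ij}$ given $Z_{ij}$ with the random variable $X_{Z_{ij}}^{a,c}$ of Lemma~\ref{lem:var_bound}, apply its variance bound $(c-a)^2/4$, and substitute the subinterval width $\frac{2\sqrt{2/m}}{2^b-1}$ to obtain $\delta_b^2/m$. The arithmetic and the identification of the noise distribution are both exactly as in the paper's proof, so there is nothing to add.
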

\begin{proof}
	Given a feature $Z_{ij} \in [-\sqrt{2/m},\sqrt{2/m}]$, we quantize it to $b$ bits by dividing this interval into $2^b - 1$ equally-sized sub-intervals (each of size $\frac{2\sqrt{2/m}}{2^b-1}$), and randomly rounding to the top or bottom of the sub-interval containing $Z_{ij}$ (in an unbiased manner). Let $a,c$ denote the boundaries of the sub-interval containing $Z_{ij}$, where $c = a + \frac{2\sqrt{2/m}}{2^b-1}$.  We can now see that $C_{ij} = X_{Z_{ij}}^{a,c}$ is the unique random variable such that $Z_{ij}+C_{ij}\in\{a,c\}$ and $\expect{}{C_{ij} \mid Z_{ij}} = 0$.  Because $c-a = \frac{2\sqrt{2/m}}{2^b-1}$, it follows from Lemma~\ref{lem:var_bound} that $\E[C_{ij}^2 \mid Z_{ij}] = \var{}{X_{Z_{ij}}^{a,c}\mid Z_{ij}} \leq \frac{(c-a)^2}{4} = \frac{8/m}{4(2^b-1)^2} = \delta_b^2/m$.
\end{proof}

We now prove an upper bound on the expected kernel approximation error for LP-RFFs, which applies for any quantization function with bounded variance.  This error corresponds exactly to the variance of the random variable which is the product of two quantized random features.

\begin{theorem}
	\label{thm:var1}
	For $x,y\in\cX$, assume we have random variables $Z_x,Z_y$ satisfying $\expect{}{Z_xZ_y} = k(x,y)$, $\var{}{Z_xZ_y} \leq \sigma^2$, and that $k(x,x) = k(y,y) = 1$.\footnote{For example, one specific instance of the random variables $Z_x,Z_y$ is given by random Fourier features, where $z_x = \sq\cos(w^Tx+b),z_y = \sq\cos(w^Ty+b)$, $z_x,z_y\in[-\sq,\sq]$, for random $w,b$.  We need not assume that $k(x,x)=k(y,y) = 1$, but this simplifies the final expression, as can be seen in the last step of the proof.}  For any unbiased random quantization function $Q$ with bounded variance $\var{}{Q(z)} \leq \tsigma^2$ for any (fixed) $z$, it follows that $\expect{}{Q(Z_x)Q(Z_y)} = k(x,y)$, and that $\var{}{Q(Z_x)Q(Z_y)} \leq 2\tsigma^2 + \tsigma^4 + \sigma^2$.
\end{theorem}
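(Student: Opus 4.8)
The plan is to exploit the conditional independence structure: $Q(Z_x)$ and $Q(Z_y)$ are the results of applying the quantizer to $Z_x$ and $Z_y$ respectively, and conditioned on the pair $(Z_x, Z_y)$ the quantization noises are independent with mean zero. Write $Q(Z_x) = Z_x + C_x$ and $Q(Z_y) = Z_y + C_y$, where $\expect{}{C_x \mid Z_x, Z_y} = \expect{}{C_y \mid Z_x, Z_y} = 0$, $\expect{}{C_x^2 \mid Z_x} \leq \tsigma^2$, $\expect{}{C_y^2 \mid Z_y} \leq \tsigma^2$, and $C_x \perp C_y$ given $(Z_x, Z_y)$. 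For the unbiasedness claim, I would first take the expectation over the quantization noise conditioned on $(Z_x,Z_y)$: $\expect{}{Q(Z_x)Q(Z_y) \mid Z_x, Z_y} = \expect{}{(Z_x+C_x)(Z_y+C_y)\mid Z_x,Z_y} = Z_x Z_y$ using the zero-mean, conditionally-independent noise. Then take the outer expectation over $(Z_x,Z_y)$ to get $\expect{}{Q(Z_x)Q(Z_y)} = \expect{}{Z_x Z_y} = k(x,y)$.

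For the variance bound, I would use the law of total variance, $\var{}{Q(Z_x)Q(Z_y)} = \expect{}{\var{}{Q(Z_x)Q(Z_y)\mid Z_x,Z_y}} + \var{}{\expect{}{Q(Z_x)Q(Z_y)\mid Z_x,Z_y}}$. The second term is exactly $\var{}{Z_x Z_y} \leq \sigma^2$ by hypothesis. For the first term, I would expand $\var{}{(Z_x+C_x)(Z_y+C_y)\mid Z_x,Z_y}$. Since the second moment of the product is $\expect{}{(Z_x+C_x)^2(Z_y+C_y)^2 \mid Z_x,Z_y} = \expect{}{(Z_x+C_x)^2\mid Z_x}\,\expect{}{(Z_y+C_y)^2\mid Z_y}$ by conditional independence, and the conditional mean is $Z_xZ_y$, this variance equals $(Z_x^2 + \expect{}{C_x^2\mid Z_x})(Z_y^2 + \expect{}{C_y^2\mid Z_y}) - Z_x^2 Z_y^2$. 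Expanding, this is $Z_x^2 \expect{}{C_y^2\mid Z_y} + Z_y^2\expect{}{C_x^2\mid Z_x} + \expect{}{C_x^2\mid Z_x}\expect{}{C_y^2\mid Z_y}$. Now I bound each term using $\expect{}{C_x^2\mid Z_x}, \expect{}{C_y^2\mid Z_y} \leq \tsigma^2$ and—crucially—using that $Z_x, Z_y$ take values in $[-\sqrt{2},\sqrt{2}]$, so $Z_x^2, Z_y^2 \leq 2$; hence this is at most $2\tsigma^2 + 2\tsigma^2 + \tsigma^4$. Wait—the claimed bound is $2\tsigma^2 + \tsigma^4 + \sigma^2$, not $4\tsigma^2 + \tsigma^4 + \sigma^2$, so the coarse bound $Z_x^2 \le 2$ is not tight enough.

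The main obstacle is therefore getting the constant right: I need to use the additional structure that $k(x,x) = k(y,y) = 1$, which should say something about $\expect{}{Z_x^2}$ after taking the outer expectation. Taking the expectation over $(Z_x,Z_y)$ of the first-term expression, I get $\expect{}{Z_x^2}\tsigma^2 + \expect{}{Z_y^2}\tsigma^2 + \tsigma^4$ (using the worst-case $\tsigma^2$ bounds for the noise variances and pulling them out), and since $\expect{}{Z_x^2} = k(x,x) = 1$ and $\expect{}{Z_y^2} = k(y,y) = 1$, this is exactly $2\tsigma^2 + \tsigma^4$. Adding the second term's contribution $\sigma^2$ gives the claimed $2\tsigma^2 + \tsigma^4 + \sigma^2$. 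So the key fix is to keep $\expect{}{C_x^2 \mid Z_x}$ and $\expect{}{C_y^2\mid Z_y}$ as variances pulled out by their uniform bound $\tsigma^2$ \emph{before} taking the outer expectation over $Z_x, Z_y$, and then use $\expect{}{Z_x^2} = 1$ rather than the crude pointwise bound $Z_x^2 \le 2$. I would write the argument carefully in that order—condition, apply total variance, bound noise moments by $\tsigma^2$, then take outer expectation and invoke $k(x,x)=k(y,y)=1$—to land on the stated constant.
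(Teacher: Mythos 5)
Your proof is correct and follows essentially the same route as the paper's: both decompose the variance into the quantization-noise contribution and $\var{}{Z_xZ_y}\leq\sigma^2$ (your law-of-total-variance step is exactly the paper's expansion in which the cross term vanishes by the zero-mean, conditionally independent noise), bound the conditional noise variances by $\tsigma^2$ pointwise, and only then take the outer expectation using $\expect{}{Z_x^2}=k(x,x)=1$ and $\expect{}{Z_y^2}=k(y,y)=1$ to obtain $2\tsigma^2+\tsigma^4+\sigma^2$. The detour through the crude bound $Z_x^2\leq 2$ is correctly identified as too weak and fixed, so no gap remains.
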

\begin{proof}
	Let $Q(Z_x) = Z_x + \eps_x$, and $Q(Z_y) = Z_y + \eps_y$, where $\expect{}{\eps_x} =\expect{}{\eps_y} = 0$, $\expect{}{\eps_x^2} \leq \tsigma^2$, $\expect{}{\eps_y^2} \leq \tsigma^2$, and $\eps_x$,$\eps_y$ are independent random variables.
	
	\begin{eqnarray*}
		\expect{}{Q(Z_x)Q(Z_y)} &=& \expect{}{(Z_x + \eps_x)(Z_y + \eps_y)} \\
		&=& \expect{}{Z_xZ_y + Z_y \eps_x + Z_x \eps_y + \eps_x \eps_y} \\
		&=& \expect{}{Z_xZ_y} \\
		&=& k(x,y).\\
		\var{}{Q(Z_x)Q(Z_y)} &=&  \expect{}{\Big(Q(Z_x)Q(Z_y) - k(x,y)\Big)^2} \\
		&=& \expect{}{\Big((Z_x + \eps_x)(Z_y + \eps_y) - k(x,y)\Big)^2} \\
		&=& \expect{}{\Big(Z_y \eps_x + Z_x\eps_y + \eps_x \eps_y +  Z_xZ_y - k(x,y)\Big)^2} \\
		&=& \expect{}{\Big(Z_y \eps_x + Z_x\eps_y + \eps_x \eps_y\Big)^2} +  \expect{}{\Big(Z_xZ_y - k(x,y)\Big)^2} \\
		&\leq& \expect{}{Z_y^2 \eps_x^2 + Z_x^2\eps_y^2 + \eps_x^2 \eps_y^2} +  \sigma^2 \\
		&\leq& k(y,y) \tsigma^2 + k(x,x) \tsigma^2 + \tsigma^4 +  \sigma^2 \\
		&=& 2\tsigma^2 + \tsigma^4 +  \sigma^2. 
	\end{eqnarray*}
Note that if $x=y$, for this proof to hold, we would need to randomly quantize $Z_x$ twice, giving two independent quantization noise samples $\eps_{x}^{(1)}$ and $\eps_{x}^{(2)}$.
\end{proof}

This theorem suggests that if $2\tsigma^2 + \tsigma^4 \ll \sigma^2$, quantizing the random features will have negligable effect on the variance.  In practice, for the random quantization method we use with random Fourier features (described in Section \ref{subsec:method_details}), we have $\tsigma^2 = \frac{2}{(2^b - 1)^2}$.  Thus, for a large enough precision $b$, the quantization noise will be tiny relative to the variance inherent to the random features.

We note that the above theorem applies to one-dimensional random features, but can be trivially extended to $m$ dimensional random features.  We show this in the following corollary.
\begin{corollary}
	\label{cor:varn}
	For $x,y\in\cX$, assume we have random variables $Z_x,Z_y$ satisfying $\expect{}{Z_xZ_y} = k(x,y)$, and $\var{}{Z_x},\var{}{Z_y} \leq \sigma^2$, and that $k(x,x) = k(y,y) = 1$.
	Let $Q$ be any unbiased quantization function with bounded variance ($\expect{}{Q(z)} = z$, $\var{}{Q(z)} \leq \tsigma^2$ for any $z$).  Let $S=Z_x Z_y$, $T = Q(Z_x)Q(Z_y)$, and $(S_1,\ldots,S_n)$, $(T_1,\ldots,T_n)$ be a random sequence of i.i.d. draws from $S$ and $T$ respectively.  Define $\bar{S}_n = \frac{1}{n}\sum_{i=1}^n S_i$, and $\bar{T}_n =  \frac{1}{n}\sum_{i=1}^n T_i$, to be the empirical mean of these draws.  It follows that $\expect{}{\bS_n} = \expect{}{\bT_n} = k(x,y)$, and that
	$\var{}{\bS_n} \leq \frac{\sigma^2}{n}$, and $\var{}{\bT_n} \leq \frac{2\tsigma^2 + \tsigma^4 +  \sigma^2}{n}$.
\end{corollary}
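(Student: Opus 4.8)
The plan is to derive the corollary directly from Theorem~\ref{thm:var1} together with the standard fact that averaging i.i.d.\ copies of a random variable preserves the mean and divides the variance by the number of copies; no new estimates are required. First I would record what the hypotheses and Theorem~\ref{thm:var1} give about the single draws $S = Z_xZ_y$ and $T = Q(Z_x)Q(Z_y)$: by assumption $\expect{}{S} = \expect{}{Z_xZ_y} = k(x,y)$ and $\var{}{S} \leq \sigma^2$ (reading the corollary's variance hypothesis as the bound $\var{}{Z_xZ_y}\le\sigma^2$ used in Theorem~\ref{thm:var1}), while applying Theorem~\ref{thm:var1} to the unbiased quantizer $Q$ yields $\expect{}{T} = \expect{}{Q(Z_x)Q(Z_y)} = k(x,y)$ and $\var{}{T} \leq 2\tsigma^2 + \tsigma^4 + \sigma^2$.

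Next I would pass to the empirical means. Since $\bS_n = \tfrac1n\sum_{i=1}^n S_i$ with the $S_i$ i.i.d.\ copies of $S$, linearity of expectation gives $\expect{}{\bS_n} = \expect{}{S} = k(x,y)$, and independence of the $S_i$ gives $\var{}{\bS_n} = \tfrac{1}{n^2}\sum_{i=1}^n \var{}{S_i} = \tfrac1n\var{}{S} \leq \tfrac{\sigma^2}{n}$. The identical computation applied to $\bT_n$ and the i.i.d.\ copies $T_i$ of $T$ gives $\expect{}{\bT_n} = k(x,y)$ and $\var{}{\bT_n} = \tfrac1n\var{}{T} \leq \tfrac{2\tsigma^2 + \tsigma^4 + \sigma^2}{n}$, which is exactly the claim.

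There is no substantive obstacle: the corollary is simply the ``average of i.i.d.\ draws divides the variance by the number of draws'' principle applied on top of Theorem~\ref{thm:var1}, and the passage from one-dimensional to higher-dimensional random feature maps is purely notational (the feature coordinates playing the role of the i.i.d.\ draws $S_i$, $T_i$). The one point worth a word of care is matching hypotheses: Theorem~\ref{thm:var1} is stated with $\var{}{Z_xZ_y}\le\sigma^2$, so to invoke it one reads the corollary's variance hypothesis in that form (or inserts the short computation bounding $\var{}{Z_xZ_y}$ from the componentwise variances together with the normalization $k(x,x)=k(y,y)=1$); and, exactly as in Theorem~\ref{thm:var1}, when $x=y$ one should take two independent quantizations of $Z_x$ so that the noise terms remain independent.
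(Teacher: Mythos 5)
Your proposal is correct and follows essentially the same route as the paper: the paper likewise computes $\var{}{\bS_n} = \sum_{i=1}^n \var{}{\tfrac{1}{n}S_i} \leq n\cdot\tfrac{\sigma^2}{n^2} = \tfrac{\sigma^2}{n}$ using independence, and notes that the bound for $\var{}{\bT_n}$ follows identically once Theorem~\ref{thm:var1} supplies $\var{}{T}\leq 2\tsigma^2+\tsigma^4+\sigma^2$. Your remark about reading the variance hypothesis as a bound on $\var{}{Z_xZ_y}$ matches what the paper's proof implicitly assumes, so there is no gap.
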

\begin{proof}
	\begin{eqnarray*}
		\var{}{\bS_n} &=& \var{}{\frac{1}{n}\sum_{i=1}^n S_i} \\
		&=& \sum_{i=1}^n \var{}{\frac{1}{n}S_i} \\
		&\leq& n \cdot \frac{\sigma^2}{n^2} \\
		&=& \frac{\sigma^2}{n}
	\end{eqnarray*}
	The result for $\var{}{\bT_n}$ follows in the same way, using the result from Theorem \ref{thm:var1}.
\end{proof}

\section{\uppercase{Experiment details and extended results}}
\label{sec:exp_details}
\subsection{Datasets and Details Applying to All Experiments}
\label{app:datasets_and_global_exp_details}
In this work, we present results using FP-\NystromNS, FP-RFFs, circulant FP-RFFs, and LP-RFFs on the TIMIT, YearPred, CovType, and Census datasets.
These datasets span regression, binary classification, and multi-class classification tasks.
We present details about these datasets in Table~\ref{tab:dataset_details}.
In these experiments we use the Gaussian kernel with the bandwidth $\sigma$ specified in Table~\ref{tab:kernel_hyper}; 
we use the same bandwidths as \citet{may2017}.
To evaluate the performance of these kernel models, we measure the classification error for classification tasks, and the mean squared error (MSE) for regression tasks ($\frac{1}{n}\sum_{i=1}^n (f_{\tK}(x_i) - y_i)^2$), on the heldout set. 
We compute the total memory utilization as the sum of all the components in Table~\ref{table:mem-usage}.

\begin{table}
	\caption{Dataset details.  For classification tasks, we write the number
		of classes in parentheses in the ``task'' column.}
	\begin{center}
		\begin{tabular}{llllll} 
			\toprule
			\textbf{Dataset}  & \textbf{Task} & \textbf{Train} & \textbf{Heldout} & \textbf{Test} & \textbf{\# Features} \\ 
			\midrule
			Census   & Reg.   & 16k   & 2k      & 2k   & 119 \\ 
			YearPred & Reg.   & 417k  & 46k     & 52k  & 90  \\ 
			CovType  & Class. (2) & 418k  & 46k     & 116k & 54  \\ 
			TIMIT    & Class. (147) & 2.3M  & 245k    & 116k & 440 \\
			\bottomrule
		\end{tabular}
	\end{center}
	\label{tab:dataset_details}
\end{table}

\begin{table}
	\caption{The Gaussian kernel bandwidths used, and the search grid for initial learning rate on the Census, YearPred, Covtype and TIMIT datasets. Optimal learning rate in bold.}
	\begin{center}
		\begin{tabular}{lll}
			\toprule
			Dataset & $1/2\sigma^2$ & Initial learning rate grid \\
			\midrule
			Census & 0.0006 & {0.01, 0.05, 0.1, \textbf{0.5}, 1.0} \\
			YearPred & 0.01 & {0.05, 0.1, \textbf{0.5}, 1.0, 5.0} \\
			Covtype & 0.6 & {1.0, 5.0, 10.0, \textbf{50.0}, 100.0} \\
			TIMIT & 0.0015 & {5.0, 10.0, 50.0, \textbf{100.0}, 500.0} \\
			\bottomrule
		\end{tabular}
	\end{center}
	\label{tab:kernel_hyper}
\end{table}

For all datasets besides TIMIT, we pre-processed the features and labels as
follows: We normalized all continuous features to have zero mean and unit
variance. We did not normalize the binary features in any way.  For regression datasets,
we normalized the labels to have zero mean across the training set.

TIMIT \citep{timit} is a benchmark dataset in the speech recognition community which 
contains recordings of 630 speakers, of various English dialects, each reciting 
ten sentences, for a total of 5.4 hours
of speech. The training set (from which the heldout set is then taken) consists
of data from 462 speakers each reciting 8 sentences (SI and SX sentences).
We use 40 dimensional feature space maximum likelihood linear regression (fMLLR) features
\citep{gales1998}, and concatenate the 5 neighboring frames in either direction,
for a total of 11 frames and 440 features.  This dataset has 147 labels, 
corresponding to the beginning, middle, and end of
49 phonemes.  For reference, we use the exact same features,
labels, and divisions of the dataset, as \citep{huang14kernel,chen16,may2017}.

We acquired the CovType (binary) and YearPred datasets from the LIBSVM 
webpage,\footnote{https://www.csie.ntu.edu.tw/~cjlin/libsvmtools/datasets/}
and the Census dataset from Ali Rahimi's 
webpage.\footnote{https://keysduplicated.com/~ali/random-features/data/}
For these datasets, we randomly set aside 10\% of the training data as a
heldout set for tuning the learning rate and kernel bandwidth.

The specific files we used were as follows:
\begin{itemize}
	\item CovType: We randomly chose 20\% of ``covtype.libsvm.binary'' as test, and used
	the rest for training/heldout.
	\item YearPred: We used ``YearPredictionMSD'' as training/heldout set, and ``YearPredictionMSD.t''
	as test.
	\item Census: We used the included matlab dataset file ``census.mat'' from Ali Rahimi's webpage.
	This Matlab file had already split the data into train and test.  We used a random 10\% of the
	training data as heldout.
\end{itemize}

\subsection{\Nystrom vs.\ RFFs (Section~\ref{subsec:nys_vs_rff_exp})}
\label{app:nys_vs_rff_details}
We compare the generalization performance of full-precision RFFs and the \Nystrom method across four datasets, for various memory budgets.
We sweep the following hyperparameters: For \NystromNS, we use $m \in \{1250, 2500, 5000, 10000, 20000\}$. 
For RFFs, we use $m\in \{1250, 2500, 5000, 10000,20000, 50000, 100000,$
$200000, 400000\}$.
We choose these limits differently because 20k \Nystrom features have roughly the same memory footprint as 400k FP-RFFs.
For all experiments, we use a mini-batch size of 250.
We use a single initial learning rate per dataset across all experiments, which we tune via grid search using 20k \Nystrom features. We choose to use \Nystrom features to tune the initial learning rate in order to avoid biasing the results in favor of RFFs.
We use an automatic early-stopping protocol, as in \citep{morgan1990generalization,sainath2013b,sainath2013low}, to regularize our models \citep{zhang2005boosting,wei2017early} and avoid expensive hyperparameter tuning.
It works as follows: at the end of each epoch, we decay the learning rate in half if the heldout loss is less than $1\%$ better relative to the previous best model, using MSE for regression and cross entropy for classification.
Furthermore, if the model performs \textit{worse} than the previous best, we revert the model.
The training terminates after the learning rate has been decayed 10 times.

We plot our results comparing the \Nystrom method to RFFs in Figure~\ref{fig:generalization_col_app}.
We compare the performance of these methods in terms of their number of features, their training memory  footprint, and the squared Frobenius norm and spectral norm of their kernel approximation error.

\begin{figure}
\centering
\begin{tabular}{@{\hskip -0.0in}c@{\hskip -0.0in}c@{\hskip -0.0in}c@{\hskip -0.0in}c@{\hskip -0.0in}}
	\includegraphics[width=0.245\linewidth]{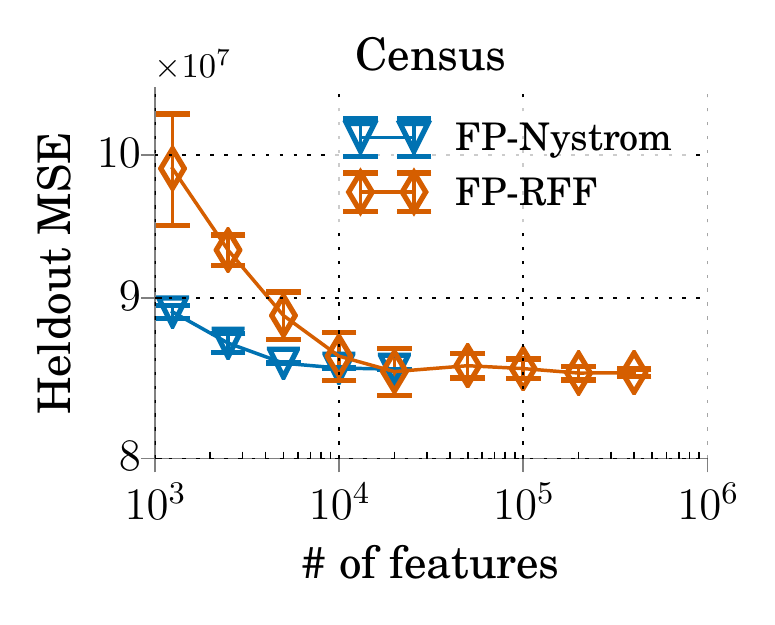} &
	\includegraphics[width=0.245\linewidth]{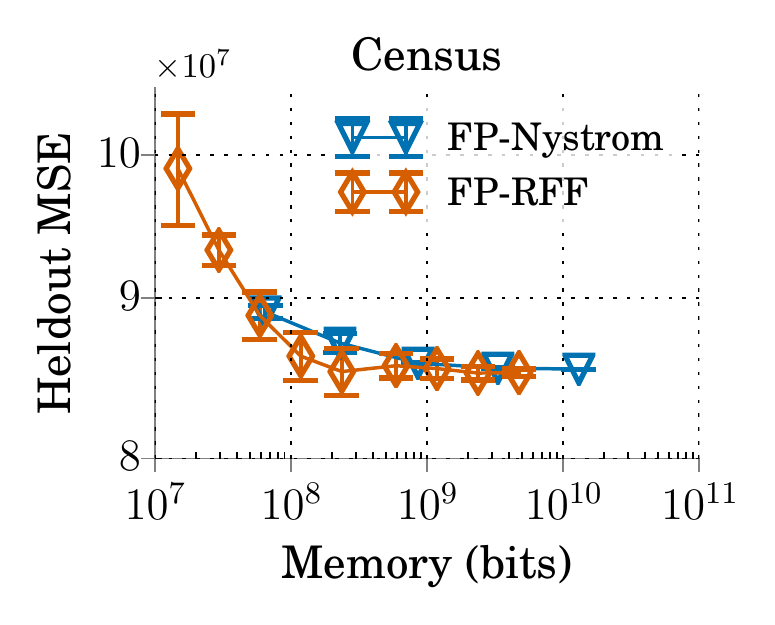} &
	\includegraphics[width=0.245\linewidth]{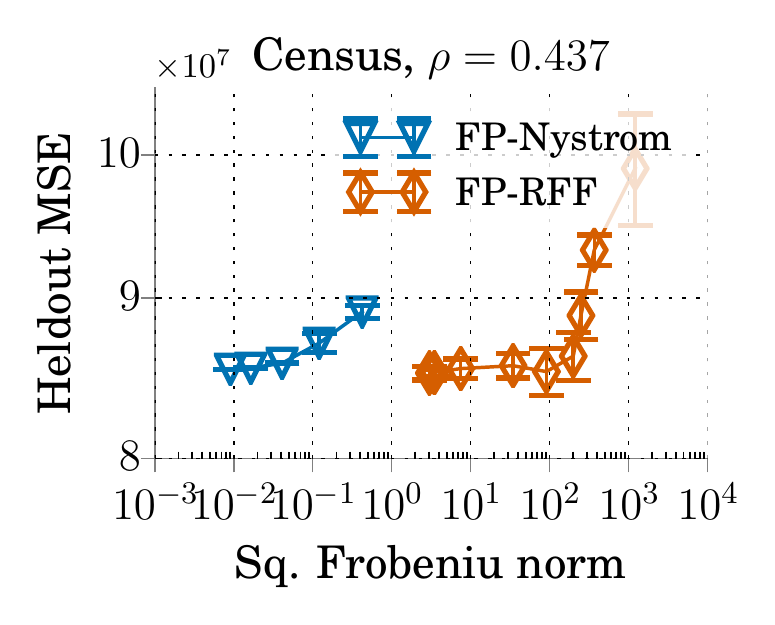} &
	\includegraphics[width=0.245\linewidth]{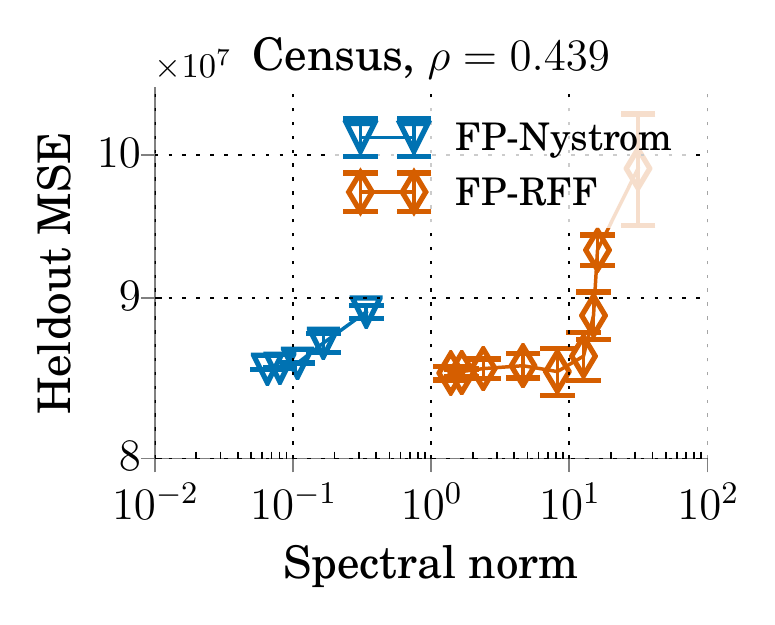} \\
	\includegraphics[width=0.245\linewidth]{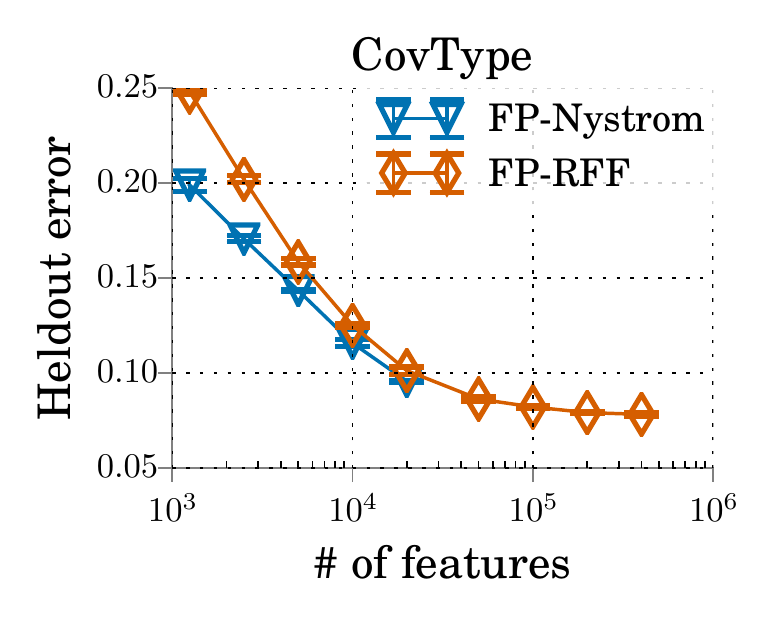} &
	\includegraphics[width=0.245\linewidth]{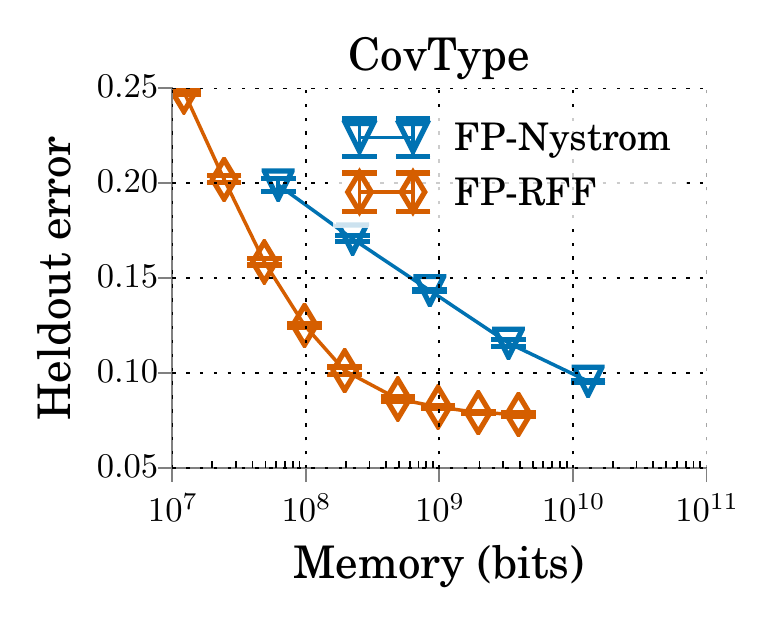} &
	\includegraphics[width=0.245\linewidth]{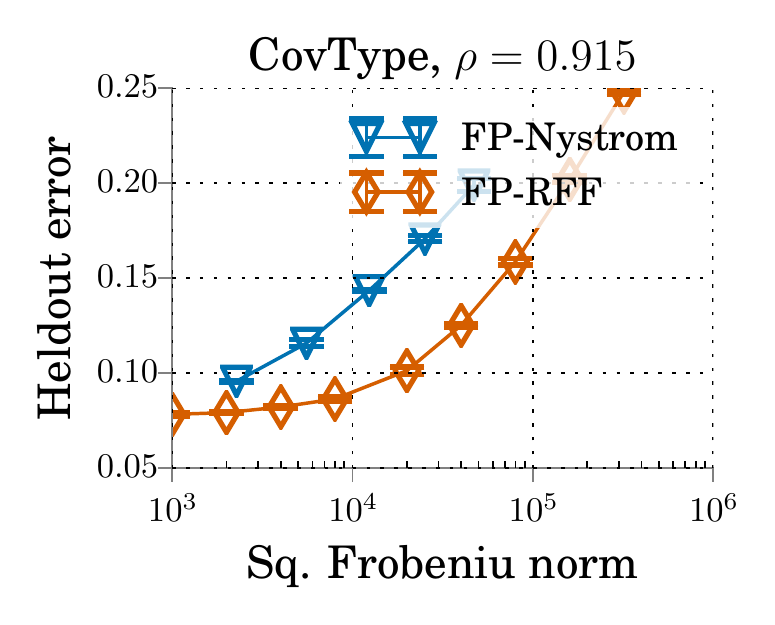} &
	\includegraphics[width=0.245\linewidth]{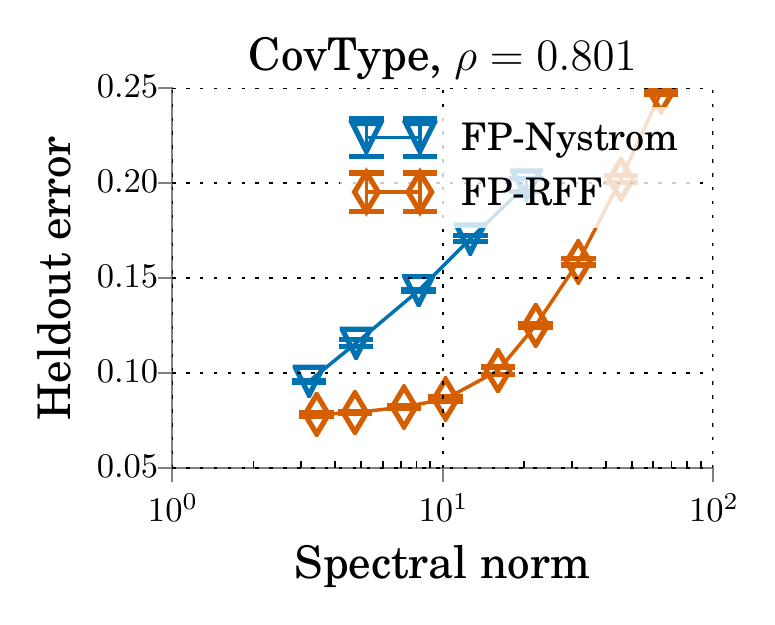} \\
	\includegraphics[width=0.245\linewidth]{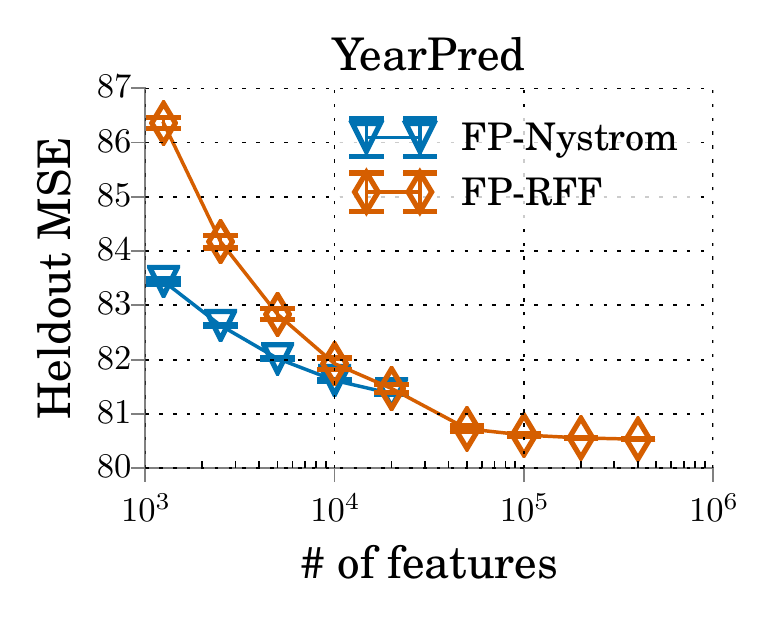} &
	\includegraphics[width=0.245\linewidth]{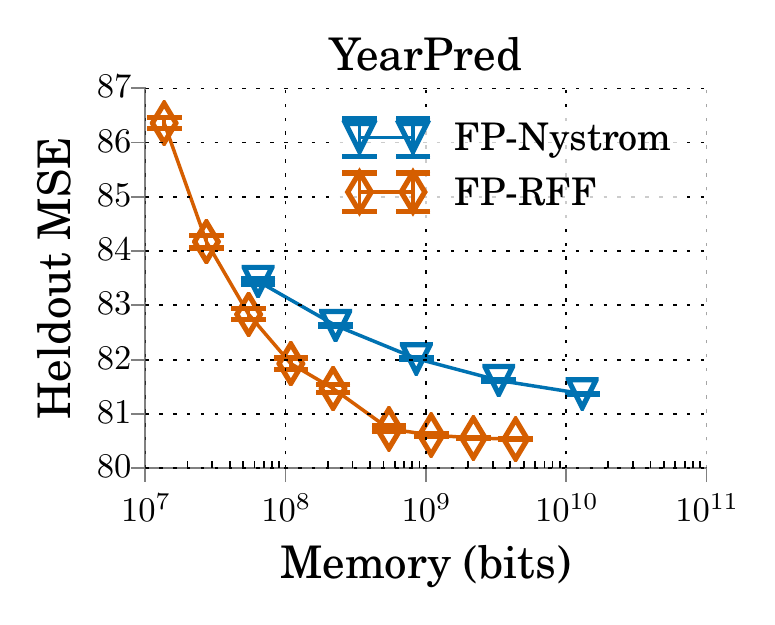} &
	\includegraphics[width=0.245\linewidth]{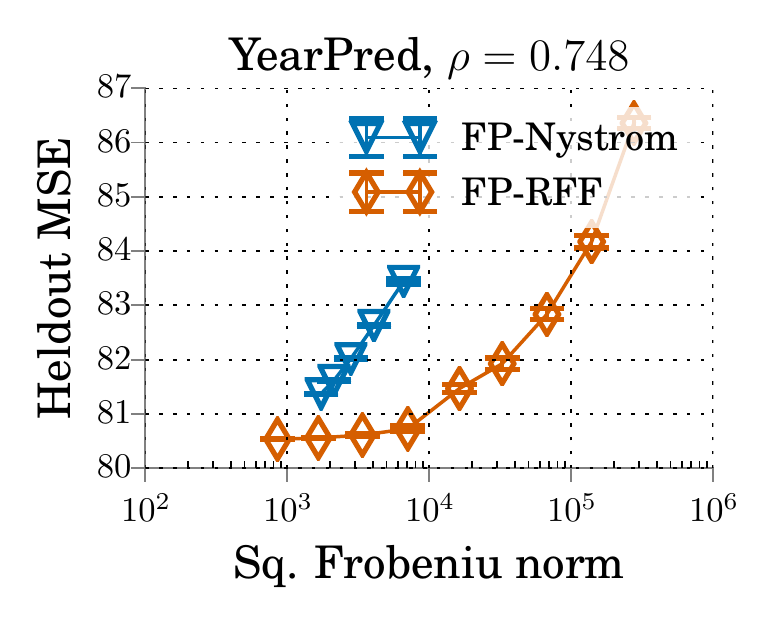} &
	\includegraphics[width=0.245\linewidth]{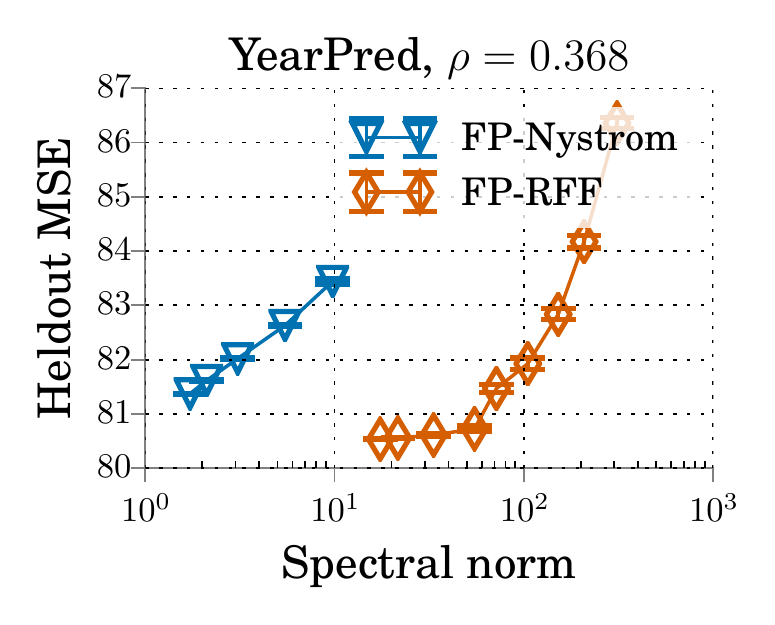} \\
	\includegraphics[width=0.245\linewidth]{figures/timit_error_vs_n_feat_nystrom_vs_rff.pdf} &
	\includegraphics[width=0.245\linewidth]{figures/timit_error_vs_n_memory_nystrom_vs_rff.pdf} &
	\includegraphics[width=0.245\linewidth]{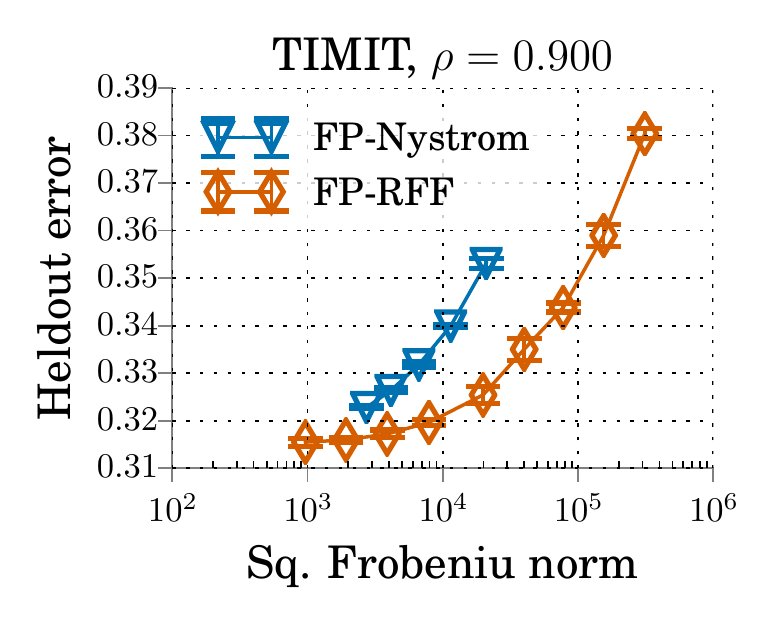} &
	\includegraphics[width=0.245\linewidth]{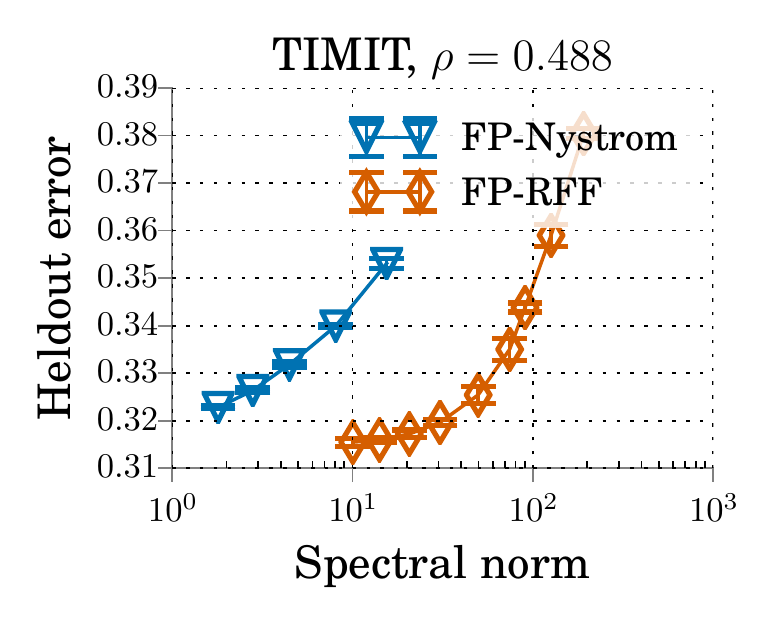} \\
	\;\;\;\;\;(a) & \;\;\;\;\;\;(b) & \;\;\;\;\;\;(c) & \;\;\;\;\;\;(d)
\end{tabular}
\caption{
		Generalization performance of FP-RFFs and \Nystrom with respect to \# features (a) and training memory footprint (b) on Census, CovType, Yearpred and TIMIT.
		\Nystrom performs better for a fixed number of features, while FP-RFFs perform better under a memory budget.
		We also see that the relative performance between these methods cannot be fully explained by the Frobenius norms (c) or spectral norms (d) of their respective kernel approximation error matrices;
		in particular, we see many example of \Nystrom models that have lower Frobenius or spectral error, but worse heldout performance, than various RFF models.
		To quantify the degree of alignment between these error metrics and generalization performance (right plots), we show in the figure titles the Spearman rank correlation coefficients $\rho$ between the corresponding $x$ and $y$ metrics.
		We see in Figure~\ref{fig:perc_delta} that $1/(1-\Delta_1)$ attains notably higher values of $\rho$ than the Frobenius and spectral norms of the kernel approximation error.
		Note that although we plot the average performance across three random seeds for each experimental setting (error bars indicate standard deviation), when we compute the $\rho$ values we treat each experimental result independently (without averaging).
}
	\label{fig:generalization_col_app}
\end{figure}

\subsection{\Nystrom vs.\ RFFs Revisited (Section~\ref{subsec:nys_vs_rff_revisited})}
\label{app:nys_vs_rff_revisited_details}

\subsubsection{Small-Scale Experiments}
\label{subsec:app_exp_smallscale}
In order to better understand what properties of kernel approximation features lead to strong generalization performance, 
we perform a more fine-grained analysis on two smaller datasets from the UCI machine learning repository---we consider the Census regression task, 
and a subset of the CovType task with 20k randomly sampled training points and 20k randomly sampled heldout points.
The reason we use these smaller datasets is because computing the spectral norm, as well as ($\Delta_1$,$\Delta_2$) are expensive operations, which requires instantiating the kernel matrices fully, and performing singular value decompositions.
For the Census experiments, we use the closed-form solution for the kernel ridge regression estimator, and choose the $\lambda$ which gives the best performance on the heldout set.
For CovType, because there is no closed-form solution for logistic regression, we used the following training protocol to (approximately) find the model which minimizes the regularized training loss (just like the closed-form ridge regression solution does).
For each value of $\lambda$, we train the model to (near) convergence using 300 epochs of SGD (mini-batch size 250) at a constant learning rate, and pick the learning rate which gives the lowest regularized training loss for that $\lambda$.
We then evaluate this converged model on the heldout set to see which $\lambda$ gives the best performance.
We pick the best learning rate, as well as regularization parameter, by using 20k \Nystrom features as a proxy for the exact kernel (note that because there are only 20k training points, this \Nystrom approximation is exact). 
We choose the learning rate $5.0$ from the set $\{0.01, 0.05, 0.1, 0.5, 1.0, 5.0, 10.0, 50.0\}$, and the regularization parameter 
$\lambda=\num{5e-6}$ from $\{\num{5e-8}, \num{1e-7}, \num{5e-7}, \num{1e-6}, \num{5e-6}, \num{1e-5}, \num{5e-5}, \num{1e-4}\}$. For the Census dataset, we pick $\lambda=\num{5e-4}$ from $\{\num{1e-5}, \num{5e-5}, \num{1e-4}, \num{5e-4}, \num{1e-3}, \num{5e-3}, \num{1e-2}, \num{5e-2}, \num{1e-1}\}$.
We sweep the number of features shown in Table~\ref{table:n_feat_grid_small_exp}.
For both datasets, we report the average squared Frobenius norm, spectral norm, $\Delta$, $(\Delta_1, \Delta_2)$ and the average generalization performance, along with standard deviations, using five different random seeds.
The results are shown in Figure~\ref{fig:metrics_vs_perf_app}.
As can be seen in Figure~\ref{fig:metrics_vs_perf_app}, $1/(1-\Delta_1)$ aligns much better with generalization performance than the other metrics.

\begin{figure}
	\centering
	\begin{small}
		\begin{tabular}{@{\hskip -0.0in}c@{\hskip -0.0in}c@{\hskip -0.0in}c@{\hskip -0.0in}c@{\hskip -0.0in}}
			\includegraphics[width=0.245\linewidth]{figures/regression_l2_vs_f_norm_FP_only_nystrom_vs_rff.pdf} &
			\includegraphics[width=0.245\linewidth]{figures/regression_l2_vs_s_norm_FP_only_nystrom_vs_rff.pdf} &
			\includegraphics[width=0.245\linewidth]{figures/regression_l2_vs_delta_Avron.pdf} &	
			\includegraphics[width=0.245\linewidth]{figures/regression_l2_vs_delta_FP_delta_left_transform_nystrom_vs_rff.pdf} 
			\\
			\includegraphics[width=0.245\linewidth]{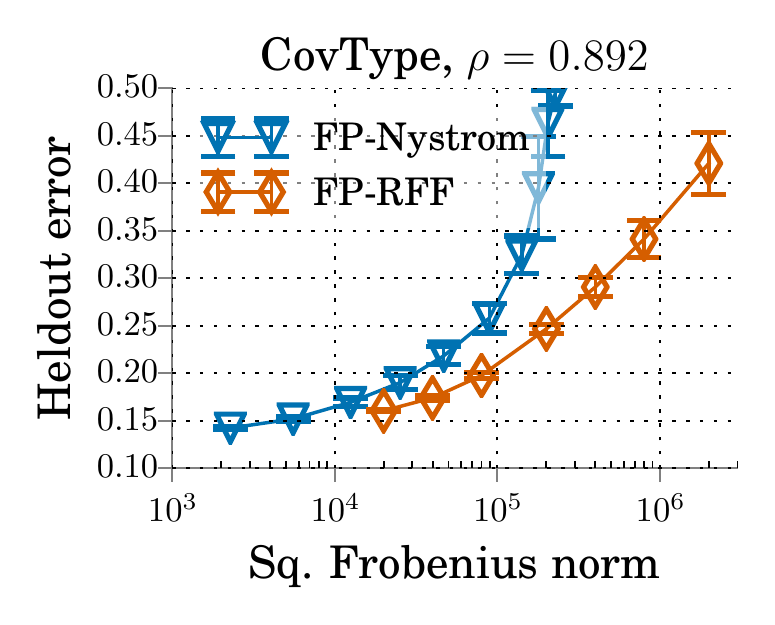} &
			\includegraphics[width=0.245\linewidth]{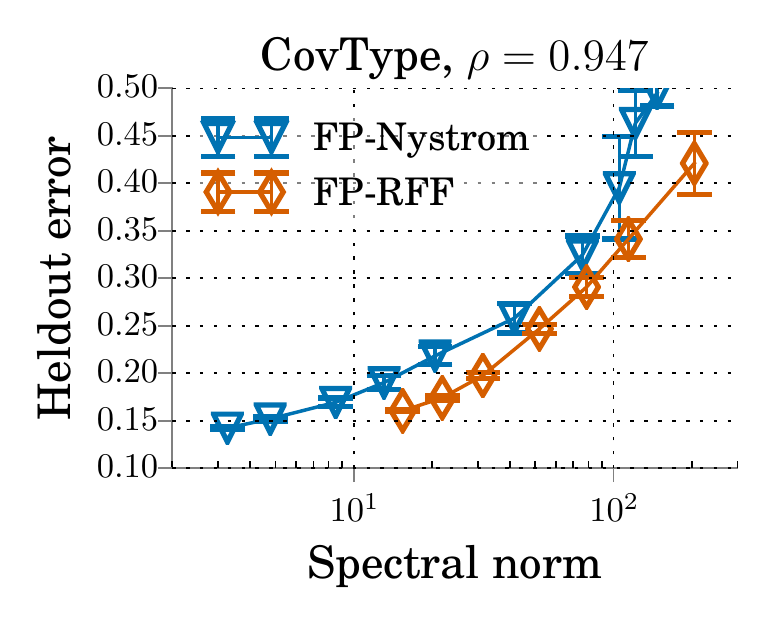} &		
			\includegraphics[width=0.245\linewidth]{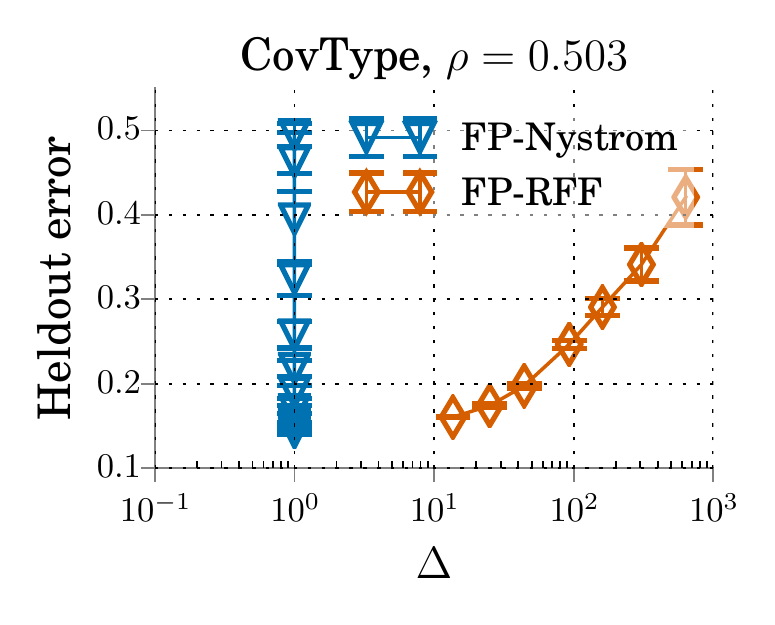} &
			\includegraphics[width=0.245\linewidth]{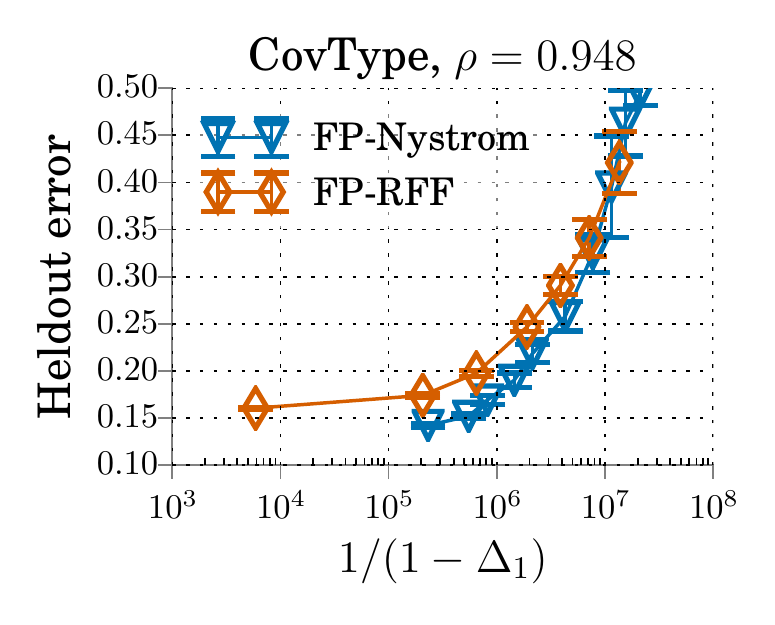}
		\end{tabular}
	\end{small}
	\caption{The correlation between generalization performance and squared Frobenius norm, spectral norm, $\Delta$, and $1/(1-\Delta_1)$ for FP-RFFs and FP-\Nystrom on the Census and subsampled CovType datasets.
	To quantify the alignment between these metrics and downstream performance, we include the Spearman rank correlation coefficients $\rho$ computed between these metrics and the downstream performance of our trained models in the figure titles.
	Note that although we plot the average performance across five random seeds for each experimental setting (error bars indicate standard deviation), when we compute the $\rho$ values we treat each experimental result independently (without averaging).
	}
	\label{fig:metrics_vs_perf_app}
\end{figure}

\begin{figure}
	\centering
	\begin{small}
		\begin{tabular}{@{\hskip -0.0in}c@{\hskip -0.0in}c@{\hskip -0.0in}c@{\hskip -0.0in}c@{\hskip -0.0in}}
			\includegraphics[width=0.245\linewidth]{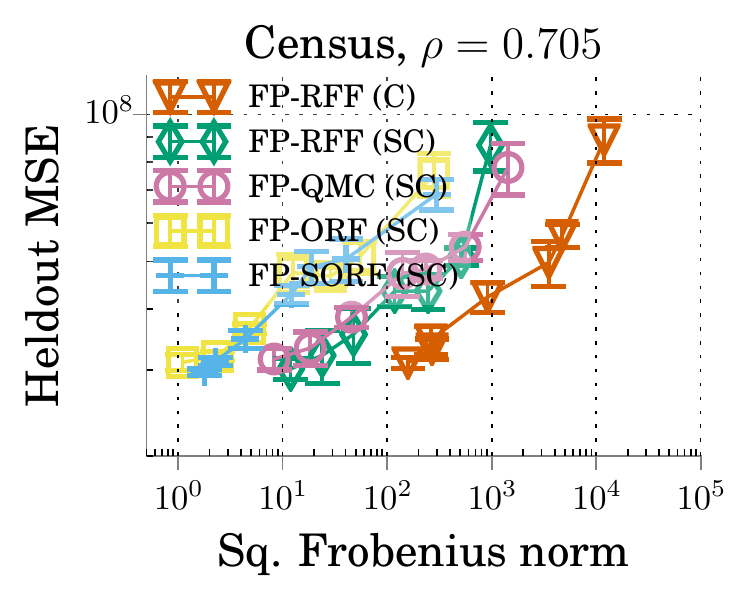} &
			\includegraphics[width=0.245\linewidth]{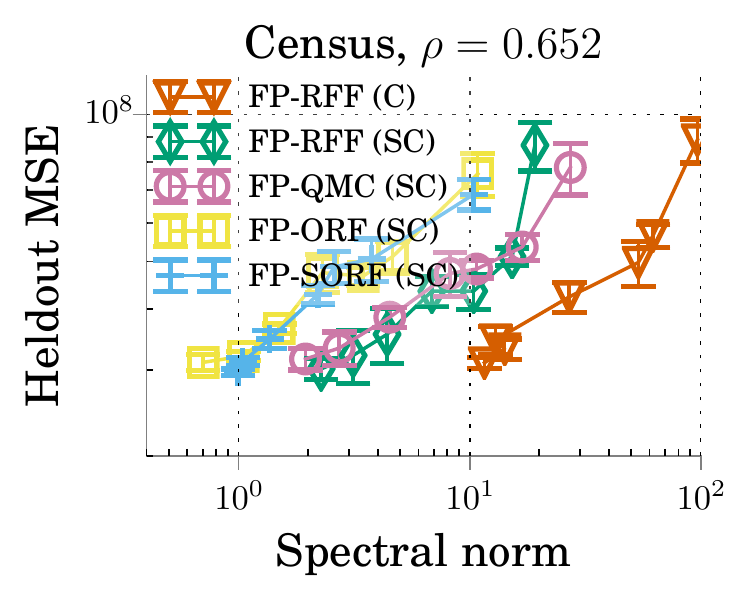} &
			\includegraphics[width=0.245\linewidth]{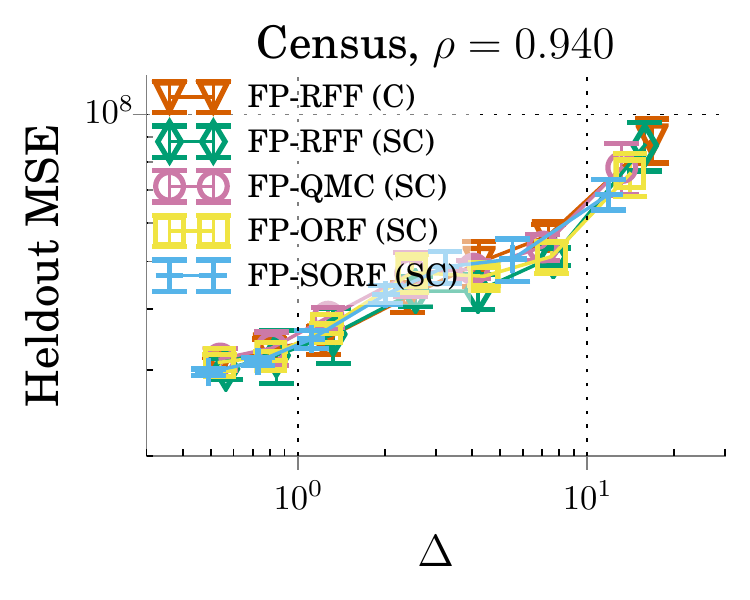} &	
			\includegraphics[width=0.245\linewidth]{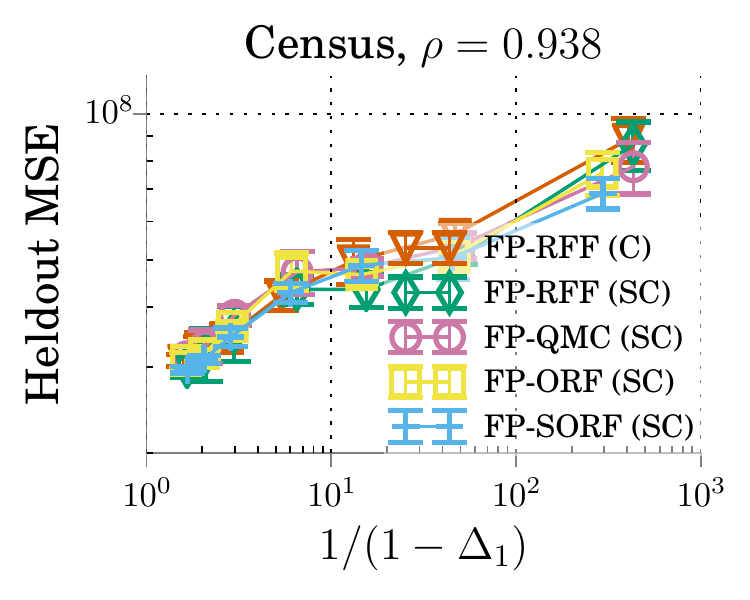} 
		\end{tabular}
	\end{small}
	\caption{The correlation between generalization performance and squared Frobenius norm, spectral norm, $\Delta$, and $1/(1-\Delta_1)$ for various types of full-precision RFFs.
	The Spearman rank correlation coefficients $\rho$ between the $x$ and $y$ metrics of each figure are included in the figure titles.
	For these full-precision RFF experiments, we see that the original $\Delta$ \citep{avron17} as well as $1/(1-\Delta_1)$ both align very well with downstream performance ($\rho = 0.940$ and $\rho=0.938$, respectively), while the Frobenius and spectral norms do not ($\rho = 0.705$ and $\rho=0.652$, respectively).
	Note that although we plot the average performance across five random seeds for each experimental setting (error bars indicate standard deviation), when we compute the $\rho$ values we treat each experimental result independently (without averaging).
	}
	\label{fig:metrics_vs_perf_app_many_rff}
\end{figure}

\begin{figure}
\centering	
\begin{tabular}{c c}
	\includegraphics[height=0.275\linewidth]{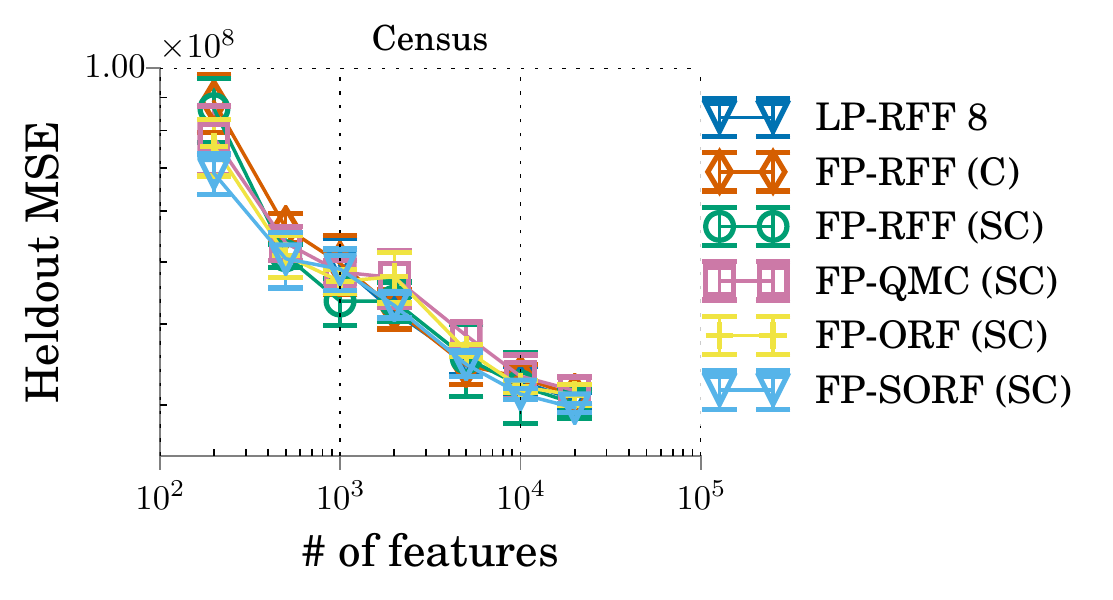} &
	\includegraphics[height=0.275\linewidth]{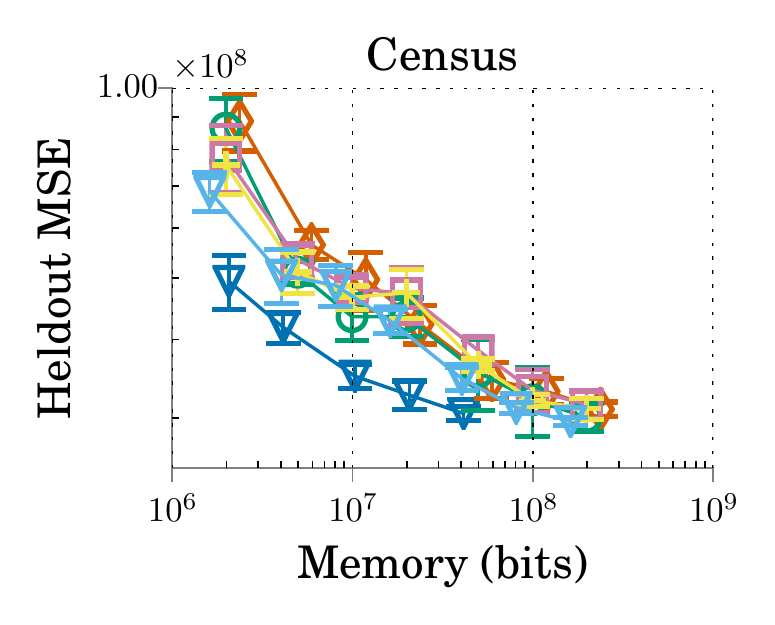}
\end{tabular}
\caption{The generalization performance as a function of the number of features (left) and the training memory footprint (right) for various types of RFFs on the Census dataset.
We observe that LP-RFFs can achieve lower heldout mean-squared error (MSE) than other types of RFFs, included the memory-efficient structured orthogonal random features (``FP-SORF (SC)''), under the same memory budget.
We plot performance averaged over five random seeds, with error bars indicating standard deviations.
}
\label{fig:many_rff_mem_feat}
\end{figure}

\begin{table}
		\caption{Number of features used for the different kernel approximation methods in the experiments on generalization performance vs. $(\Delta_1,\Delta_2)$ in Sections \ref{subsec:nys_vs_rff_revisited} and \ref{subsec:gen_perf_and_delta}.}
	\centering
	\begin{tabular}{c | c}
		\toprule
		Methods & Number of features \\
		\midrule
		FP-\Nystrom & $25, 50, 100, 200, 500, 1250, 2500, 5000, 10000, 20000$ \\
		FP-RFF & $200, 500, 1000, 2000, 5000, 10000, 20000$ \\
		Cir. FP-RFF & $200, 500, 1000, 2000, 5000, 10000, 20000$ \\
		LP-RFF $16$ & $500, 1000, 2000, 5000, 10000, 20000, 50000$ \\
		LP-RFF $8, 4, 2, 1$ & $1000, 2000, 5000, 10000, 20000, 50000$ \\
		\bottomrule
	\end{tabular}
	\label{table:n_feat_grid_small_exp}
\end{table}

It is important to note that although $1/(1-\Delta_1)$ aligns quite well with generalization performance (Spearman rank correlation coefficients $\rho$ equal to 0.958 and 0.948 on Census and CovType, respectively), it does not align perfectly.
In particular, we see that for a fixed value of $\Delta_1$, \Nystrom generally attains better heldout performance than RFFs.
We believe this is largely explained by the fact that \Nystrom always has $\Delta_2=0$, while RFFs can have relatively large values of $\Delta_2$ when the number of features is small (see Figure~\ref{fig:theory_supporting}).
As we show in the generalization bound in Proposition~\ref{prop:alphabeta}, in Figure~\ref{fig:gen_delta_correlation}, as well as in the empirical and theoretical analysis in Appendix~\ref{sec:app_generalization_bound}, we expect generalization performance to deteriorate as $\Delta_2$ increases.

\subsubsection{Experiments with Different Types of RFFs}
In Figure~\ref{fig:metrics_vs_perf_app_many_rff} we repeat the above experiment on the Census dataset using a number of variations of RFFs.
Specifically, we run experiments using the $[\sin(w^Tx),\cos(w^Tx)]$ parameterization of RFFs, which \citet{sutherland15} show has lower variance than the $\cos(w^Tx + a)$ parameterization (we denote the $[\sin,\cos]$ method by ``FP-RFF (SC)'', and the $\cos$ method by ``FP-RFF (C)'').
We additionally use Quasi-Monte Carlo features \citep{qmc}, as well as orthogonal random features and its structural variant \citep{yu16};
we denote these by ``FP-QMC (SC)'', ``FP-ORF (SC)'', ``FP-SORF (SC)'' respectively, because we implement them using the $[\sin,\cos]$ parameterization.
We can see in Figure~\ref{fig:metrics_vs_perf_app_many_rff} that although these methods attain meaningful improvements in the Frobenius and spectral norms of the approximation error, these improvements do not translate into corresponding gains in heldout performance.
Once again we see that $1/(1-\Delta_1)$ is able to much better explain the relative performance between different kernel approximation methods than Frobenius and spectral error.
In Figure~\ref{fig:many_rff_mem_feat} we see that LP-RFFs outperform these other methods in terms of performance under a memory budget.

\subsubsection{Large-Scale $(\Delta_1,\Delta_2)$ Experiments}
In Figure~\ref{fig:perc_delta} we plot the performance vs.\ $1/(1-\Delta_1)$ on the large-scale experiments from Section~\ref{subsec:nys_vs_rff_exp}.
We measure $\Delta_1$ using the exact and approximate kernel matrices on a random subset of 20k heldout points (except for Census, where we use the entire heldout set which has approximately 2k points).
We pick several $\lambda$ values between the smallest and largest eigenvalues of the exact (subsampled) training kernel matrix.
The strong alignment between generalization performance and $1/(1-\Delta_1)$ is quite robust to the choice of $\lambda$.

\subsubsection{Measuring $\Delta$ and $(\Delta_1,\Delta_2)$}
In order to measure the $\Delta_1$ and $\Delta_2$ between a kernel matrix $K$ and an approximation $\tK$, we first observe that the following statements are equivalent.
Note that to get the second statement, we multiply the expressions in the first statement by $(K + \lambda I_n)^{-1/2}$ on the left and right.
\begin{eqnarray*}
	(1-\Delta_1)(K + \lambda I_n) &\preceq& \tK + \lambda I_n \preceq (1 + \Delta_2)(K + \lambda I_n) \\
	(1-\Delta_1) I_n &\preceq& (K + \lambda I_n)^{-1/2} (\tK + \lambda I_n) (K + \lambda I_n)^{-1/2} \preceq (1 + \Delta_2) I_n \\
	-\Delta_1 I_n &\preceq& (K + \lambda I_n)^{-1/2} \Big(\tK + \lambda I_n - (K + \lambda I_n)\Big) (K + \lambda I_n)^{-1/2} \preceq \Delta_2 I_n \\
	-\Delta_1 I_n &\preceq& (K + \lambda I_n)^{-1/2} (\tK - K) (K + \lambda I_n)^{-1/2} \preceq \Delta_2 I_n.
\end{eqnarray*}
For $A=(K + \lambda I_n)^{-1/2}(\tK - K) (K + \lambda I_n)^{-1/2}$, this is equivalent to 
$-\Delta_1 \leq  \lambda_{min}\big(A\big)$ and $\lambda_{max}\big(A\big) \leq \Delta_2$.
Thus, we choose $\Delta_1 \defeq -\lambda_{min}\big(A\big)$ and $\Delta_2\defeq \lambda_{max}\big(A\big)$.
Lastly, we note that $\Delta = \max(\Delta_1,\Delta_2)$.

\begin{figure}
	\centering
	\begin{tabular} {c c c c}
	\includegraphics[width=0.23\linewidth]{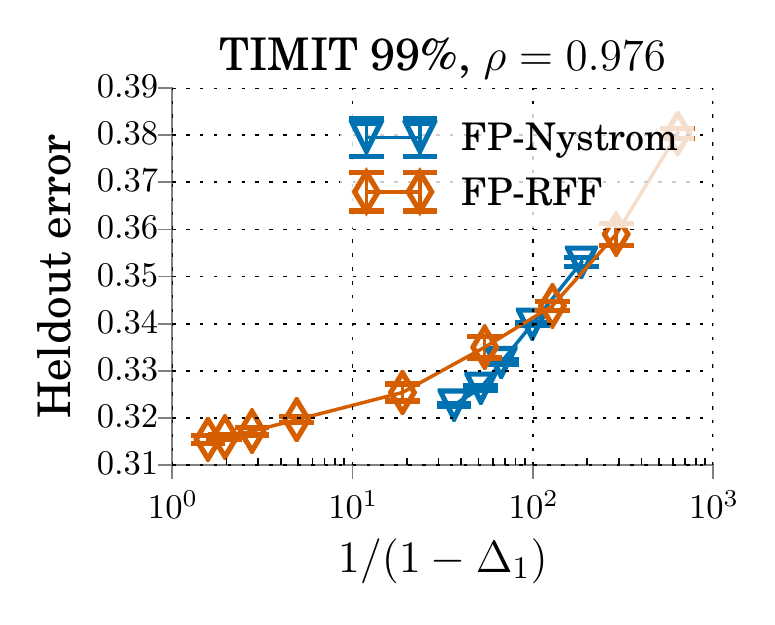} &
	\includegraphics[width=0.23\linewidth]{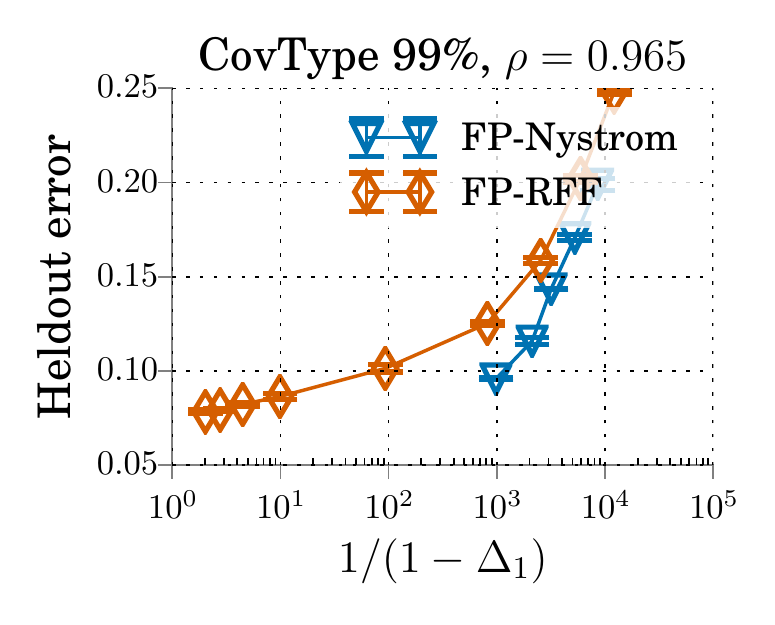} &
	\includegraphics[width=0.23\linewidth]{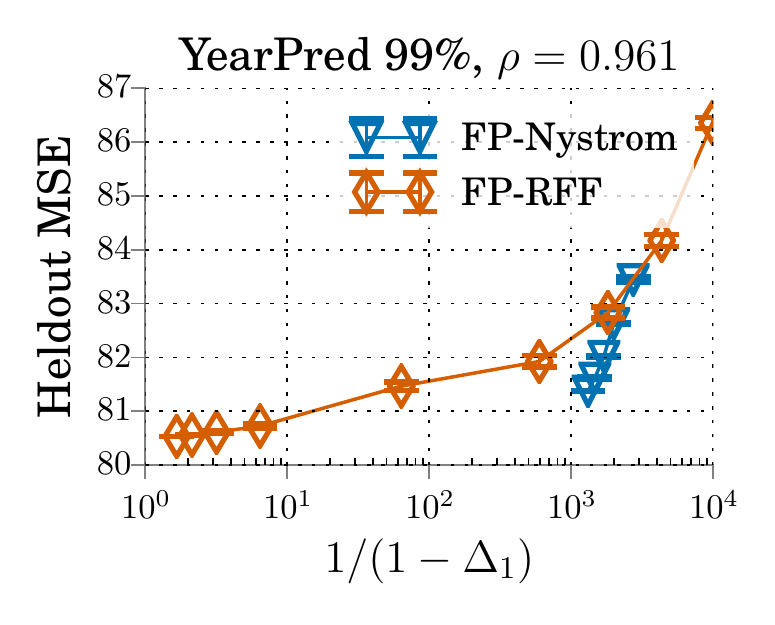} &
	\includegraphics[width=0.23\linewidth]{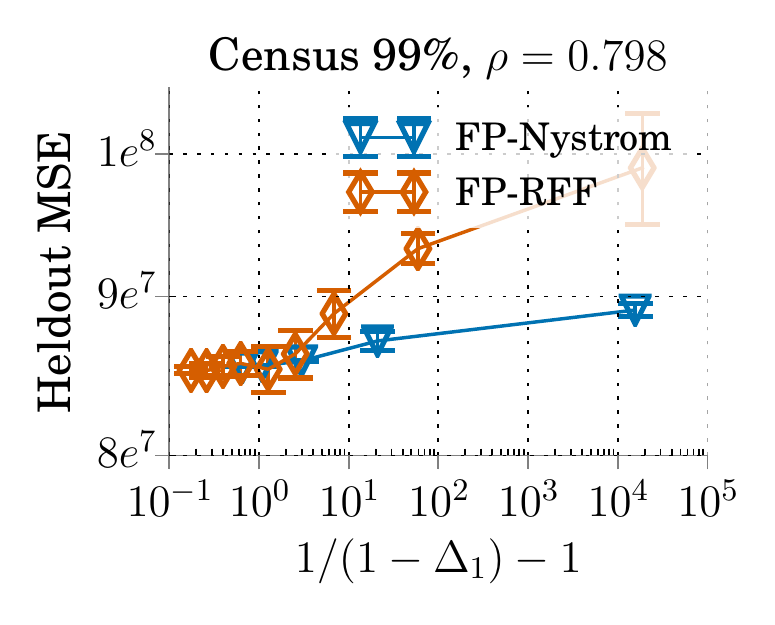} \\[-0.5em]
	\includegraphics[width=0.23\linewidth]{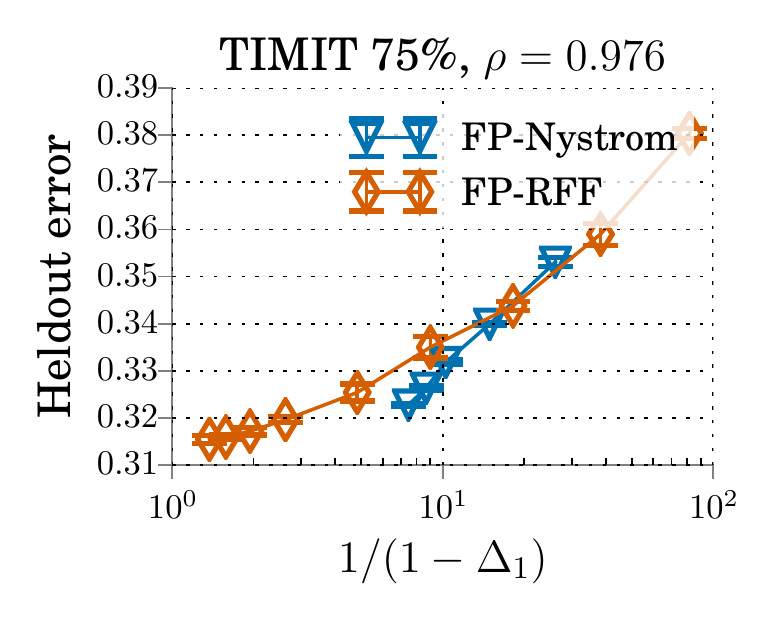} &
	\includegraphics[width=0.23\linewidth]{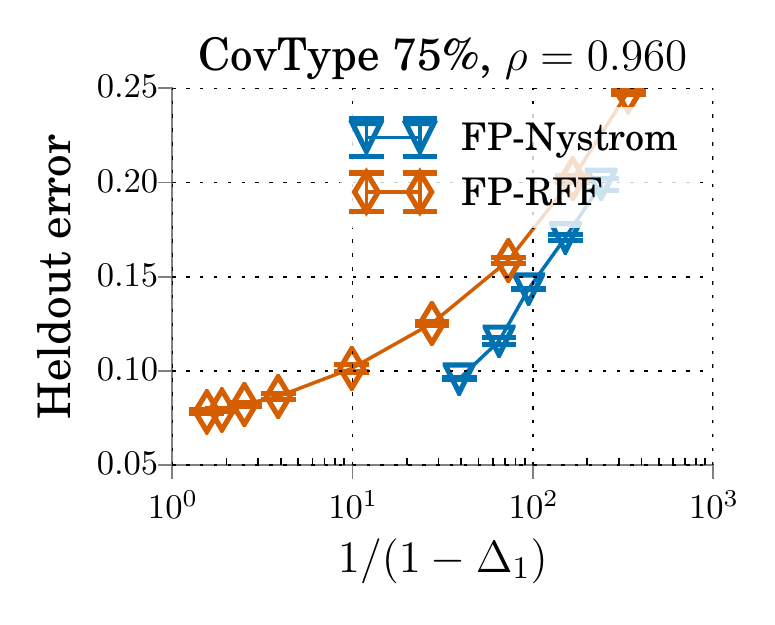} &
	\includegraphics[width=0.23\linewidth]{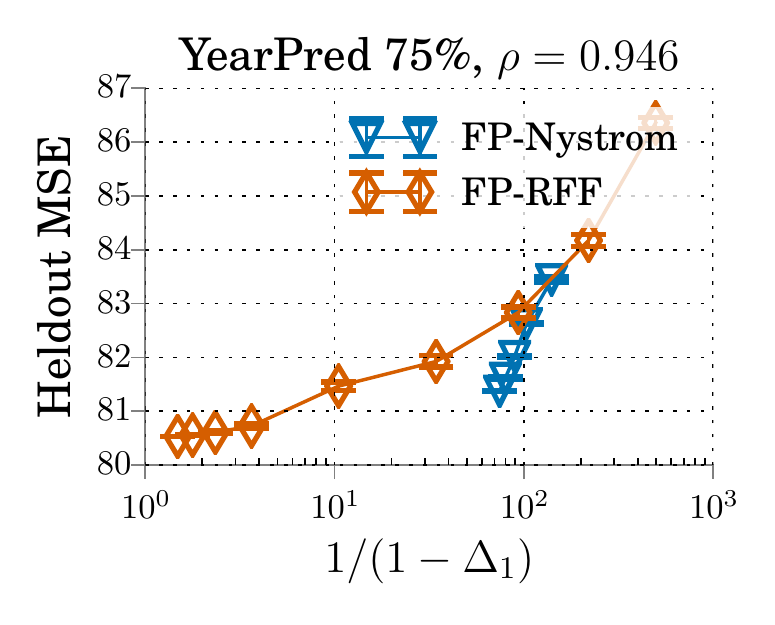} &
	\includegraphics[width=0.23\linewidth]{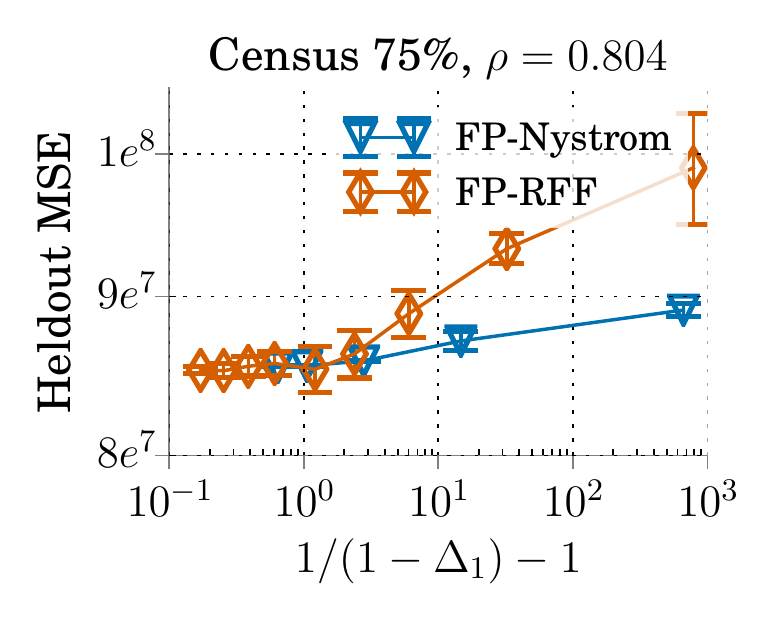} \\[-0.5em]
	\includegraphics[width=0.23\linewidth]{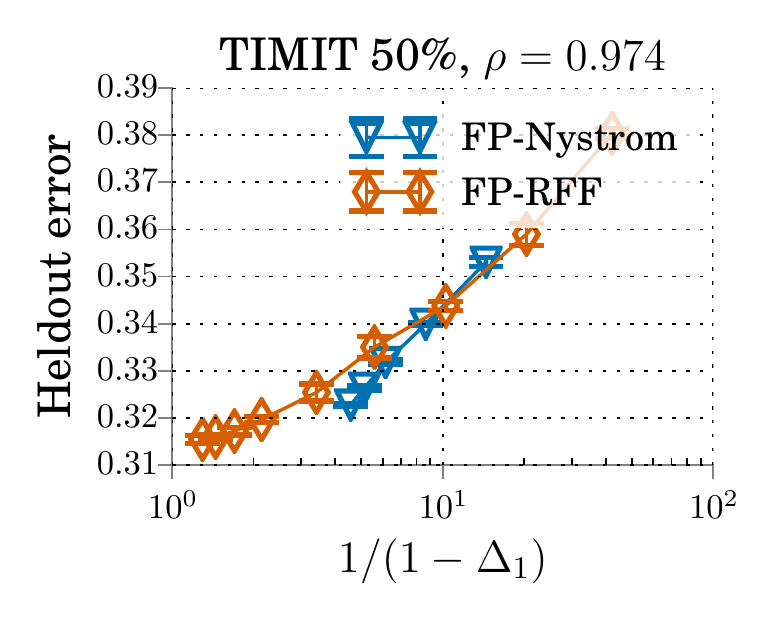} &
	\includegraphics[width=0.23\linewidth]{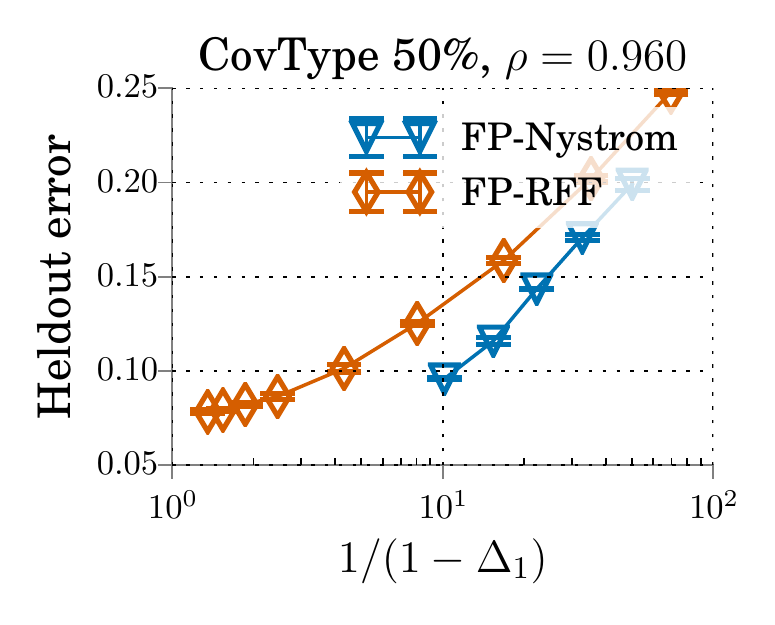} &
	\includegraphics[width=0.23\linewidth]{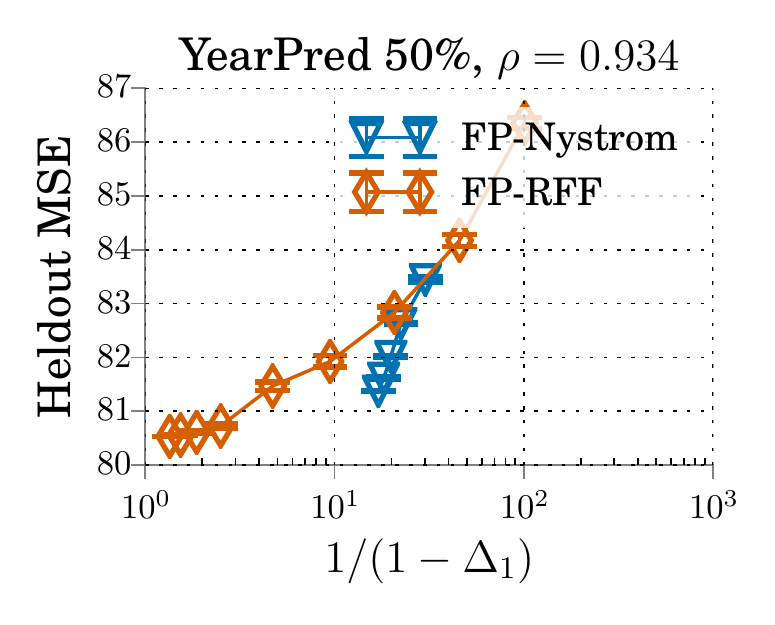} &
	\includegraphics[width=0.23\linewidth]{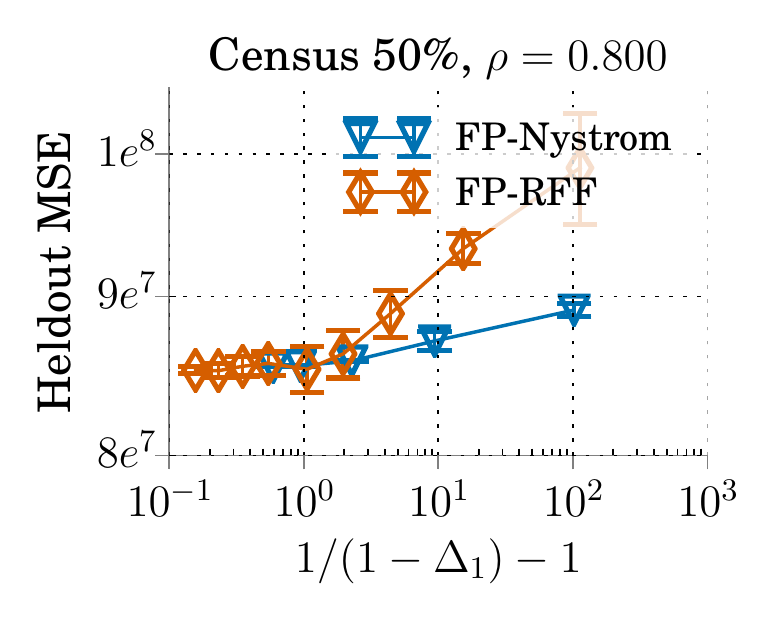} \\[-0.5em]
	\includegraphics[width=0.23\linewidth]{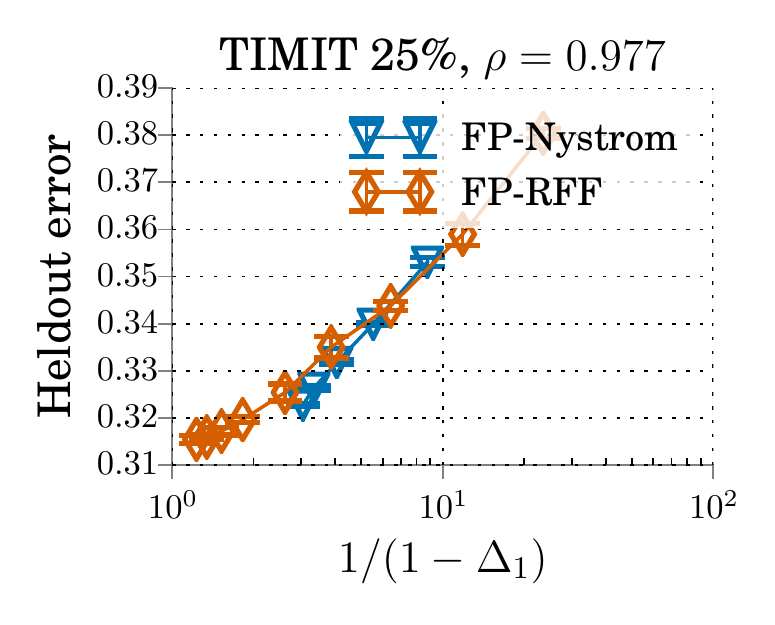} &
	\includegraphics[width=0.23\linewidth]{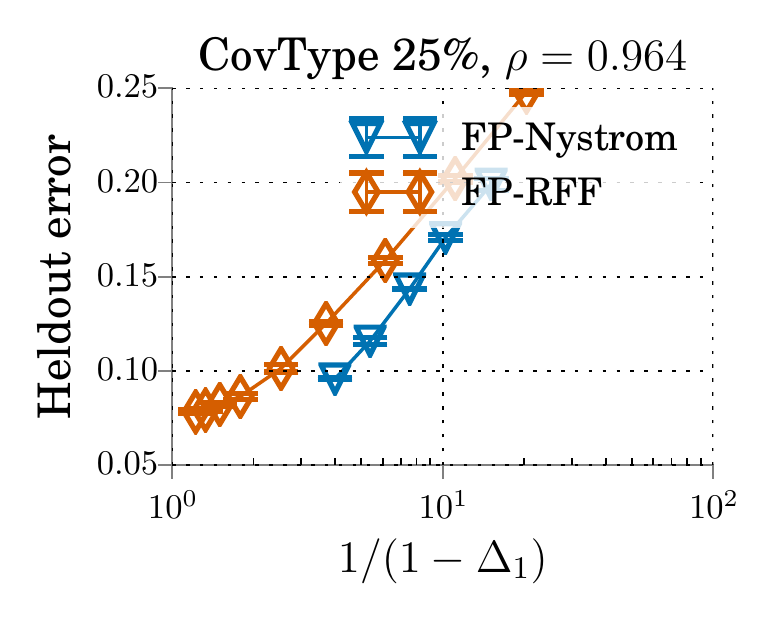} &
	\includegraphics[width=0.23\linewidth]{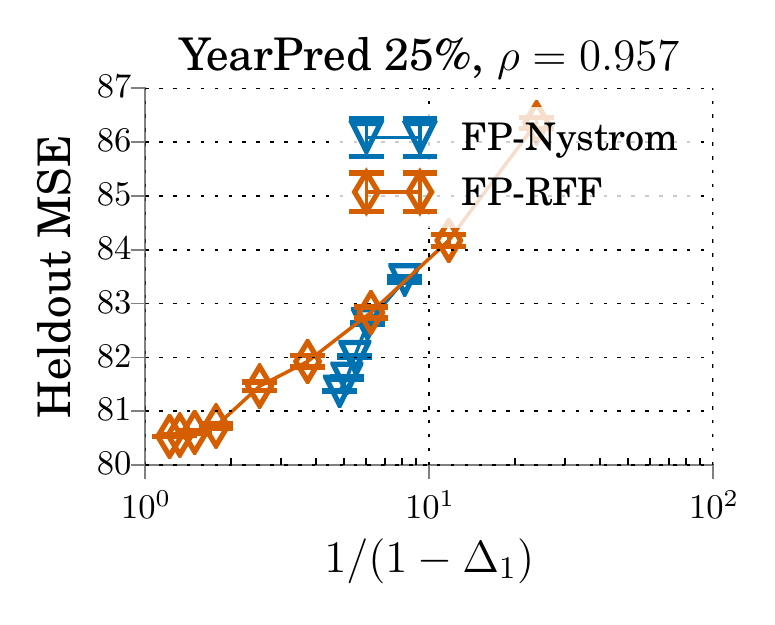} &
	\includegraphics[width=0.23\linewidth]{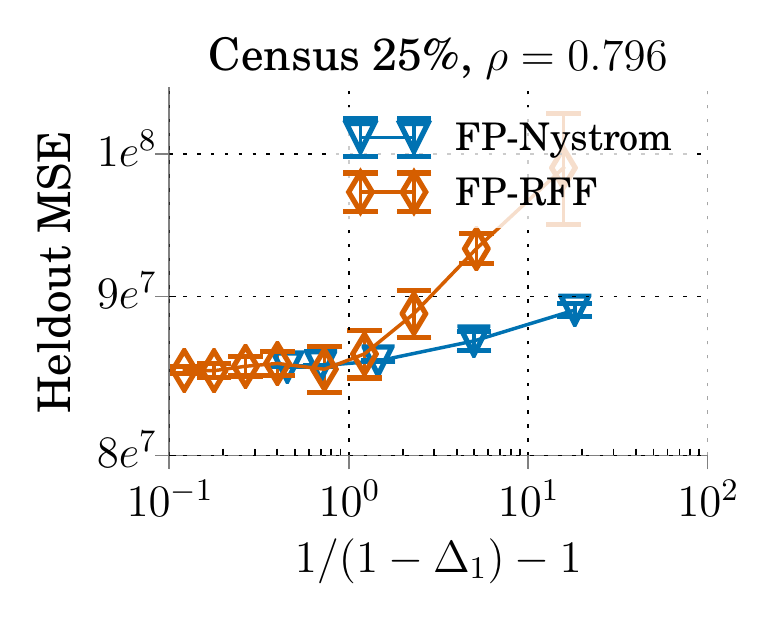} \\[-0.5em]
	\includegraphics[width=0.23\linewidth]{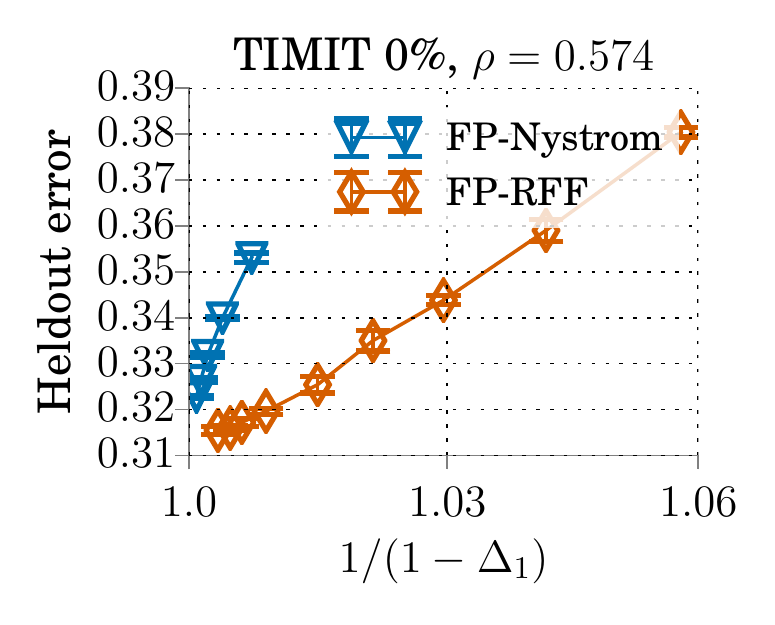} &
	\includegraphics[width=0.23\linewidth]{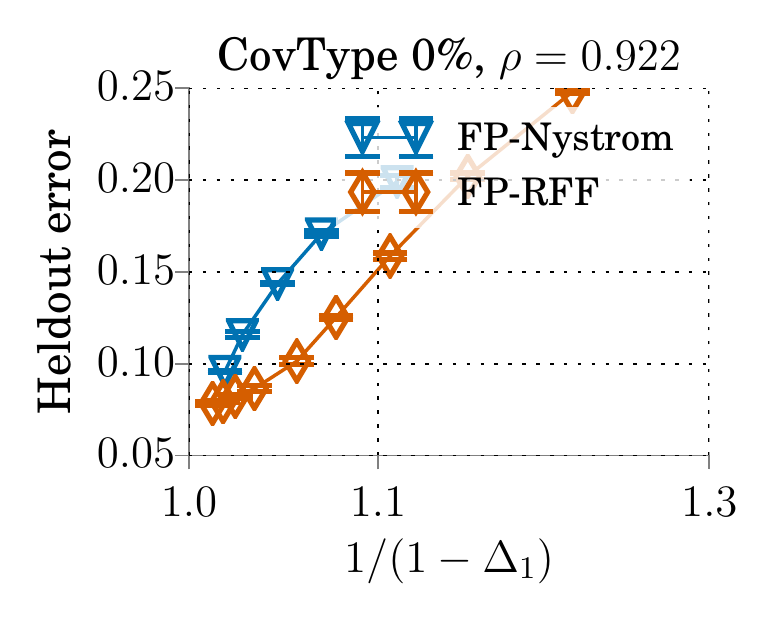} &
	\includegraphics[width=0.23\linewidth]{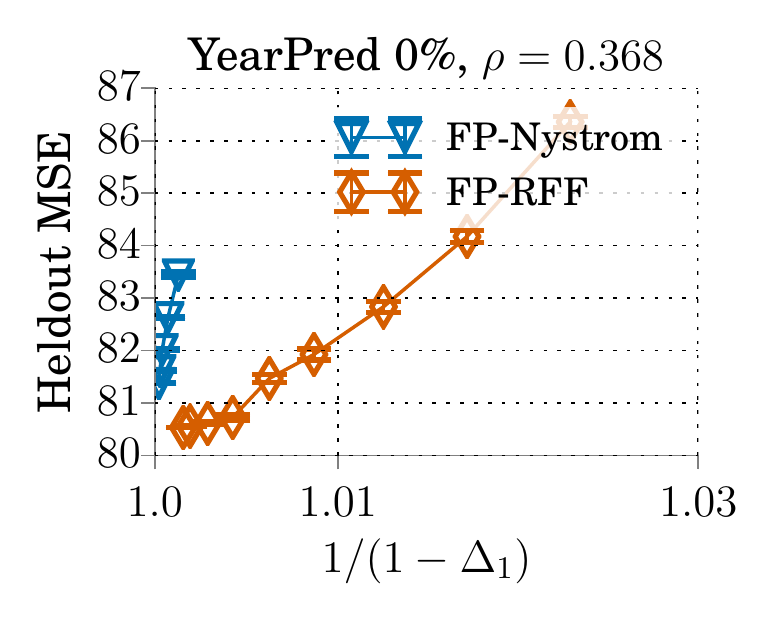} &
	\includegraphics[width=0.23\linewidth]{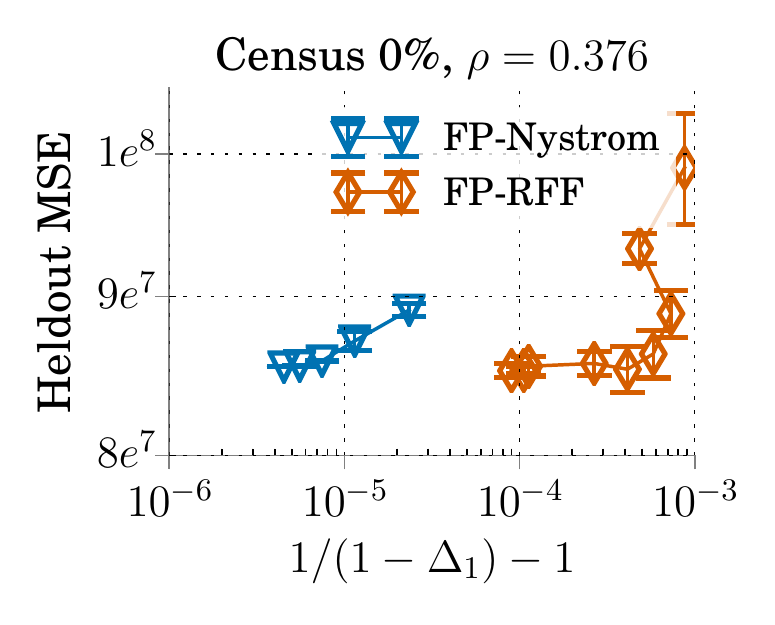} \\[-0.5em]
	\end{tabular}
	\caption{Generalization performance vs.\ $\frac{1}{1-\Delta_1}$, where we measure $\Delta_1$ using regularizer strength $\lambda$ equal to the 0, 25, 50, 75, or 99 percentile eigenvalues of the exact kernel matrix (0 percentile indicates largest eigenvalue). Note for the Census dataset, we plot the heldout MSE as a function of $1/(1-\Delta_1) - 1$ to avoid cluttering the data points on the left end of the figure. For comparison to spectral and Frobenius norm plots, see Figure~\ref{fig:generalization_col_app}.
	To quantify the degree of alignment between $1/(1-\Delta_1)$ and generalization performance for these different values of $\lambda$, we compute the Spearman rank correlation coefficients $\rho$.
	We see $1/(1-\Delta_1)$ generally attains much higher values of $\rho$ than the Frobenius and spectral approximation errors (Figure~\ref{fig:generalization_col_app}).
	Although we plot performance averaged across three random seeds (error bars indicate standard deviations), when we compute $\rho$ we treat each experimental result independently.
	}
	\label{fig:perc_delta}
\end{figure}

\subsection{Theory Validation (Section~\ref{subsec:theory})}
\label{app:theory_validation_details}
To validate our theory in Section~\ref{subsec:theory}, we perform two sets of experiments to (1) demonstrate the asymptotic behavior of $\Delta_1$ and $\Delta_2$ as the number of features increases, and (2) demonstrate that quantization has negligible effect on $\Delta_2$ when $\delta_b^2/\lambda \ll \Delta_2$.

To demonstrate the behavior of $\Delta_1$ and $\Delta_2$ as a function of the number of features, we collect $\Delta_1$, and $\Delta_2$ using \Nystrom features, circulant FP-RFFs, and LP-RFFs using $b \in \{1, 4, 8\}$, on both the Census and the sub-sampled CovType datasets (20k random heldout points).
For each approximation, we sweep the number of features as listed in Table~\ref{table:n_feat_grid_theory_val}. We use the same value of $\lambda$ as we used in Section~\ref{subsec:nys_vs_rff_revisited} for each dataset.
In Figure~\ref{fig:theory_supporting_app}, we plot the values of $\Delta_1$ and $\Delta_2$ attained by these methods as a function of the number of features.
As discussed in Section~\ref{subsec:theory}, $\Delta_1$ is primarily determined by the rank of the approximation, and approaches 0 for all the methods as the number of features grows.
$\Delta_2$, on the other hand, only approaches 0 for the high-precision methods---for $b\in\{1,4\}$, $\Delta_2$ converges to higher values (at most $\delta_b^2/\lambda$, marked by dashed lines).

To demonstrate that quantization has negligible effect on $\Delta_2$ when $\delta_b^2/\lambda \ll \Delta_2$, we use 8000 random training points from the Census dataset.
For $\lambda \in \{10^{-4},10^0,10^4\}$, and $b\in\{1,2,4,8,16,32\}$, we measure the $\Delta_2$ attained by the LP-RFFs relative to the exact kernel matrix.
We see that for larger $\lambda$ (corresponding to smaller $\delta_b^2/\lambda$) in Figure~\ref{fig:theory_supporting}, lower precisions can be used while not influencing $\Delta_2$ significantly; this aligns with the theory.
\begin{figure*}
	\centering
	\begin{small}
\vfigsp
		\begin{tabular}{c c}
			\includegraphics[height=0.23\linewidth]{figures/regression_delta_left_vs_n_feat_append.pdf} &      
			\includegraphics[height=0.23\linewidth]{figures/regression_delta_right_vs_n_feat.pdf} \\
			\includegraphics[height=0.23\linewidth]{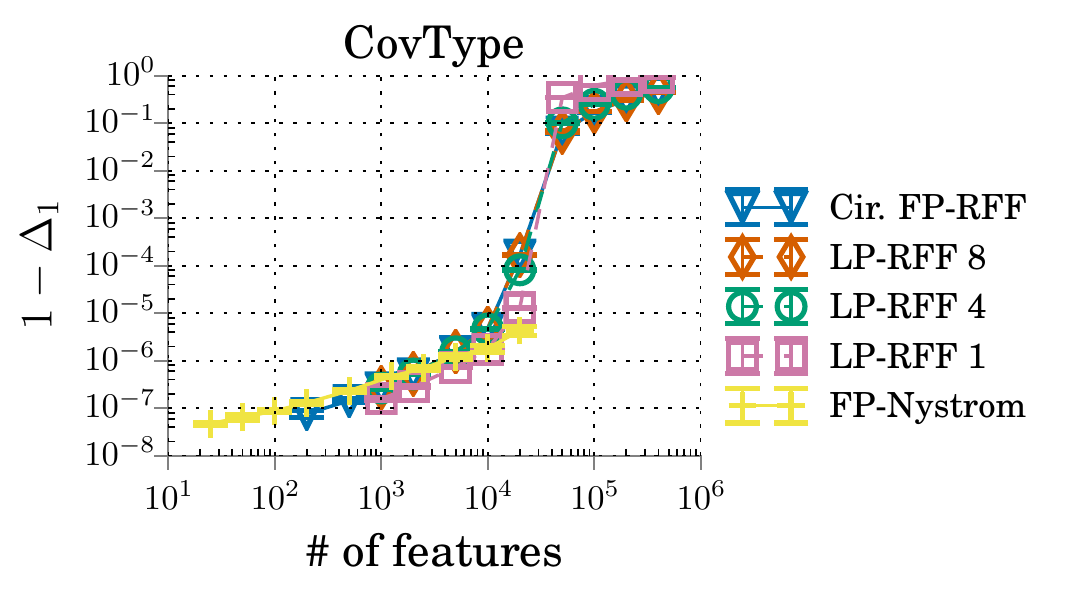} &      
			\includegraphics[height=0.23\linewidth]{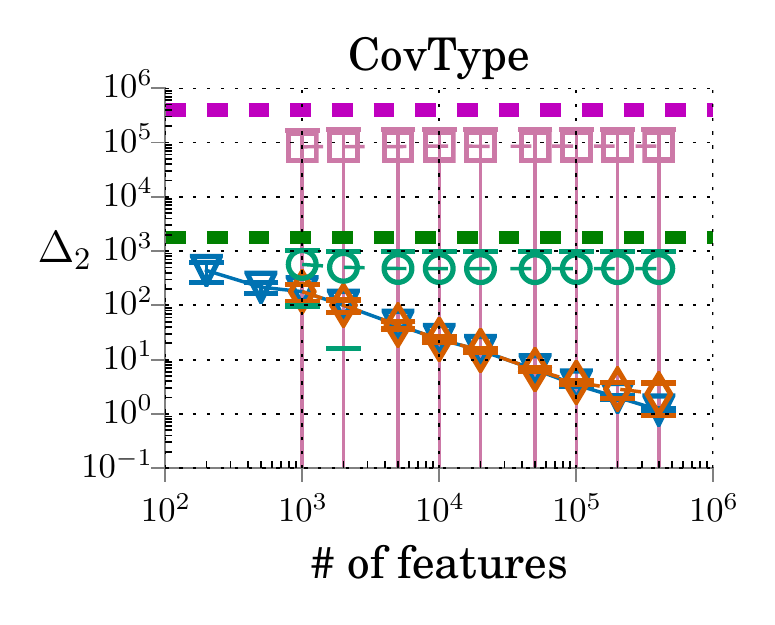} \\
		\end{tabular}
	\end{small}
\vfigsp
	\caption{Empirical validation of Theorem~\ref{thm2}. As the number of features grows, LP-RFFs approach $\Delta_1$ values of 0 (left), but plateau at larger $\Delta_2$ for very low precisions (right).
	These are an extended version of the results from Figure~\ref{fig:theory_supporting} (left, middle) in Section~\ref{subsec:theory}.
	We plot average results across five random seeds, with error bars indicating standard deviations.
}
	\label{fig:theory_supporting_app}
\end{figure*}

\begin{table}
	\caption{Number of features used for the different kernel approximation methods in the theory validation experiments in Section~\ref{subsec:theory} (Figure~\ref{fig:theory_supporting} (left,middle)).}
	\centering
	\begin{tabular}{c | c}
		\toprule
		Methods & Number of features \\
		\midrule
		FP-\Nystrom & $25, 50, 100, 200, 500, 1250, 2500, 5000, 10000, 20000$ \\
		Cir. FP-RFF & $200, 500, 1000, 2000, 5000, 10000, 20000, 50000, 100000, 200000, 400000$ \\
		LP-RFF $8, 4, 2, 1$ & $1000, 2000, 5000, 10000, 20000, 50000, 100000, 200000, 400000$ \\
		\bottomrule
	\end{tabular}
	\label{table:n_feat_grid_theory_val}
\end{table}

\begin{figure}
	\centering
	\vspace{-4em}
	\begin{tabular}{c c}
		\includegraphics[height=0.275\linewidth]{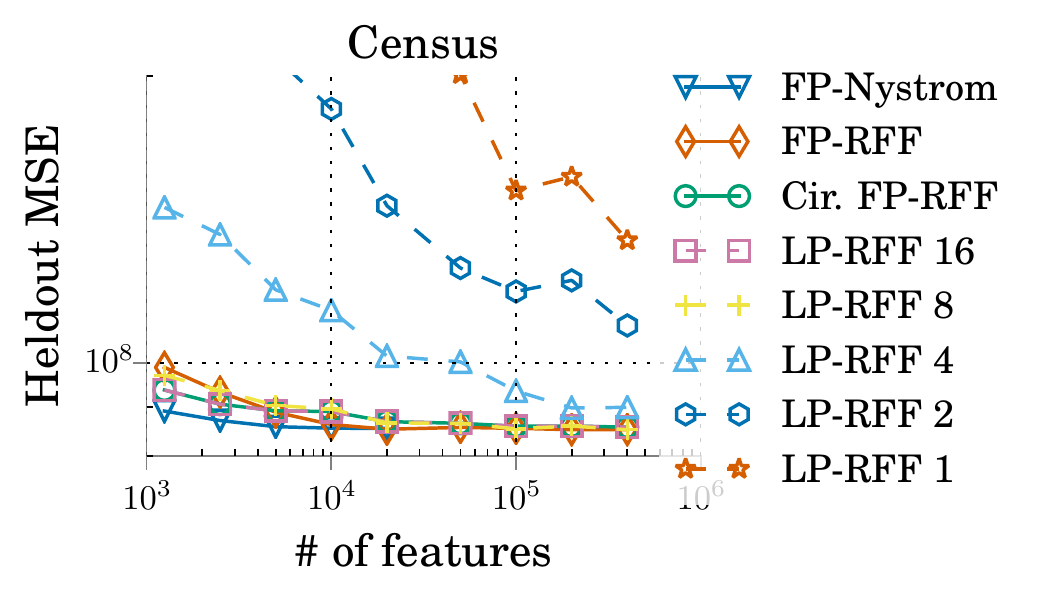} &
		\includegraphics[height=0.275\linewidth]{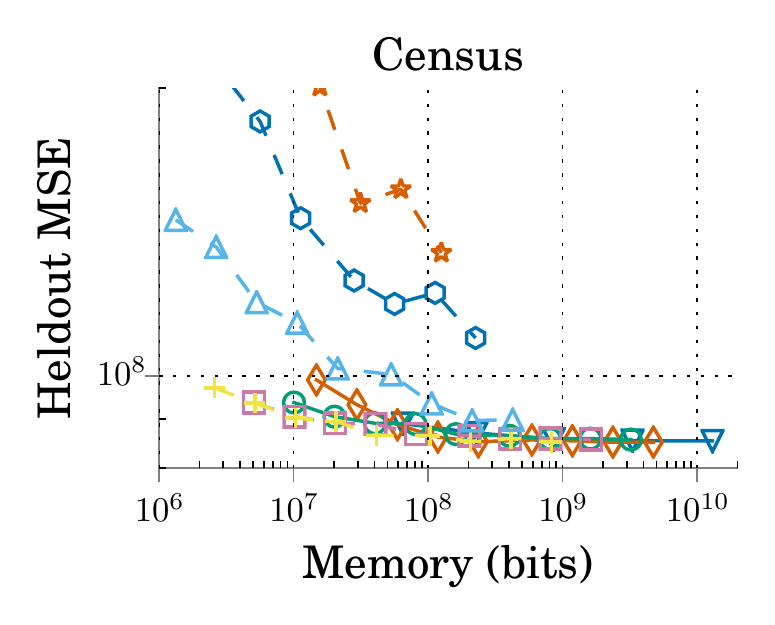} \\ [-0.5em]
		\includegraphics[height=0.275\linewidth]{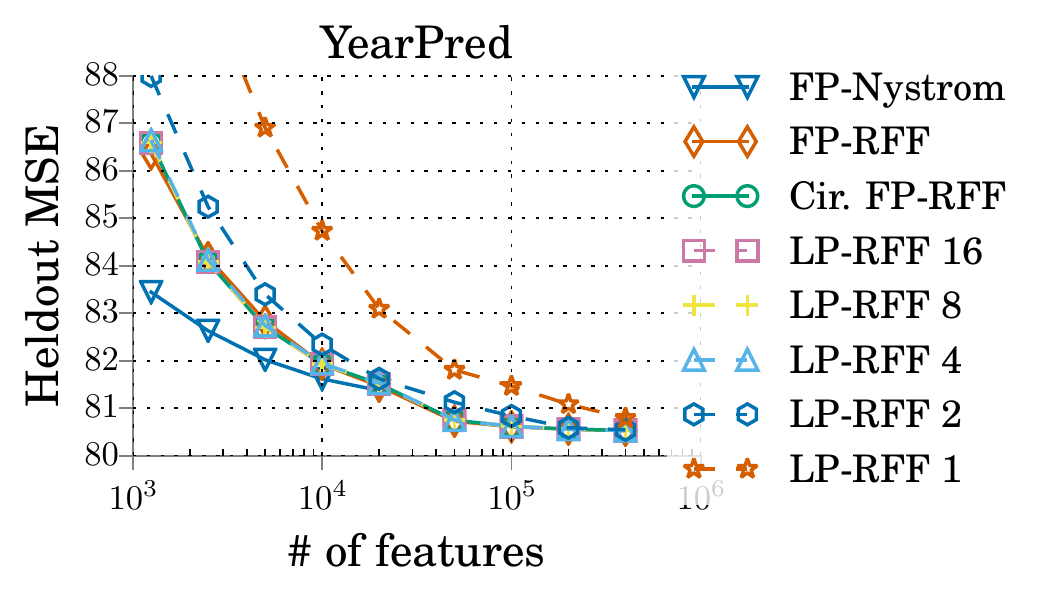} &
		\includegraphics[height=0.275\linewidth]{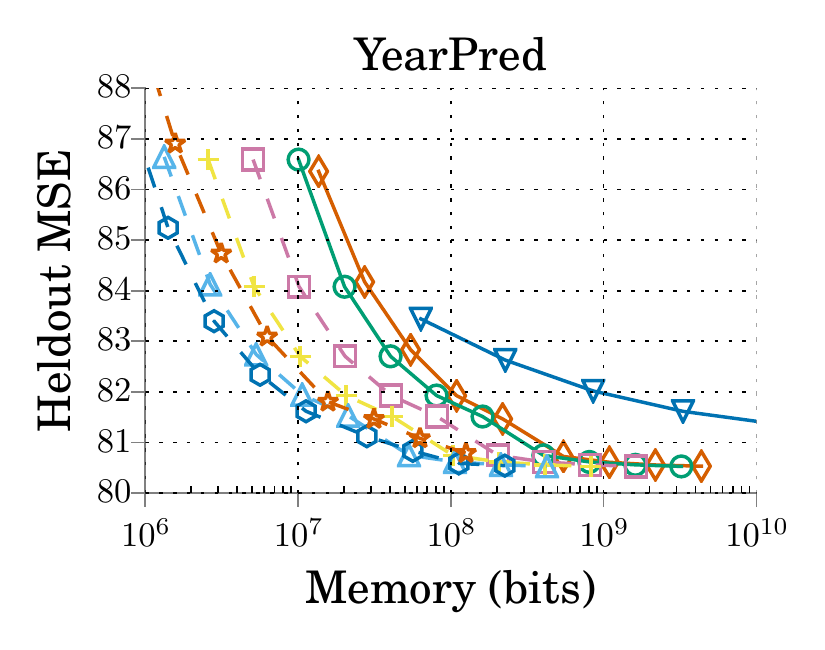} \\ [-0.5em]
		\includegraphics[height=0.275\linewidth]{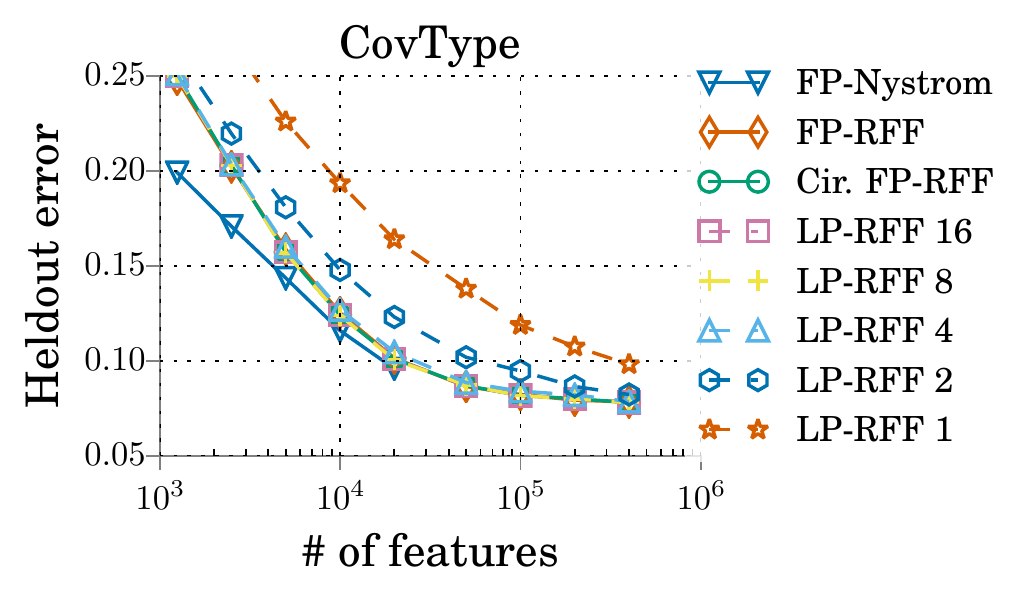} &
		\includegraphics[height=0.275\linewidth]{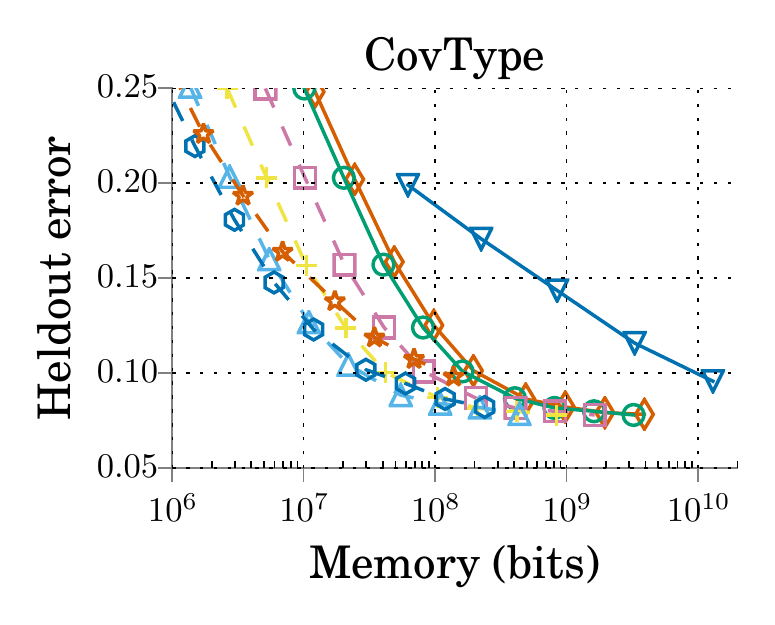} \\ [-0.5em]
		\includegraphics[height=0.275\linewidth]{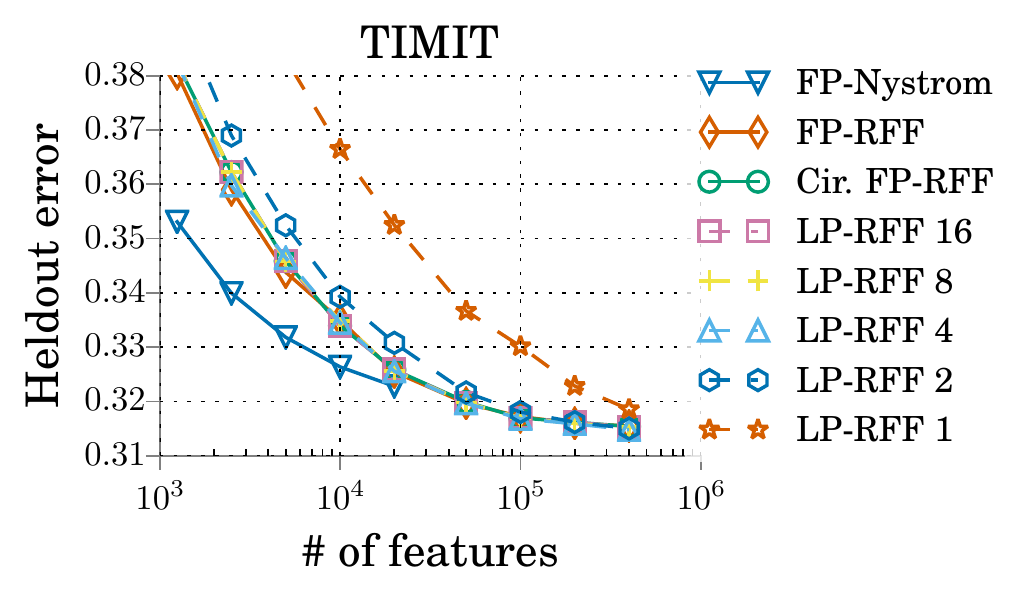} &
		\includegraphics[height=0.275\linewidth]{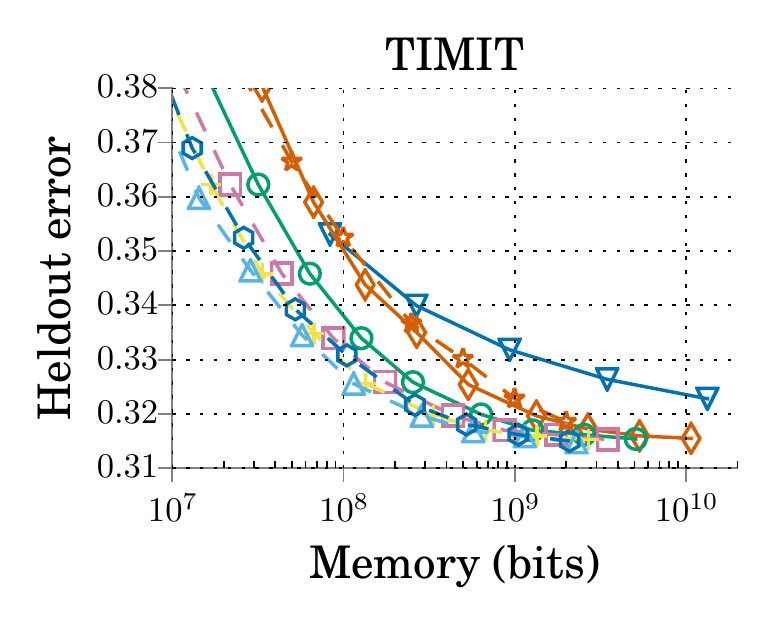} \\ [-1em]
	\end{tabular}
	\caption{Generalization performance of the kernel approximation methods (\NystromNS, FP-RFFs, circ. FP-RFFs, LP-RFFs) with respect to the number of features used, as well as with respect to memory used.
		We observe that LP-RFFs demonstrate better generalization performance than the full-precision baselines under memory constraints, with 2-8 bits typically giving the best performance.
		Importantly, the relative ranking of the methods changes depending on whether we compare the methods based on their number of features, or their memory utilization.
		For example, the \Nystrom method shows better generalization performance than the RFF-based approaches with the same number of features.
		However, the \Nystrom method often performs significantly worse than the RFF methods under fixed memory budgets.
		We plot results averaged over three random seeds.
	}
	\label{fig:all_line_generalization_col_app}
\end{figure}

\subsection{Empirical Evaluation of LP-RFFs (Section~\ref{subsec:full_run})}
\label{app:lprff_exp_details}
To empirically demonstrate the generalization performance of LP-RFFs, we compare LP-RFFs to FP-RFFs, circulant FP-RFFs, and \Nystrom features for various memory budgets.
We use the same datasets as in Section~\ref{sec:nys_vs_rff}, including Census and YearPred for regression, as well as CovType and TIMIT for classification.
We use the same experimental protocol as in Section~\ref{sec:nys_vs_rff} (details in Appendix~\ref{app:nys_vs_rff_details}), with the only significant additions being that we also evaluate the performance of circulant FP-RFFs, and LP-RFFs for precisions $b \in \{1,2,4,8,16\}$.
As noted in the main text, all our LP-RFF experiments are done in \textit{simulation}; 
in particular, we represent each low-precision feature as a 64-bit floating point number, whose value is one of the $2^b$ values representable in $b$ bits.
For our full-precision experiments we also use 64-bit floats, but we report the memory utilization of these experiments as if we used 32 bits, to avoid inflating the relative gains of LP-RFFs over the full-precision approaches.
We randomly sample the quantization noise for each mini-batch independently each epoch.

In Section~\ref{subsec:full_run} (Figure~\ref{fig:generalization_col_full}), we demonstrated the generalization performance of LP-RFFs on the TIMIT, YearPred, and CovType datasets, using 4 bits per feature.
In Figure~\ref{fig:all_line_generalization_col_app}, we additionally include results on the Census dataset, and include results for a larger set of precisions ($b\in\{1,2,4,8,16\}$).
We also include plots of heldout performance as a function of the number of features used.
We observe that LP-RFFs, using 2-8 bits, systematically outperform the full-precision baselines under different memory budgets.

We run on four additional classification and regression datasets (Forest, Cod-RNA, Adult, CPU) to compare the empirical performance of LP-RFFs to full-precision RFFs, circulant RFFs and \NystromNS.
We present the results in Figure~\ref{fig:add_datasets}, and observe that LP-RFFs can achieve competitive generalization performance to the full-precision baselines with lower training memory budgets.
With these 4 additional datasets, our empirical evaluation of LP-RFFs now covers all the datasets investigated in~\cite{nysvsrff12}.
We include details about these datasets and the hyperparameters we used in Tables~\ref{tab:add_dataset_details} and \ref{tab:add_kernel_hyper}.

\begin{table}
	\caption{Dataset details for the additional datasets from Section~\ref{app:lprff_exp_details}.  For classification tasks, we write the number
		of classes in parentheses in the ``Task'' column.}
	\begin{center}
		\begin{tabular}{llllll} 
			\toprule
			\textbf{Dataset}  & \textbf{Task} & \textbf{Train} & \textbf{Heldout} & \textbf{Test} & \textbf{\# Features} \\ 
			\midrule
			Forest  & Class. (2) & 470k  & 52k  & 58k   & 54  \\ 
			Cod-RNA & Class. (2) & 54k   & 6k   & 272k  & 8   \\ 
			Adult   & Class. (2) & 29k   & 3k   & 16k   & 123 \\
			CPU     & Reg.       & 6k    & 0.7k & 0.8k  & 21  \\ 
			\bottomrule
		\end{tabular}
	\end{center}
	\label{tab:add_dataset_details}
\end{table}

\begin{table}
	\caption{The Gaussian kernel bandwidths used, and the search grid for initial learning rate on the Forest, Cod-RNA, Adult, and CPU datasets. Optimal learning rate in bold.}
	\begin{center}
		\begin{tabular}{lll}
			\toprule
			Dataset & $1/2\sigma^2$ & Initial learning rate grid \\
			\midrule
			Forest & 0.5 & {5.0, 10.0, \textbf{50.0}, 100.0, 500.0} \\
			Cod-RNA & 0.4 & {10.0, 50.0, \textbf{100.0}, 500.0, 1000.0} \\
			Adult & 0.1 & {5.0, \textbf{10.0}, 50.0, 100.0, 500.0, 1000.0}  \\
			CPU & 0.03 & {0.05, 0.1, \textbf{0.5}, 1.0, 5.0} \\
			\bottomrule
		\end{tabular}
	\end{center}
	\label{tab:add_kernel_hyper}
\end{table}

\begin{figure}
	\centering
	\begin{tabular}{c c}
		\includegraphics[width=0.35\linewidth]{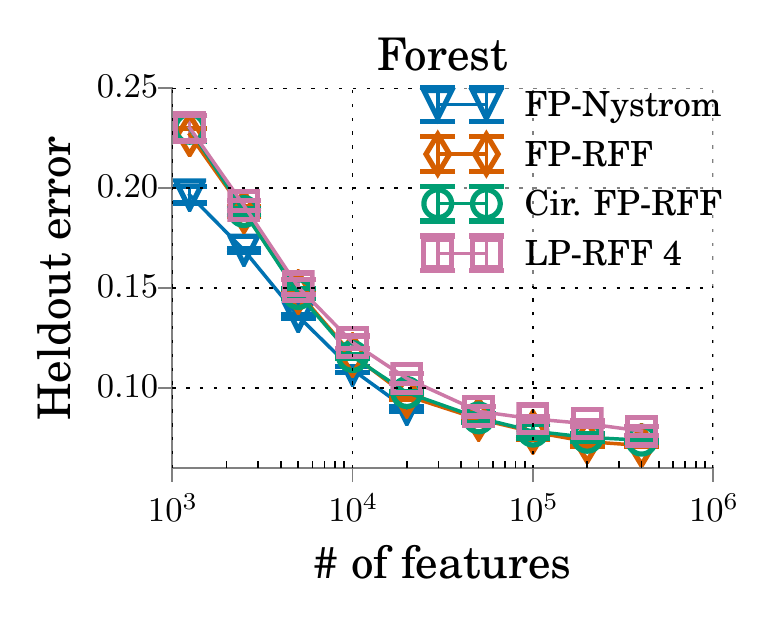} &
		\includegraphics[width=0.35\linewidth]{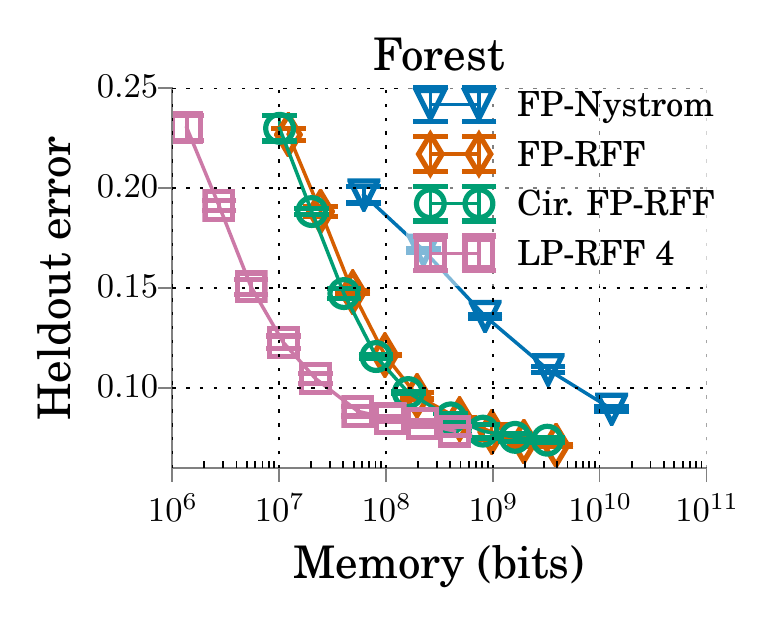} \\
		\includegraphics[width=0.35\linewidth]{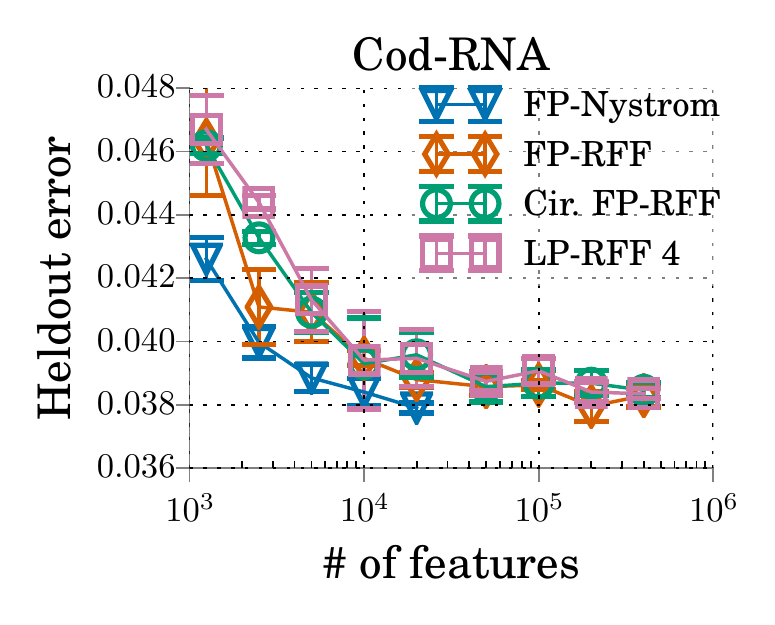} &
		\includegraphics[width=0.35\linewidth]{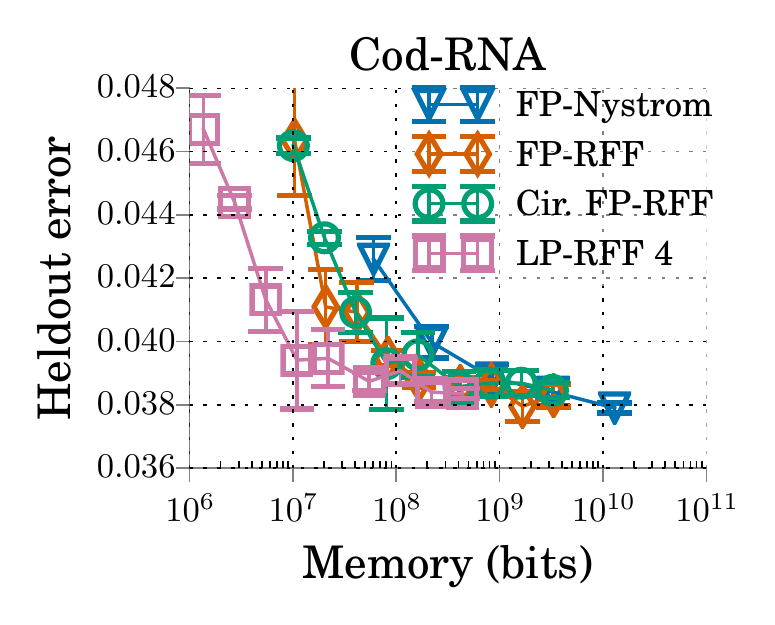} \\
		\includegraphics[width=0.35\linewidth]{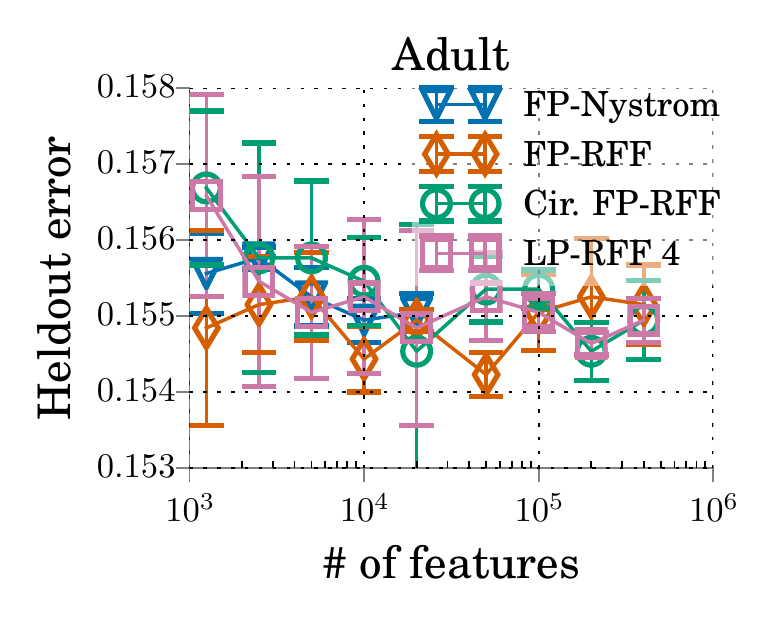} &
		\includegraphics[width=0.35\linewidth]{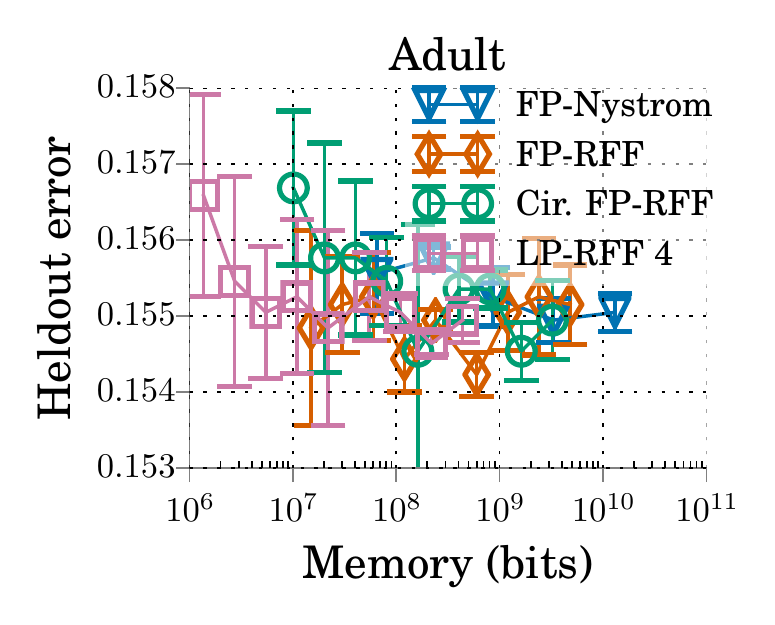} \\
		\includegraphics[width=0.35\linewidth]{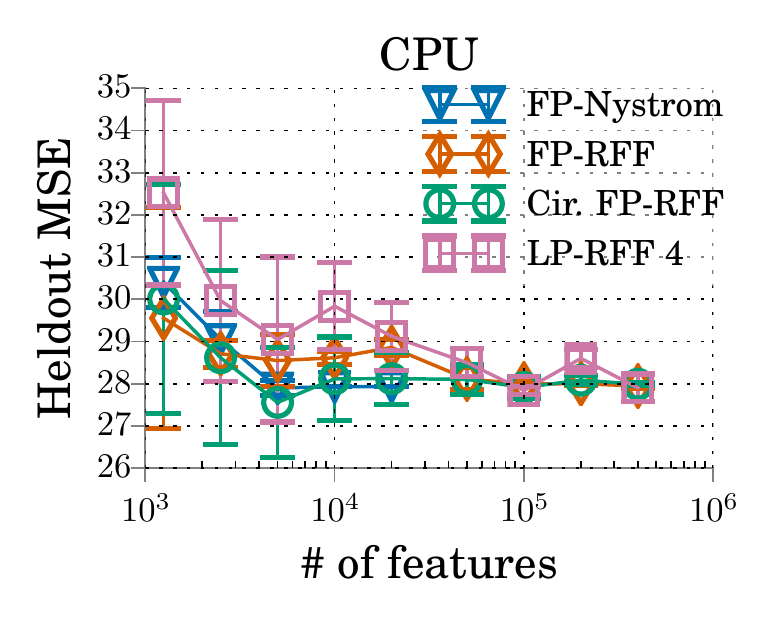} &
		\includegraphics[width=0.35\linewidth]{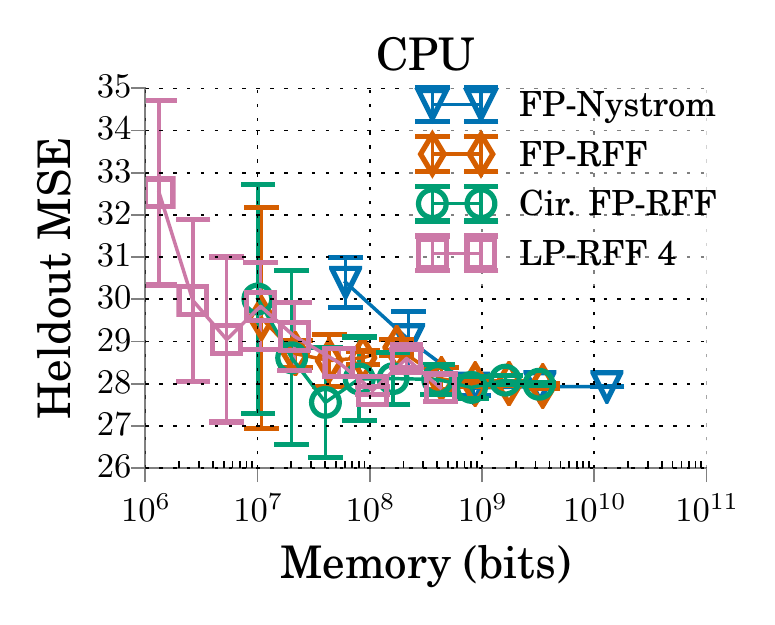} \\
	\end{tabular}
	\caption{Comparison of the performance of LP-RFFs and the full-precision baselines on additional datasets, with respect to the number of features used, as well as with respect to memory used. We observe that LP-RFFs can achieve performance competitive with the full-precision baseline methods, with significant memory savings.
	We plot the average performance across three random seeds, with error bars indicating standard deviations.
	}
	\label{fig:add_datasets}
\end{figure}

\subsection{Generalization Performance vs. $(\Delta_1, \Delta_2)$ (Section~\ref{subsec:gen_perf_and_delta})}
\label{subsec:gen_perf_and_delta_app}
For our experiments in Section~\ref{subsec:gen_perf_and_delta}, we use the same protocol and hyperparameters (learning rate, $\lambda$) as for the $(\Delta_1,\Delta_2)$ experiments in Section~\ref{subsec:nys_vs_rff_revisited}.
However, we additionally run experiments with LP-RFFs for precisions $b \in \{1,2,4,8,16\}$.
We use five random seeds for each experimental setting, and plot the average results, with error bars indicating standard deviations.

In Figure~\ref{fig:delta_max_perf_app}, we plot an extended version of the right plots in Figure~\ref{fig:gen_delta_correlation}.
We include results for all precisions, and for both Census and CovType.
We observe that on Census, $1/(1-\Delta_1)$ does not align well with performance (Spearman rank correlation coefficient $\rho=0.403$), because the low-precision features ($b=1$ or $b=2$) perform significantly worse than the full-precision features of the same dimensions.
In this case, when we consider the impact of $\Delta_2$ by taking $\max\big(1/(1-\Delta_1), \Delta_2\big)$, we see that performance aligns much better ($\rho=0.959$).

On CovType, on the other hand, the impact on generalization performance from using low-precision is much less pronounced, so $1/(1-\Delta_1)$ and $\max\big(1/(1-\Delta_1), \Delta_2\big)$ both align well with performance ($\rho = 0.942$).
Furthermore, in the case of CovType, $\Delta_2$ is generally smaller than $1/(1-\Delta_1)$, so taking the $\max$ does not change the plot significantly.

To compute the Spearman rank correlation coefficients $\rho$ for these plots, we take the union of all the experiments which are a part of the plot.
In particular, we include FP-RFFs, circulant FP-RFFs, FP-\Nystrom, and LP-RFFs with precisions $b\in\{1,2,4,8,16\}$.
Although we plot the average performance across five random seeds in the plot, to compute $\rho$ we treat each experiment independently.

\begin{figure}
	\centering
	\begin{tabular}{c c} 
		\includegraphics[height=0.25\linewidth]{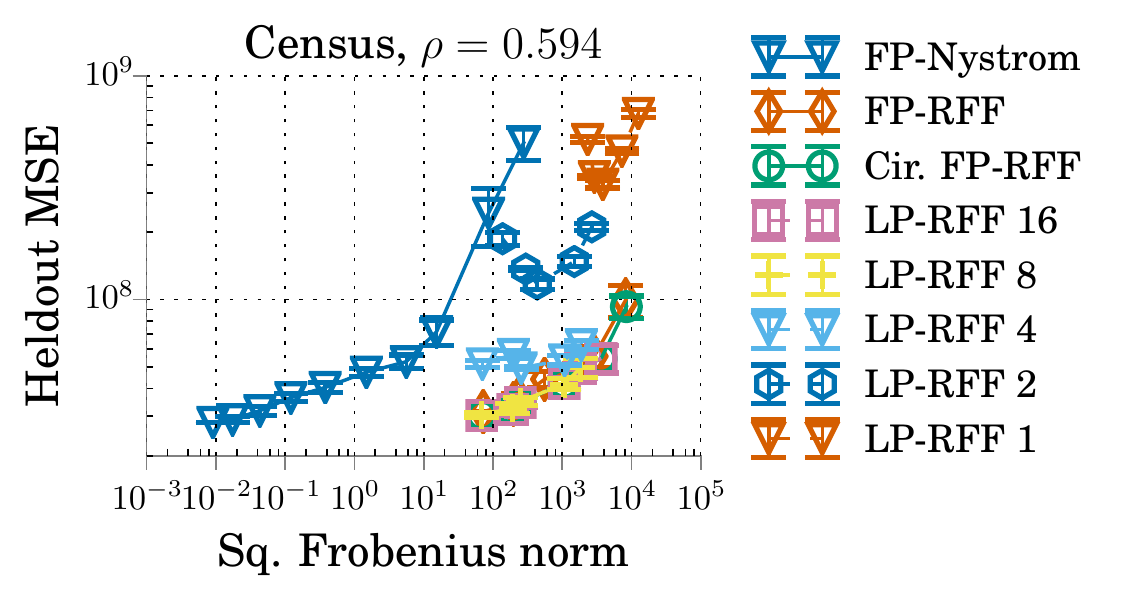} &
		\includegraphics[height=0.25\linewidth]{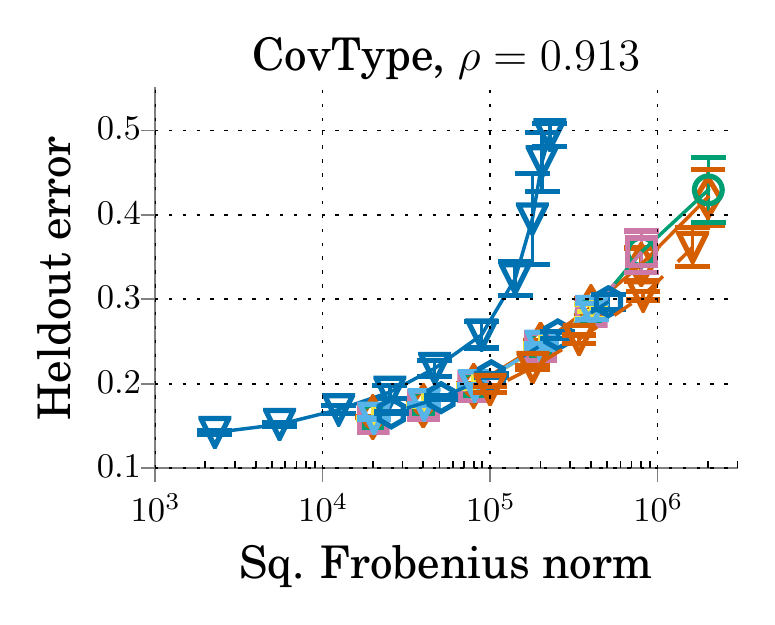} \\ [-0.5em]
		\includegraphics[height=0.25\linewidth]{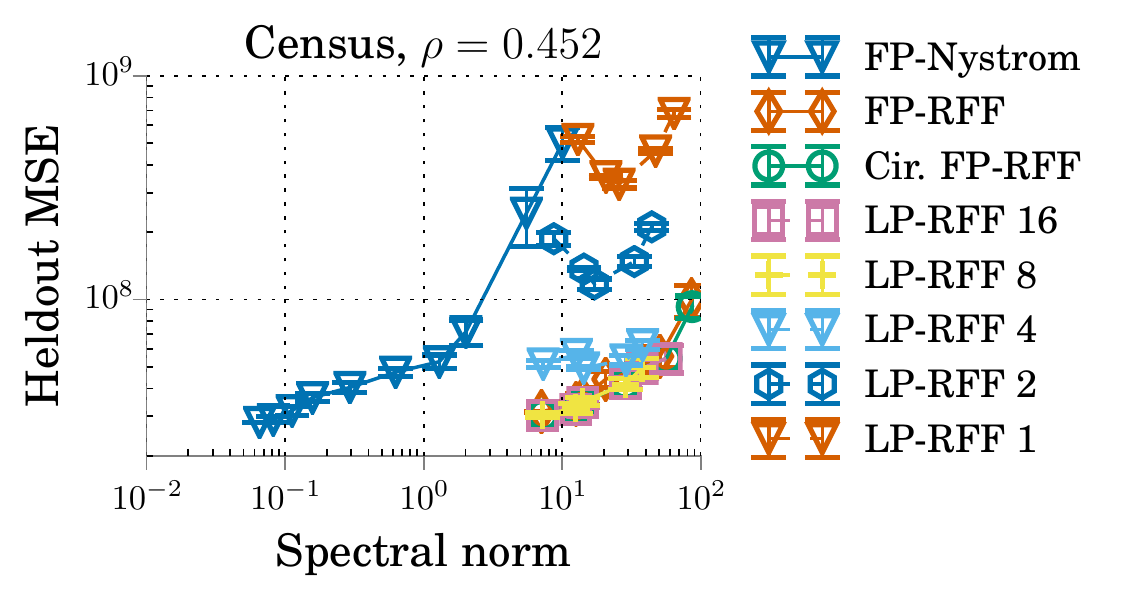} & 
		\includegraphics[height=0.25\linewidth]{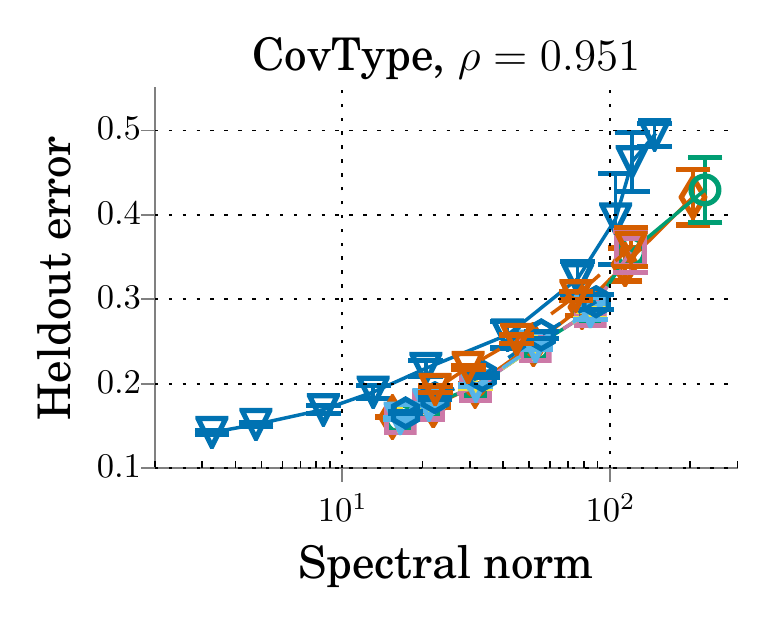}\\ [-0.5em]
		\includegraphics[height=0.25\linewidth]{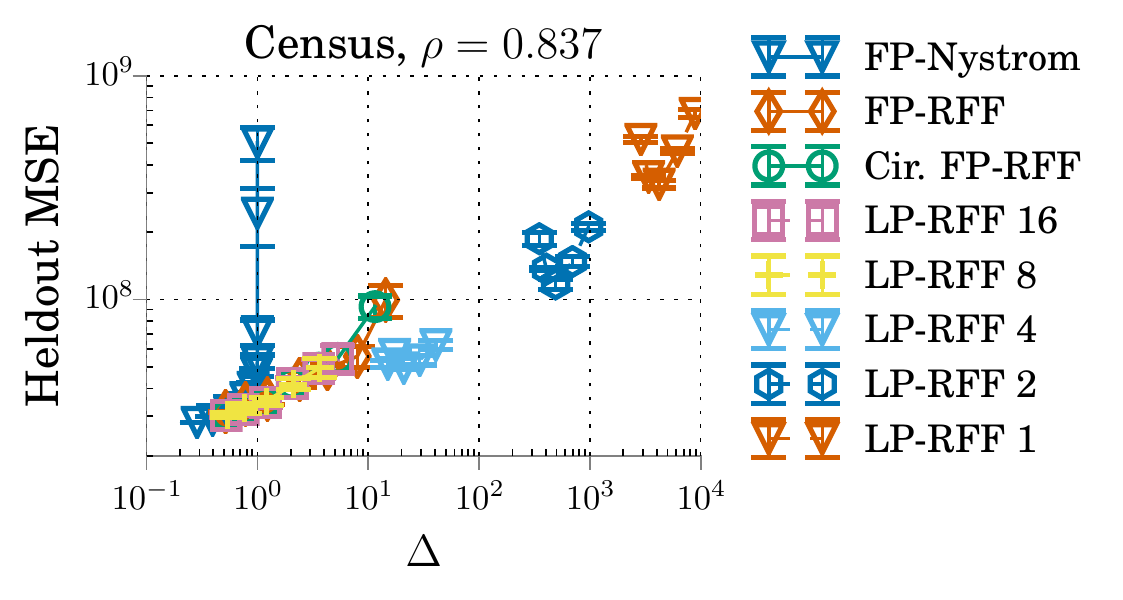} & 
		\includegraphics[height=0.25\linewidth]{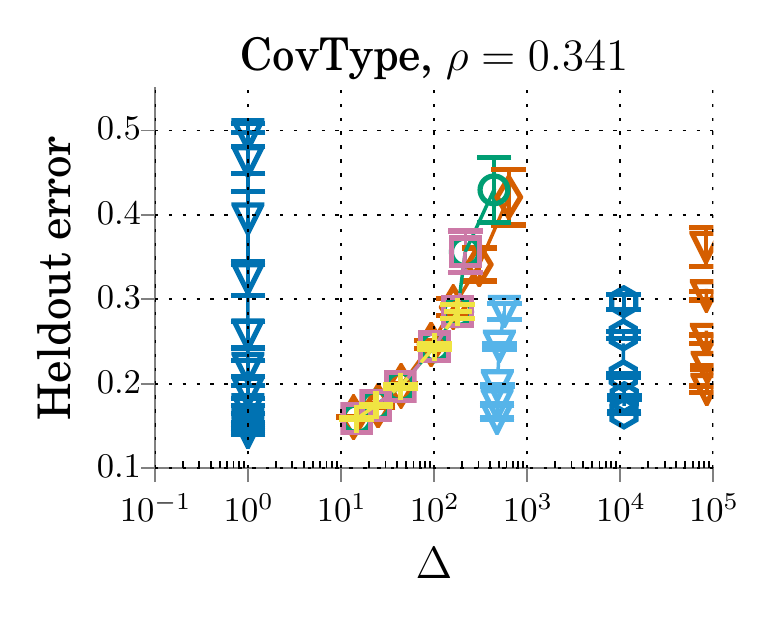}\\ [-0.5em]
		\includegraphics[height=0.25\linewidth]{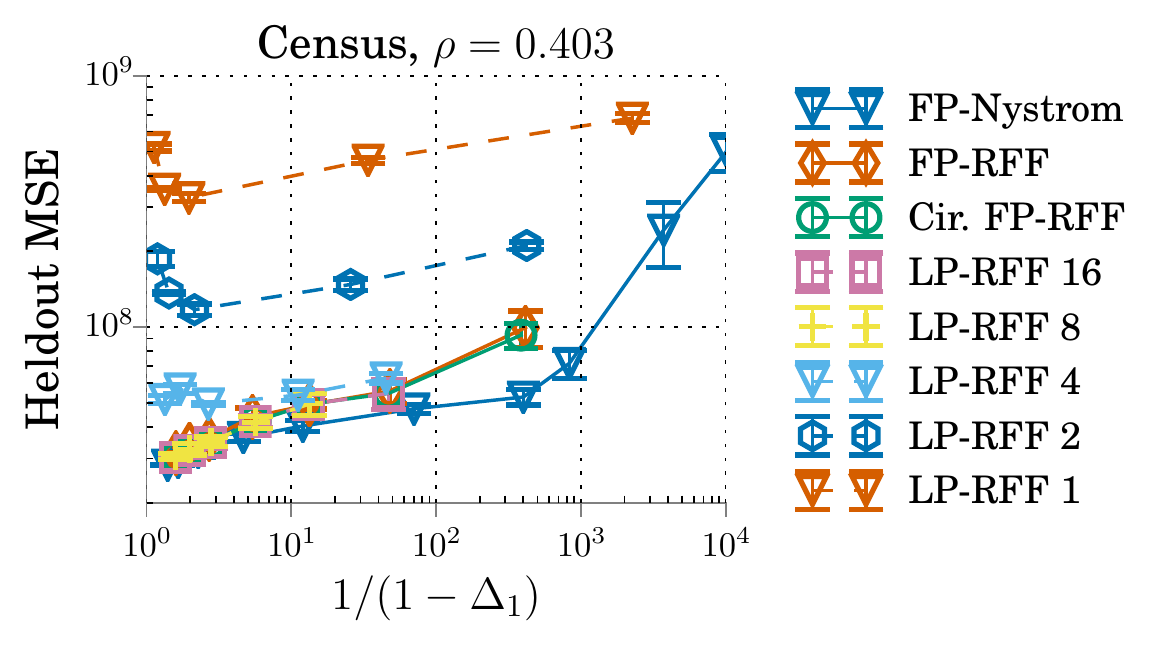} &
		\includegraphics[height=0.25\linewidth]{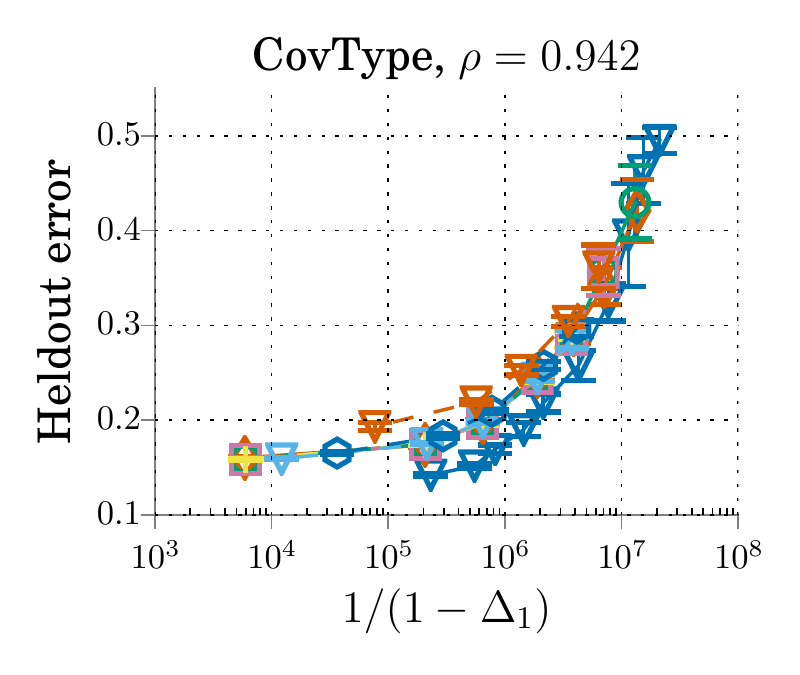} \\ [-0.5em]
		\includegraphics[height=0.25\linewidth]{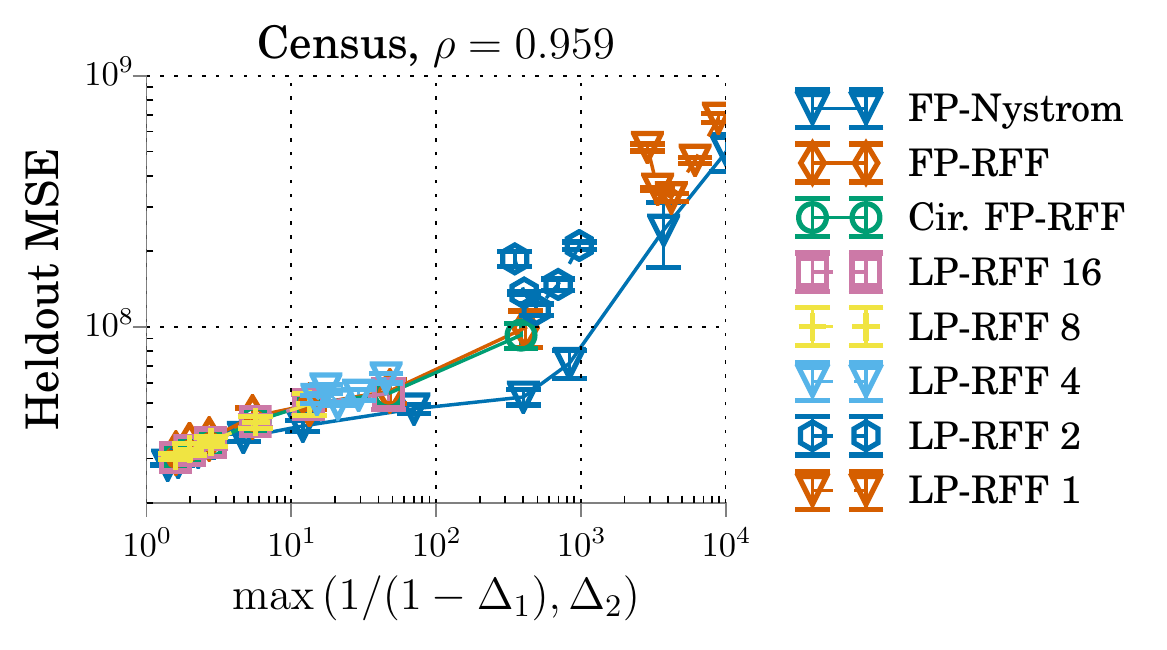}		&
		\includegraphics[height=0.25\linewidth]{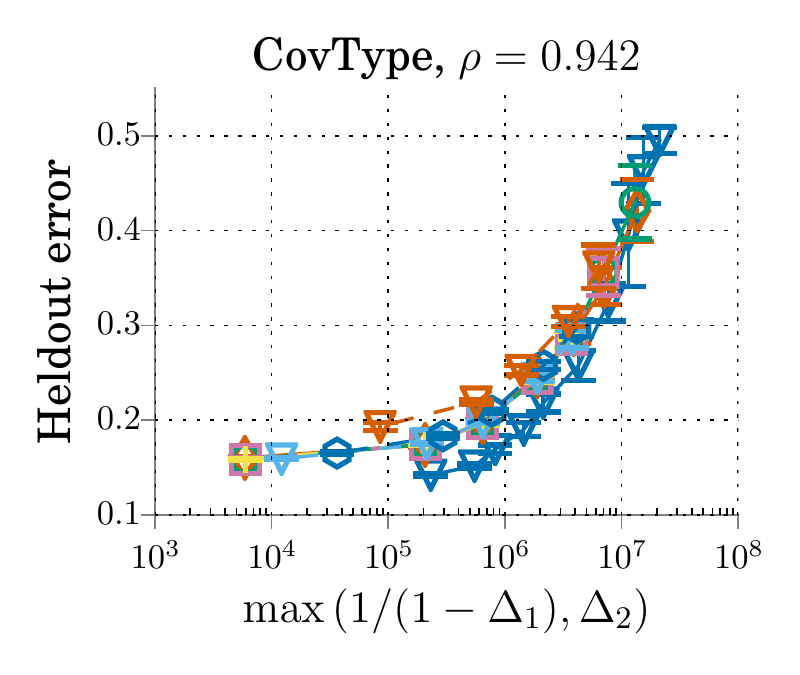} \\ [-0.5em]
	\end{tabular}
\caption{Generalization performance vs. different kernel approximation metrics on Census and CovType.
	The metric $\max\left(1/(1-\Delta_1),\Delta_2 \right)$ is able to incorporate the influence of both $\Delta_1$ and $\Delta_2$ on performance for LP-RFFs, aligning well with generalization performance on both Census (Spearman rank correlation coefficient $\rho=0.959$) and CovType ($\rho=0.942$). $1/(1-\Delta_1)$, on the other hand, fails to align well on Census ($\rho=0.403$), but does align on CovType ($\rho=0.942$).
	Note that although we plot the average performance across five random seeds for each experimental setting (error bars indicate standard deviation), when we compute the $\rho$ values we treat each experimental result independently (without averaging).
	}
	\label{fig:delta_max_perf_app}
\end{figure}

\subsection{Other Experimental Results}
\label{app:other_results}
\begin{figure}
	\centering
	\includegraphics[width=0.7\linewidth]{./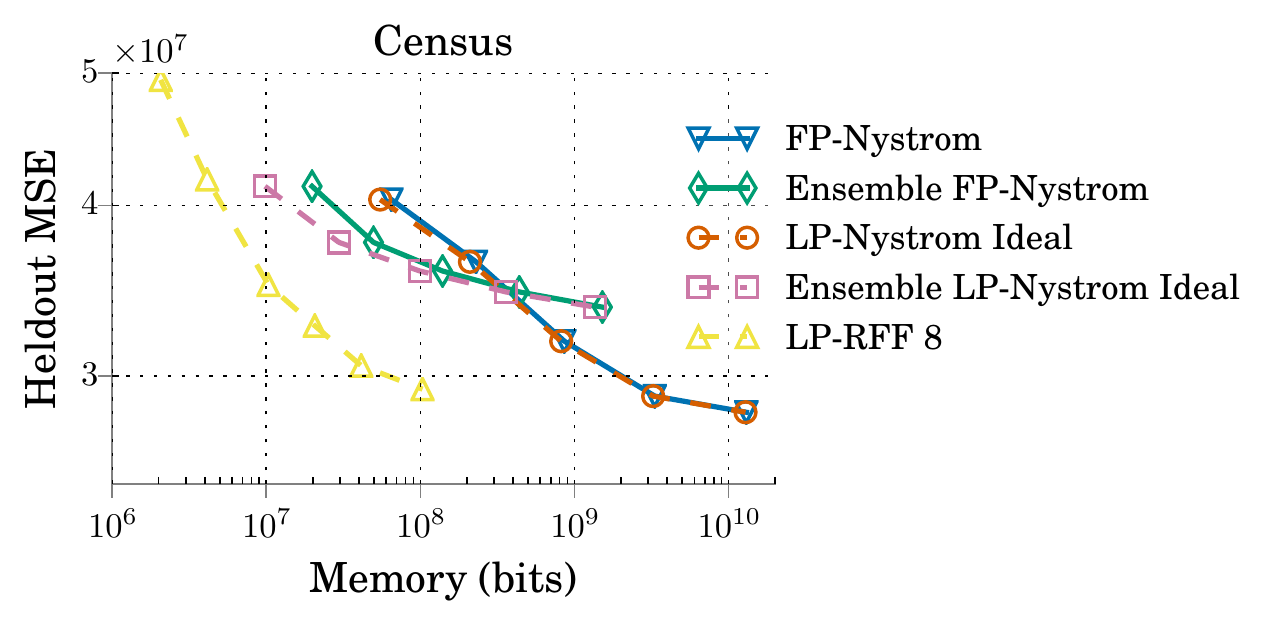}
	\captionof{figure}{We plot the heldout performance (MSE) for the full-precision \Nystrom method, the ensemble \Nystrom method, and 8-bit LP-RFFs.  We also show the best possible performance for the \Nystrom methods, assuming only the kernel approximation features are quantized (denoted ``ideal''); we compute this by plotting the full-precision results but without counting the memory occupied by the features.  The LP-RFF method significantly outperforms the ``ideal'' \Nystrom methods as a function of memory.}
	\label{fig:lpnystrom_ideal}
\end{figure}
\begin{figure}
	\centering
	\begin{tabular}{c c}
		\includegraphics[width=0.35\linewidth]{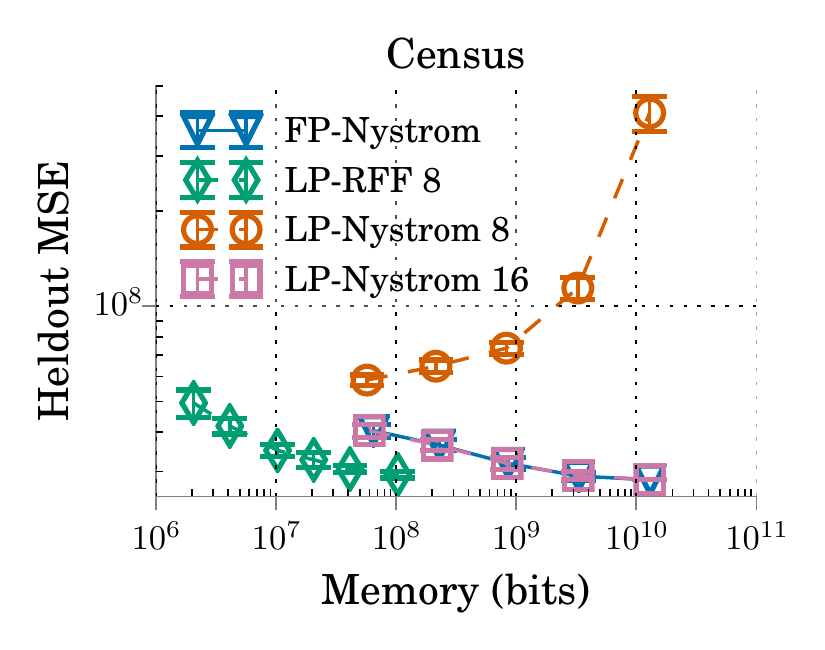} &
		\includegraphics[width=0.35\linewidth]{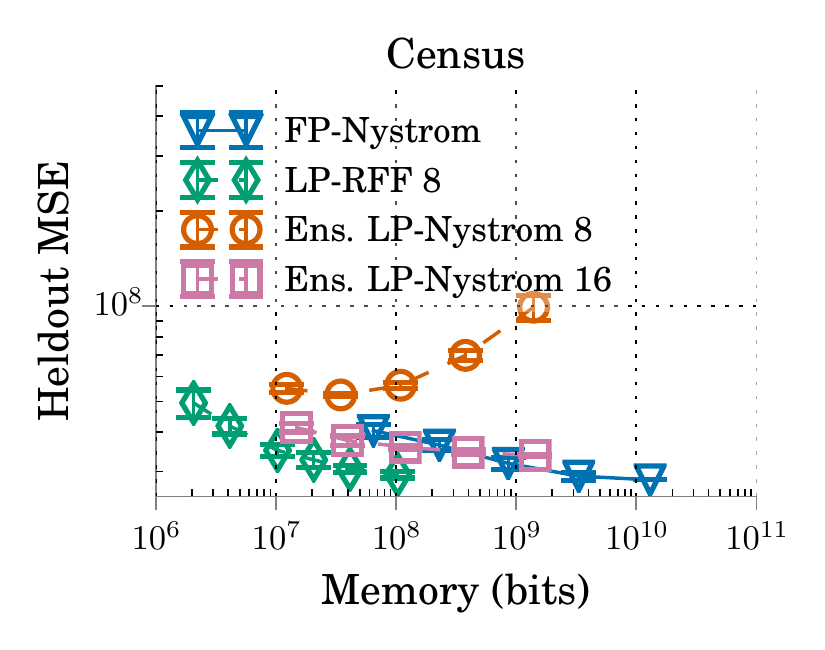} \\
		(a) LP-\Nystrom & (b) Ensemble LP-\Nystrom
	\end{tabular}
	\caption{The generalization performance of low-precision \Nystrom and low-precision ensemble \NystromNS, using a uniform quantization scheme.}
	\label{fig:lpnystrom_unif_quant}
\end{figure}

\subsubsection{Low-Precision \NystromNS}

We encounter two main obstacles to attaining strong performance with the \Nystrom method under a memory budget:
(1) For the standard \Nystrom method, because of the large projection matrix ($32m^2$ space), there are inherent limits to the memory savings attainable by only quantizing the features, \textit{regardless of the quantization scheme used}.
(2) While the ensemble \Nystrom method \citep{ensemble09} is a well-known method which can dramatically reduce the space occupied by the the \Nystrom projection matrix, it does not attain meaningful gains in performance under a memory budget.
To demonstrate the first issue empirically, we plot in Figure~\ref{fig:lpnystrom_ideal} the performance of the full-precision \Nystrom methods \textit{without} counting the memory from the feature mini-batches (denoted ``ideal'').
This is the best possible performance under any quantization scheme for these \Nystrom methods, assuming only the features are quantized.
LP-RFFs still outperform these ``ideal'' methods for fixed memory budgets.
To demonstrate the second issue, in Figure~\ref{fig:lpnystrom_ideal} we also plot the generalization performance of the ensemble method vs.\ the standard \Nystrom method, as a function of memory.
We can see that the ensemble method does not attain meaningful gains over the standard \Nystrom method, as a function of memory.

Lastly, we mention that quantizing \Nystrom features is more challenging due to their larger dynamic range.
For \NystromNS, all we know is that each feature is between $-1$ and $1$, whereas for RFFs, each feature is between $-\sqrt{2/m}$ and $\sqrt{2/m}$.
In Figure~\ref{fig:lpnystrom_unif_quant} we plot our results quantizing \Nystrom features with the following simple scheme:
for each feature, we find the maximum and minimum value on the training set, and then uniformly quantize this interval using $b$ bits.
We observe that performance degrades significantly with less than or equal to 8 bits compared to full-precision \NystromNS.
Although using the ensemble method reduces the dynamic range by a factor of $\sqrt{r}$ with $r$ blocks (we use $r=10$, a common setting in \citep{kumar12}),
and also saves space on the projection matrix, these strengths do not result in significantly better performance for fixed memory budgets.

\subsubsection{Low-Precision Training for LP-RFFs}
\label{sec:halp}
As discussed in Section~\ref{subsec:memory_utils}, there are a number of ways to reduce the memory occupied by the model parameters, including 
(1) using a low-rank decomposition of the parameter matrix \citep{sainath2013low}, 
(2) using structured matrices \citep{structured15}, and 
(3) using low-precision \citep{halp18}.
Though these methods are orthogonal to our LP-RFF method, we now present results using a low-precision parameterization of the model, and show that we can attain similar performance to full-precision training.
We use a training algorithm called LM-HALP (linear model high-accuracy low-precision) \citep{halp18}.
By parameterizing the model in low precision, this algorithm eliminates the need of casting the LP-RFFs back into full precision in order to multiply them with the model parameters.
This approach also allows for these matrix multiplications to be implemented using fast low-precision matrix operations, and reduces the memory occupied by the model during training.

LM-HALP is based on the stochastic variance-reduced gradient (SVRG) algorithm.
The model is parameterized using a low-precision fixed-point representation during training.
In LM-HALP, all of the matrix multiplications involved in the stochastic model updates are done using low-precision fixed-point operations;
however, the periodic computation of the full gradient is calculated in full precision (this is embarrassingly parallelizable).
Importantly, even though most of training is done in low precision, the final model returned by this training algorithm is a full-precision model.
We can further simplify the LM-HALP algorithm by replacing the SVRG updates with SGD updates, thus eliminating the need for calculating and storing the full gradient.
In Figure~\ref{fig:halp} we present our results using LM-HALP on TIMIT, our largest and most challenging dataset;
we use 8-bit LM-HALP (SVRG and SGD) on top of 8-bit LP-RFFs, and compare to full-precision SGD training. For LM-HALP based training, we perform the bit centering and rescaling operation after each training epoch. Under this setting, we show that when the number of features is at least $\num[group-separator={,}]{10000}$, the performance of both versions of HALP closely matches that of full-precision training.

\begin{figure}
	\centering
		\includegraphics[width=0.4\linewidth]{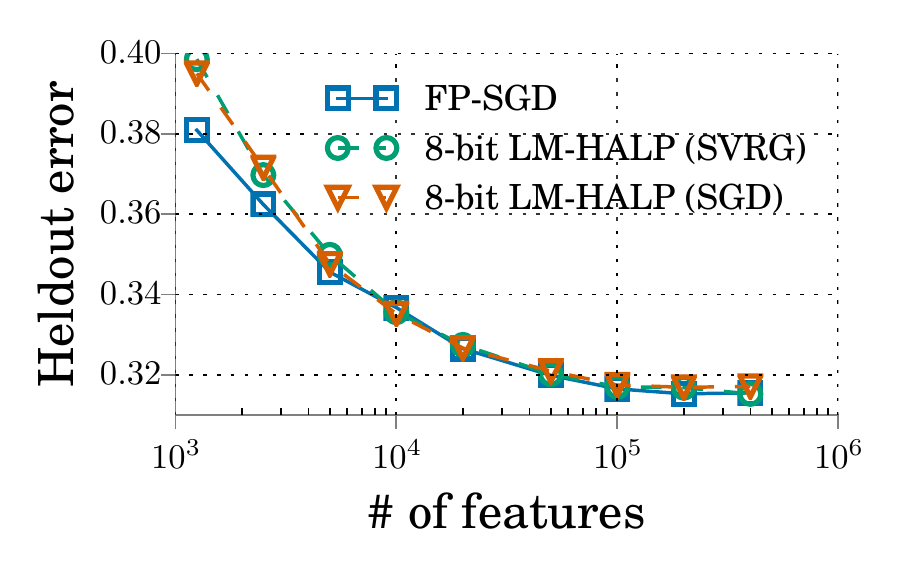}
		\vspace{-1em}
\captionof{figure}{Low-precision vs. full-precision training algorithms on TIMIT using 8-bit LP-RFFs.}	
\label{fig:halp}
\end{figure}

\subsubsection{Double Sampling}
We perform some initial experiments using the double sampling method of \citet{zipml17}.
In particular, we use a different random quantization of the LP-RFFs on the ``forward pass'' of our algorithm than in the ``backward pass.''
In our initial experiments with double sampling, as shown in Figure~\ref{fig:double_sampling}, we did not observe noticeable improvements in performance.
These experiments were on the YearPred dataset with the Gaussian kernel.
We leave a more extended investigation of these gradient bias reduction methods for future work.

\begin{figure}
	\centering
	\includegraphics[width=0.6\linewidth]{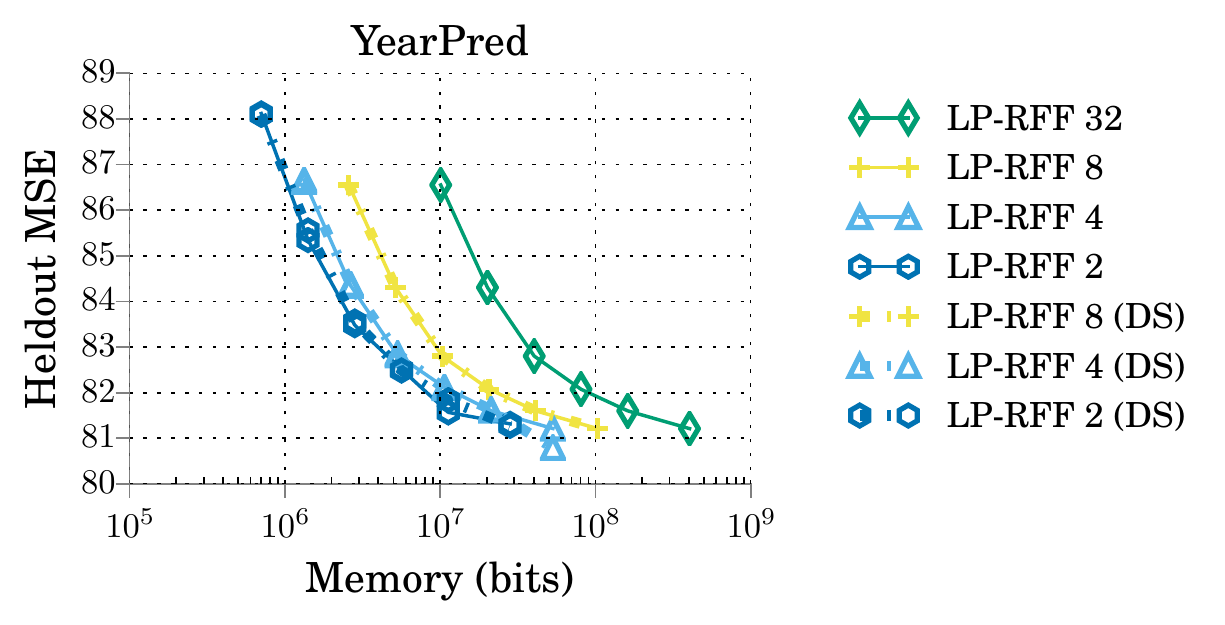}
	\captionof{figure}{Comparison of LP-RFFs with and without double sampling on the YearPred dataset.}
	\label{fig:double_sampling}
\end{figure}

\section{\uppercase{Extended related work}}
\label{sec:relwork_full}
\ifcamera
\paragraph{Generalization Performance of Kernel Approximation Methods}
From a theoretical perspective, there has been a lot of work analyzing the generalization performance of kernel approximation methods \citep{rahimi08kitchen,cortes10,sutherland15,rudi17,avron17,li18}.
The work most relevant to ours is that of \citet{avron17}, which defines $\Delta$-spectral approximation and bounds the generalization performance of kernel approximation methods in terms of $\Delta$.
This approach differs from works which evaluate kernel approximation methods in terms of the Frobenius or spectral norms of their kernel approximation matrices \citep{cortes10,gittens16,qmc,sutherland15,yu16,dao17}.
Our work shows the promise of \citeauthor{avron17}'s approach, and builds upon it.

\paragraph{Large-Scale Kernel Experiments}
On the topic of scaling kernel methods to large datasets, there have been a few notable recent papers.
\citet{block16} propose a distributed block coordinate descent method for solving large-scale least squares problems using the \Nystrom method or RFFs.
The recent work of \citet{may2017} uses a single GPU to train large RFF models on speech recognition datasets, %
showing comparable performance to fully-connected deep neural networks.
That work was limited by the number of features that could fit on a single GPU,
and thus our proposed method could help scale these results.

\else
Our work is not the first to attempt to minimize the memory footprint of kernel approximation methods.
For RFFs, there has been work on using structured random projections \citep{fastfood,yu15,sphereRKS}, and feature selection \citep{sparseRKS, may2016} to reduce memory utilization.
Our work is orthogonal to these, because LP-RFFs can be used in conjunction with both.
For \NystromNS, there has been extensive work on improving the choice of landmark points, and reducing the memory footprint in other ways \citep{kmeans08,ensemble09,fastpred14,meka14,musco17}.
In our work, we study the effect of \textit{quantization} on generalization performance for RFFs under memory constraints.
We focus on RFFs because the feature generation component (landmark points and projection matrix) for \Nystrom is very memory-intensive, and thus limits the relative memory savings attainable via feature quantization.
For our initial experiments quantizing \Nystrom features, see Appendix~\ref{app:other_results}.

From a theoretical perspective, there has been a lot of work analyzing the generalization performance of kernel approximation methods \citep{rahimi08kitchen,cortes10,sutherland15,rudi17,avron17}.
The work most relevant to ours is that of \citet{avron17}, which defines $\Delta$-spectral approximation and bounds the generalization performance of kernel approximation methods in terms of $\Delta$.
This approach differs from works which evaluate kernel approximation methods in terms of the Frobenius or spectral norms of their kernel approximation matrices \citep{cortes10,gittens16,qmc,sutherland15,yu16,dao17}.
Our work shows the promise of \citeauthor{avron17}'s approach, and builds upon it.

On the topic of scaling kernel methods to large datasets, there have been a few notable recent papers.
\citet{block16} propose a distributed block coordinate descent method for solving large-scale least squares problems using the \Nystrom method or RFFs.
The recent work of \citet{may2017} uses a single GPU to train large RFF models on speech recognition datasets, %
showing comparable performance to fully-connected deep neural networks.
That work was limited by the number of features that could fit on a single GPU,
and thus our proposed method could help scale these results.

There has been much recent interest in the topic of low-precision for accelerating training and/or inference of machine learning models, as well as for model compression \citep{gupta15,hogwild15,hubara16,halp18,desa17,han15}. 
There have also been many advances in hardware support for low-precision \citep{tpu17,brainwave17}.
These improvements in hardware could dramatically speed up the training time of our method.

This work is inspired by the experiments comparing \Nystrom and RFFs under a memory budget in the PhD dissertation of one of the first authors \citep{maythesis}.
The current work provides a principled understanding of these prior results by showing that the relative performance of these methods can largely be explained in terms of our $(\Delta_1,\Delta_2)$ measure of kernel approximation error.
Based on this understanding, we propose LP-RFFs as a way of attaining improved generalization performance under a memory budget.
\fi

\end{document}